\lstdefinestyle{pythonstyle}{
  language=Python,
  basicstyle=\ttfamily\footnotesize,
  numbers=left,
  numberstyle=\ttfamily\scriptsize\color{gray!70},
  stepnumber=1,
  numbersep=10pt,          %
  xleftmargin=2.2em,       %
  framexleftmargin=2.2em,  %
  showstringspaces=false,
  keepspaces=true,
  breaklines=true,
  tabsize=4,
  columns=fullflexible,
  keywordstyle=\color{purple!80!black}\bfseries,
  commentstyle=\color{gray!60}\itshape,
  stringstyle=\color{green!50!black}
}
\lstdefinestyle{convstyle}{
    backgroundcolor=\color{lightgray},
    basicstyle=\ttfamily\scriptsize,
    breakatwhitespace=false,
    breaklines=true,
    captionpos=b,
    keepspaces=true,
    numbersep=5pt,
    showspaces=false,
    showstringspaces=false,
    showtabs=false,
    tabsize=2,
    showstringspaces=false,
    moredelim=**[is][\color{black}]{user}{user},
    moredelim=**[is][\color{blue}]{assistant}{assistant},
}
\lstdefinestyle{snippet}{
basicstyle=\ttfamily\footnotesize,
columns=fullflexible,
keepspaces=true,
breaklines=true,
breakatwhitespace=false,
tabsize=2,
showstringspaces=false,
showspaces=false,
showtabs=false,
captionpos=b,
float=false,
}
\algrenewcommand\algorithmiccomment[1]{\hfill{\footnotesize\(\triangleright\)~#1}}
\algrenewcommand\algorithmicrequire{\textbf{Input:}}
\algrenewcommand\algorithmicensure{\textbf{Output:}}
\definecolor{lightgray}{gray}{0.95}
\definecolor{green}{HTML}{59BB2B}
\titlespacing*{\paragraph}{0pt}{0.25ex}{2ex}
\NewDocumentCommand{\incplt}{O{\columnwidth}m}{%
  \begin{center}
    \adjustbox{center}{\adjustbox{width=#1+10pt}{\includegraphics[width=#1]{./plots/output/#2.pdf}}}
  \end{center}
}
\newcommand{\figref}[2]{Figure~\hyperref[#1]{\ref{#1}~(#2)}}
\title{Reinforcement Learning via Self-Distillation}
\author{%
Jonas Hübotter\textsuperscript{$1$} \; Frederike Lübeck\thanks{Equal second authorship. Correspondence to \texttt{jonas.huebotter@inf.ethz.ch}.}\,\,\textsuperscript{$,1,2$} \; Lejs Behric\footnotemark[1]\,\,\textsuperscript{$,1$} \; Anton Baumann\footnotemark[1]\,\,\textsuperscript{$,1$} \\\bfseries Marco Bagatella\textsuperscript{$1,2$} \; Daniel Marta\textsuperscript{$1$} \; Ido Hakimi\textsuperscript{$1$} \; Idan Shenfeld\textsuperscript{$3$} \\\bfseries Thomas Kleine Buening\textsuperscript{$1$} \; Carlos Guestrin\textsuperscript{$4$} \; Andreas Krause\textsuperscript{$1$} \\
\textsuperscript{$1$}ETH Zurich \; \textsuperscript{$2$}Max Planck Institute for Intelligent Systems \; \textsuperscript{$3$}MIT \; \textsuperscript{$4$}Stanford
\\[2ex]
\url{https://github.com/lasgroup/SDPO}
}
\crefname{appendix}{Appendix}{Appendices}
\Crefname{appendix}{Appendix}{Appendices}
\definecolor{lightgray}{gray}{0.95}
\newcounter{insight}
\titlespacing*{\paragraph}{0pt}{0.25ex}{2ex}
\definecolor{chaptercolor}{HTML}{1A254B}
\definecolor{darkblue}{HTML}{1A254B}
\definecolor{linkcolor}{HTML}{2B50AA}
\definecolor{citecolor}{HTML}{2B50AA}
\definecolor{lightlinkcolor}{HTML}{9A8F97}
\definecolor{darklinkcolor}{HTML}{1A254B}
\definecolor{light}{HTML}{F8F8F8}
\definecolor{lightblue}{HTML}{A7BED3}
\definecolor{red}{HTML}{F2545B}
\definecolor{blue}{HTML}{2b50aa}
\theoremstyle{plain}
\newtheorem{theorem}{Theorem}[section]
\newtheorem{proposition}[theorem]{Proposition}
\newtheorem{definition}[theorem]{Definition}
\crefname{proposition}{Proposition}{Propositions}
\newcommand{\yes}{\textcolor{green!50!black}{\checkmark}\ \ }
\newcommand{\no}{\textcolor{red!60!black}{$\boldsymbol{\times}$}\ \ }
\newcommand{\sref}[1]{\S\ref{#1}}
\newtcolorbox[auto counter]{takeaway}[1][]{%
  enhanced,
  breakable,
  colback=black!5,          %
  colframe=black!85,        %
  boxrule=1.25pt,
  arc=3pt,                  %
  left=2mm,right=2mm,top=2mm,bottom=1.3mm,
  before skip=10pt, after skip=10pt,
  title={Takeaway~\thetcbcounter},
  colbacktitle=black!85,    %
  coltitle=white,           %
  fonttitle=\bfseries\small,
  attach boxed title to top left={yshift=-1.2mm, xshift=2mm},
  boxed title style={
    enhanced,
    arc=3pt,
    top=0.5mm, bottom=0.5mm, left=1mm, right=1mm,
    boxrule=0pt,           %
    interior engine=empty, %
  },
  #1                        %
}
\newtcolorbox{block}[1][]{%
  enhanced,
  breakable,
  colback=black!5,          %
  colframe=black!85,        %
  boxrule=1.25pt,
  arc=3pt,                  %
  left=2mm,right=2mm,top=1mm,bottom=1mm,
  before skip=10pt, after skip=10pt,
  #1                        %
}
\newcommand*{\old}{{\mathrm{old}}}
\newcommand*{\rollout}{{\mathrm{rollout}}}
\newcommand*{\reff}{{\mathrm{ref}}}
\newcommand*{\thetaref}{{\theta_{\reff}}}
\newcommand*{\thetaold}{{\theta_{\old}}}
\NewDocumentCommand{\norm}{sm}{\IfBooleanTF{#1}{\|#2\|}{\left\| #2 \right\|}}
\NewDocumentCommand{\normF}{sm}{\IfBooleanTF{#1}{\|#2\|_{\mathrm{F}}}{\left\| #2 \right\|_{\mathrm{F}}}}
\NewDocumentCommand{\dTV}{sm}{d_{\mathrm{TV}}\IfBooleanTF{#1}{(#2)}{\left( #2 \right)}}
\DeclareMathOperator*{\argmax}{arg\,max}
\DeclareMathOperator*{\argmin}{arg\,min}
\DeclarePairedDelimiter\parentheses{(}{)}
\DeclarePairedDelimiter\brackets{[}{]}
\DeclarePairedDelimiter\braces{\{}{\}}
\newcommand{\spa}[1]{\mathcal{#1}}
\NewDocumentCommand{\irred}{som}{\ensuremath{\sigma_{\hspace{-1pt}\infty}\IfBooleanTF{#1}{^2}{}(#3\IfValueTF{#2}{;#2}{})}}
\NewDocumentCommand{\fnPr}{}{\mathbb{P}}
\RenewDocumentCommand{\Pr}{om}{\fnPr\IfValueT{#1}{_{#1}}\parentheses*{#2}}
\RenewDocumentCommand{\H}{mo}{\mathrm{H}\IfValueTF{#2}{\!\left[#1\ \middle|\ #2\right]}{\brackets*{#1}}}
\NewDocumentCommand{\Hsm}{mo}{\mathrm{H}\IfValueTF{#2}{[#1 \mid #2]}{\brackets{#1}}}
\NewDocumentCommand{\I}{mmo}{\mathrm{I}\IfValueTF{#3}{\!\left(#1;#2\ \middle|\ #3\right)}{\parentheses*{#1; #2}}}
\NewDocumentCommand{\Ism}{mmo}{\mathrm{I}\IfValueTF{#3}{(#1;#2 \mid #3)}{\parentheses{#1; #2}}}
\NewDocumentCommand{\E}{somo}{\ensuremath{\mathbb{E}\IfValueT{#2}{_{#2}}{} \IfBooleanTF{#1}{#3}{\IfValueTF{#4}{\!\left[#3\ \middle|\ #4\right]}{\brackets*{#3}}}}}
\NewDocumentCommand{\Esm}{somo}{\ensuremath{\mathbb{E}\IfValueT{#2}{_{#2}}{} \IfBooleanTF{#1}{#3}{\IfValueTF{#4}{\!\left[#3\ \middle|\ #4\right]}{\brackets{#3}}}}}
\NewDocumentCommand{\Var}{somo}{\mathrm{Var}\IfValueT{#2}{_{#2}}{} \IfBooleanTF{#1}{#3}{\IfValueTF{#4}{\!\left(#3\ \middle|\ #4\right)}{\parentheses*{#3}}}}
\NewDocumentCommand{\Varsm}{somo}{\mathrm{Var}\IfValueT{#2}{_{#2}}{} \IfBooleanTF{#1}{#3}{\IfValueTF{#4}{\left(#3\ \middle|\ #4\right)}{\parentheses{#3}}}}
\NewDocumentCommand{\Cov}{som}{\mathrm{Cov}\IfValueT{#2}{_{#2}}{} \IfBooleanTF{#1}{#3}{\brackets*{#3}}}
\NewDocumentCommand{\Cor}{som}{\mathrm{Cor}\IfValueT{#2}{_{#2}}{} \IfBooleanTF{#1}{#3}{\brackets*{#3}}}
\NewDocumentCommand{\KL}{mm}{\mathrm{KL}\parentheses*{#1 \| #2}}
\NewDocumentCommand{\grad}{e_}{\boldsymbol{\nabla}\IfValueT{#1}{_{\!\!#1}\,}}
\NewDocumentCommand{\diag}{som}{\mathrm{diag}\IfValueT{#2}{_{#2}}{} \IfBooleanTF{#1}{\braces{#3}}{\braces*{#3}}}
\NewDocumentCommand{\N}{somm}{\mathcal{N}\IfBooleanTF{#1}{\left(}{(}\IfValueT{#2}{#2;}{} #3, #4\IfBooleanTF{#1}{\right)}{)}}
\NewDocumentCommand{\GP}{omm}{\mathcal{GP}(\IfValueT{#1}{#1;}{} #2, #3)}
\newcommand{\spC}{\spa{C}}
\begin{document}

\maketitle

\vspace{-1ex}
\begin{abstract}

  Large language models are increasingly post-trained with reinforcement learning in verifiable domains such as code and math.
  Yet, current methods for reinforcement learning with verifiable rewards~(RLVR) learn only from a scalar outcome reward per attempt, creating a severe credit-assignment bottleneck.
  Many verifiable environments actually provide rich textual feedback, such as runtime errors or judge evaluations, that explain \emph{why} an attempt failed.
  We formalize this setting as reinforcement learning with rich feedback and introduce \textbf{Self-Distillation Policy Optimization}~(\textbf{SDPO}), which converts tokenized feedback into a dense learning signal without any external teacher or explicit reward model.
  SDPO treats the current model conditioned on feedback as a self-teacher and distills its feedback-informed next-token predictions back into the policy.
  In this way, SDPO leverages the model's ability to retrospectively identify its own mistakes in-context.
  Across scientific reasoning, tool use, and competitive programming on LiveCodeBench v6, SDPO improves sample efficiency and final accuracy over strong RLVR baselines.
  Notably, SDPO also outperforms baselines in standard RLVR environments that only return scalar feedback by using successful rollouts as implicit feedback for failed attempts.
  Finally, applying SDPO to individual questions at test time accelerates discovery on difficult binary-reward tasks, achieving the same discovery probability as best-of-$k$ sampling or multi-turn conversations with $3\times$ fewer attempts.
  \looseness=-1
\end{abstract}

\startcontents

\section{Introduction}

\begin{wrapfigure}{r}{0.5\textwidth}
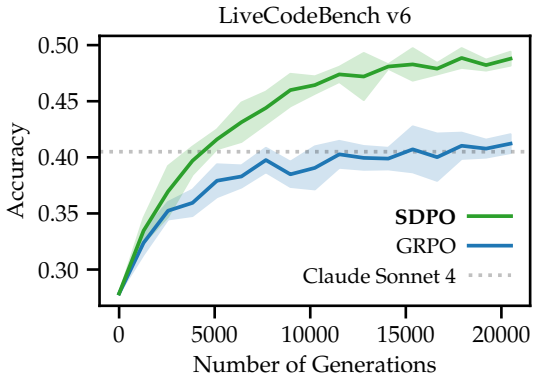

    \vspace{-13.5ex}
    \centering
    \incplt[0.5\textwidth]{main}
    \vspace{-3ex}
    \caption{
        \textbf{SDPO substantially outperforms an improved version of Group Relative Policy Optimization~(GRPO) on LCB~v6 with Qwen3-8B.} Further, SDPO achieves GRPO's final accuracy in $4\times$ fewer generations.
        Claude Sonnet 4 is the strongest instruct model on the public LCBv6 leaderboard.
        Shaded regions show the standard deviation across 3 seeds.}
    \label{fig:main}
    \vspace{-5ex}
\end{wrapfigure}

Progress in deep reinforcement learning has shown that iterating on experience---acting, receiving feedback, and updating a policy---can unlock capabilities that are difficult to obtain from static supervision alone~\citep{atari,alphago,silver2017mastering,berner2019dota}.
The same theme now appears in large language models (LLMs): large-scale post-training with reinforcement learning (RL) has substantially improved performance on reasoning-heavy tasks, especially in settings with programmatic or otherwise verifiable evaluation~\citep{jaech2024openai,guo2025deepseek,team2025kimi,olmo2025olmo}.

Nevertheless, the dominant RL recipe for LLM post-training remains bottlenecked by credit assignment.
Most current approaches operate in the setting of reinforcement learning with verifiable rewards (RLVR): given a question~$x$, the model samples an answer ${y \sim \pi_\theta(\cdot \mid x)}$ and receives a scalar reward $r \in \mathbb{R}$, often binary (e.g., unit-tests pass/fail in code generation).
Modern policy gradient RLVR methods such as Group Relative Policy Optimization~\citep[GRPO;][]{shao2024deepseekmath} estimate advantages from these sparse outcome rewards.
Furthermore, when all rollouts in a group receive the same (often zero) reward, GRPO advantages collapse to zero and learning stalls.
To overcome this sparsity, one might prefer distillation from a strong teacher~\citep{guo2025deepseek,yang2025qwen3,lu2025onpolicydistillation,guha2025openthoughts}, which provides dense, token-level supervision.
However, strong teachers are often unavailable in online learning, where the goal is to raise the capability ceiling beyond existing models.

In this work, we argue that the key limitation is not RL per se, but the information bottleneck imposed by scalar outcome rewards.
Many verifiable environments expose \emph{rich tokenized feedback} beyond scalar rewards~$r$, such as runtime errors, failing unit tests, or evaluations from an LLM judge.
This feedback not only reveals {\em whether} a rollout was wrong, but also {\em what} went wrong.
We formalize this more general setting as \textbf{Reinforcement Learning with Rich Feedback} (\textbf{RLRF}) and illustrate its difference to RLVR in \cref{fig:rlrf}.
Here, feedback can be any tokenized representation of any state reached by an agentic system.
The central question becomes: how can we convert rich feedback into effective credit assignment without requiring external supervision from a strong teacher?

\begin{figure}
    \vspace{-2ex}
    \centering
    \includegraphics[width=\textwidth]{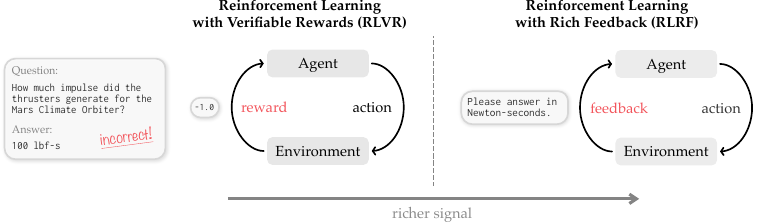}
    \vspace{-1em}
    \caption{
      \textbf{Comparison of RLVR and RLRF settings.}
      In Reinforcement Learning with Verifiable Rewards (RLVR), the agent learns from a scalar reward $r$, which often acts as an information bottleneck by masking the underlying environment state.
      In contrast, Reinforcement Learning with Rich Feedback (RLRF) utilizes tokenized feedback.
      This provides a significantly richer signal than a scalar reward, as the feedback can encapsulate both the reward as well as detailed observations of the state~(such as runtime errors from a code environment or feedback from an LLM judge).}
    \label{fig:rlrf}
\end{figure}

\begin{wrapfigure}{r}{0.5\textwidth}
  \vspace{-3.3ex}
  \small
  \begin{mdframed}[
    backgroundcolor=lightgray,
    linecolor=lightgray,
    linewidth=0pt,
    roundcorner=5pt,
    innertopmargin=8pt,
    innerbottommargin=8pt,
    innerleftmargin=8pt,
    innerrightmargin=8pt
  ]
    \begin{minted}[
    ]{markdown}
Runtime Error
ZeroDivisionError: division by zero
Line 73 in separateSquares (Solution.py)

Last Executed Input
[[26,30,2],[11,23,1]]
    \end{minted}
  \end{mdframed}
  \vspace{-1.5ex}
  \caption{Example of feedback from our code environment, inspired by LeetCode. Listings~\ref{lst:feedback_example_wrong_answer},~\ref{lst:memory_error}, and~\ref{lst:index_error} in the appendix show examples of feedback in case of a wrong answer, a memory error, and an index error.}\label{fig:feedback_example}
  \vspace{-2.5ex}
\end{wrapfigure}

Our starting point is the observation that LLMs already possess a powerful mechanism for using feedback: in-context learning~\citep{brown2020language,wei2022chain}.
When conditioned on feedback, the same model can often identify plausible mistakes and propose a corrected approach.
A common example of such feedback is the summary of failed test cases on coding platforms like LeetCode (\cref{fig:feedback_example}).
Many recent works leverage this capability to iteratively generate corrections~\citep{chen2021decision,madaan2023self,shinn2023reflexion,yao2023retroformer,yuksekgonul2025optimizing,lee2025feedback}.
In contrast, we use the current policy as a ``self-teacher'' that, rather than sampling a new response, re-evaluates the \emph{existing} rollout after receiving rich feedback.
Including the feedback in-context transforms the model's next-token distribution, allowing the self-teacher to agree or disagree with the student's original choices at specific tokens.
This yields dense, logit-level credit assignment.
For example, when provided with the feedback from \cref{fig:feedback_example}, the self-teacher can identify how the initial attempt should be modified to avoid the runtime error.
Crucially, this mechanism incurs no sampling overhead: we simply re-compute the log-probabilities of the original attempt under the self-teacher's feedback-augmented context.

Building on this idea, we introduce \textbf{Self-Distillation Policy Optimization} (\textbf{SDPO}), an on-policy algorithm that performs RL via self-distillation.
SDPO samples rollouts from the current policy, obtains rich environment feedback, and then minimizes a logit-level distillation loss that matches the current policy's next-token distribution to that of the self-teacher.
Conceptually, SDPO addresses the central limitation of applying distillation to online learning: the absence of a stronger external teacher.
Instead of relying on a fixed teacher, SDPO leverages the model's ability to recognize its own mistakes in hindsight.
By conditioning the current policy on the rich feedback it just received, we construct a self-teacher that provides the dense supervision of distillation while retaining the exploration benefits of on-policy RL.
\Cref{tab:main} summarizes how this positions SDPO relative to RLVR and distillation baselines.
We include a comprehensive summary of related work in \cref{sec:related_work}.
\looseness=-1

We show that SDPO is a policy gradient algorithm whose advantages are estimated using the self-teacher.
This enables the implementation of SDPO with minor changes to standard RLVR pipelines, simply by swapping out the advantages.

\begin{table}[t]
    \small
    \centering
    \renewcommand{\arraystretch}{1.3} %
    \begin{tabular}{l l l l}
        \toprule
        \textbf{Method} & \textbf{Sampling} & \textbf{Signal} & \textbf{Feedback} \\
        \midrule
        \textbf{SFT / Distillation} {\small\citep{hinton2015distilling}}
            & \no off-policy & \yes rich & \no strong teacher \\
        \textbf{On-Policy Distillation} {\small\citep{agarwal2024policy}}
            & \yes on-policy & \yes rich & \no strong teacher \\
        \textbf{RLVR (such as GRPO)} {\small\citep{lambert2024tulu}}
            & \yes on-policy & \no weak & \yes environment \\
        \textbf{RL via Self-Distillation (SDPO)} {\small(ours)}
            & \yes on-policy & \yes rich & \yes environment \\
        \bottomrule
    \end{tabular}
    \caption{Comparison of self-distillation to alternative methods for post-training LLMs.}
    \label{tab:main}
\end{table}

\paragraph{Summary of evaluation results.}

We evaluate SDPO in three online RL settings:

\begin{itemize}
    \item \textbf{Learning without rich feedback}~(\sref{sec:gen_results}):
    We evaluate standard RLVR environments that do not return any feedback beyond scalar rewards.
    Here, SDPO treats successful attempts sampled in the current batch as ``feedback'' for failed attempts on the same question.
    We perform training runs on scientific reasoning and tool use, starting with Qwen3-8B and Olmo3-7B-Instruct.
    We find that SDPO outperforms a strong GRPO baseline that integrates recent improvements: 70.2\% vs. 66.6\% final accuracy on aggregate.
    SDPO achieves higher accuracy with up to $11\times$ shorter generation lengths compared to GRPO, demonstrating that effective reasoning need not be verbose.\looseness=-1

    \item \textbf{Learning with rich feedback}~(\sref{sec:results}):
    We evaluate competitive programming problems from LiveCodeBench~v6 with LeetCode-style feedback.
    As shown in \cref{fig:main}, SDPO substantially improves over GRPO, reaching a higher final accuracy (48.8\% vs.\ 41.2\%) and achieving GRPO's final accuracy in $4\times$ fewer generations.
    SDPO's gains grow with model scale, suggesting that the ability for self-teaching emerges as models become stronger in-context learners.

    \item \textbf{Discovering novel solutions to hard tasks at test-time}~(\sref{sec:ttt}):
    Finally, we demonstrate that SDPO can accelerate the discovery of solutions to difficult binary-reward questions.
    This contrasts with RLVR methods, which only begin learning once the first solution has been found.
    We leverage SDPO for \textbf{Test-Time Self-Distillation}, a form of test-time training where the model specializes to an individual test question.
    We consider very difficult LiveCodeBench questions, for which the base model's pass@$64$ is below 0.03, and show that SDPO accelerates the discovery of solutions by $3\times$. %
\end{itemize}

\section{SDPO: Self-Distillation Policy Optimization}

We propose an algorithm that uses the in-context learning ability of the current policy for assigning credit.
Our key object is the \emph{self-teacher}, $\pi_\theta(\cdot \mid x, f)$, which refers to the current policy (the ``student'') prompted with the question~$x$ and the rich feedback~$f$.
Next to the students' original attempt $y$, $f$ may incorporate two key kinds of feedback: any environment output (such as runtime errors from a code environment) and a sample solution if $x$ was already solved with another attempt in the rollout group.\smash{\footnotemark}\footnotetext{In standard RLVR implementations a rollout group contains multiple simultaneous attempts for~$x$.}
As discussed before, the self-teacher~$\pi_\theta(\cdot \mid x, f)$ should have a higher accuracy than the student~$\pi_\theta(\cdot \mid x)$ since it sees additional information in-context.
This leads us to observe:\looseness=-1

\begin{center}
    \textbf{We can use the same policy in two different roles: As the student for the initial attempt and as the teacher to determine the value of actions in hindsight.}
\end{center}

We introduce \textbf{Self-Distillation Policy Optimization}~(\textbf{SDPO}) which repeatedly distills the self-teacher into the student.
Given a question $x$, we first sample rollouts from the student~$\pi_\theta$ and obtain corresponding environment feedback.
We then use the KL-divergence, $\smash{\KL{p}{q} = \sum_{i} p(i) \log\nicefrac{p(i)}{q(i)}}$, as a distance measure for the next-token distributions of student and teacher, and optimize a standard logit distillation loss: \begin{equation}
    \mathcal{L}_{\mathrm{SDPO}}(\theta) := \sum_{t} \mathrm{KL}(\pi_\theta(\cdot \mid x, y_{<t}) \| \mathrm{stopgrad}(\pi_{\theta}(\cdot \mid x, f, y_{<t}))) \label{eq:sdpo}
\end{equation}
\begin{wrapfigure}{r}{0.525\textwidth}
\vspace{-5.75ex}
\begin{minipage}{0.525\textwidth}
    \begin{algorithm}[H]
    \caption{SDPO}
    \label{alg:sdpo}
    \small
    \begin{algorithmic}[1]
        \Require{
        Language model $\pi_\theta$; dataset with questions $x$; number of rollouts~$G$ per question; environment to obtain feedback for attempts.
        }
        \Repeat
            \State Sample question $x$ from dataset.
            \State Sample responses: $\smash{\{y_i\}_{i=1}^G \sim \pi_\theta(\cdot \mid x)}$.
            \State Evaluate responses to obtain feedback $f_i$.
            \Comment{\textcolor{blue}{\textbf{Self-distillation:}}}
            \State \textcolor{blue}{Compute log-probs of self-teacher \vspace{-1.5ex}\[\log \pi_{\theta}(y_{i,t} \mid x, f_i, y_{i,<t}).\]\vspace{-3.5ex}}
            \State \textcolor{blue}{Update $\theta$ with gradient descent on $\mathcal{L}_{\mathrm{SDPO}}(\theta)$.}
        \Until{converged}
    \end{algorithmic}
    \end{algorithm}
\end{minipage}
\vspace{-3ex}
\end{wrapfigure}
where the stopgrad operator blocks gradients from flowing through the teacher, and thus prevents it from regressing towards the student and ignoring~$f$.
The intuitive role of the teacher is to determine where and how the students' original attempt $y$ was wrong through retrospection based on the feedback~$f$.
\Cref{fig:sdpo} shows an example of self-teaching with \smash{Qwen3-8B} as student and self-teacher.
We summarize SDPO in \cref{alg:sdpo} and display the teachers' reprompt template in \cref{tab:template}.

\begin{figure}
    \centering
    \includegraphics[width=\textwidth]{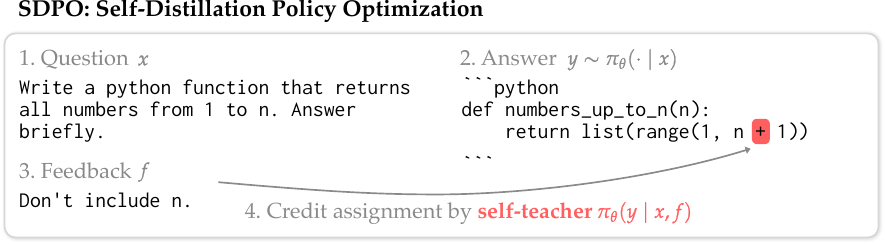}
    \vspace{-1.5em}
    \caption{Example of self-teaching with Qwen3-8B. The answer is generated by the model before seeing the feedback. Then, we re-evaluate the log-probs of the original attempt with the \emph{self-teacher} after seeing the feedback. We show the per-token $\log(\nicefrac{\Pr{\text{self-teacher}}}{\Pr{\text{student}}})$, with red indicating negative values (\textcolor{red}{self-teacher disagrees})
    and white indicating values around zero. Notably, in this example, Qwen3-8B identifies the error through retrospection without an explicit solution. Further, the activation is sparse, identifying where mistakes happen and adjusting to the students' response distribution.}
    \label{fig:sdpo}
\end{figure}

\begin{table}[t]
    \centering
    \begin{tabular}{ll}
        \toprule
        \textbf{User:} & \makecell[tl]{
            \textcolor{blue}{prompt}\\[5pt]Correct solution:\\\textcolor{blue}{successful\_previous\_rollout}\\[5pt]The following is feedback from your unsuccessful earlier attempt:\\\textcolor{blue}{environment\_output}\\[5pt]Correctly solve the original question.
        } \\
        \textbf{Assistant:} & \textcolor{blue}{original\_response} \\
        \bottomrule
    \end{tabular}
    \caption{Template for self-teacher. \textcolor{blue}{prompt} is replaced with the question. A sample solution previously generated by the student is substituted for \textcolor{blue}{successful\_previous\_rollout} (if available for this question; otherwise the paragraph is skipped). \textcolor{blue}{environment\_output} is replaced with the environment output~(see, e.g.,~\cref{fig:feedback_example}) from the models' original attempt (if it was not successful and there is no solution; otherwise the paragraph is skipped). If the models' original attempt was successful, this attempt is passed as the correct solution. \textcolor{blue}{original\_response} is replaced with the models' original attempt to re-evaluate its log-probabilities under the self-teacher.\looseness=-1}
    \label{tab:template}
\end{table}

We can derive the SDPO gradient as follows (see \cref{sec:gradient_estimator} for details):
\begin{proposition}\label{prop:gradient_estimate}
The gradient of $\mathcal{L}_{\mathrm{SDPO}}$ is
\vspace{-0.5ex}\begin{equation}
    \grad \mathcal{L}_{\mathrm{SDPO}}(\theta) = \mathbb{E}_{y \sim \pi_\theta(\cdot \mid x)}\!\!\left[ \sum_{t=1}^{|y|} \mathbb E_{\hat{y}_t\sim\pi_\theta(\cdot \mid x, y_{<t})}\!\!\left[\log \frac{\pi_\theta(\hat{y}_t \mid x, y_{<t})}{\pi_\theta(\hat{y}_t \mid x, f, y_{<t})} \cdot \grad_\theta \log \pi_\theta(\hat{y}_t \mid x, y_{<t})\right] \right]. \label{eq:sdpo_gradient}
\end{equation}
\end{proposition}

\subsection{Comparison to RLVR}

Note that the SDPO gradient is a (negated) logit-level policy gradient where the advantages are estimated using the self-teacher.\smash{\footnotemark}\footnotetext{See \cref{sec:off_policy} for a detailed comparison of the SDPO gradient to the standard policy gradient.}
We can therefore reuse standard RLVR implementations and simply swap out the advantages.
Let $y_i$ be the $i$-th rollout from a rollout group of size $G$ for question $x$, then comparing GRPO and SDPO we have: \begin{equation*}
    A_{i,t}^{\mathrm{GRPO}} := r_{i} - \mathrm{mean}\{r_{i}\}_{i=1}^G \;\text{\textcolor{gray}{(constant in $t$)}}, \quad A_{i,t}^{\mathrm{SDPO}}(\hat{y}_{i,t}) = \log \frac{\pi_{\theta}(\hat{y}_{i,t} \mid x, f_i, y_{i,<t})}{\pi_{\theta}(\hat{y}_{i,t} \mid x, y_{i,<t})}.
\end{equation*}
The GRPO advantages are applied only to the sampled token $y_{i,t}$ and are constant within a rollout~$y_i$.\smash{\footnotemark}\footnotetext{We use the GRPO~\citep{shao2024deepseekmath} advantage without normalization~\citep{liu2025understanding}.}
In contrast, the SDPO advantages are zero only for tokens where student and teacher perfectly agree.
The SDPO advantage is positive for tokens which are more likely under the teacher while being negative for tokens which are less likely under the teacher.
Thus, SDPO can be seen as a direct extension of standard RLVR methods in two ways: \begin{enumerate}
    \item from 1-bit feedback to \emph{allowing arbitrary sequences of tokens as feedback}, and
    \item leveraging this rich feedback to \emph{estimate dense logit-level advantages}.
\end{enumerate}
This tight connection to RLVR methods also enables a straightforward extension of the SDPO gradient from~\cref{eq:sdpo_gradient} to off-policy data via PPO-style clipped importance sampling~\citep{schulman2017proximal}, see \cref{sec:off_policy}.

\subsection{Compute time \& memory}

\begin{wrapfigure}{r}{0.325\textwidth}
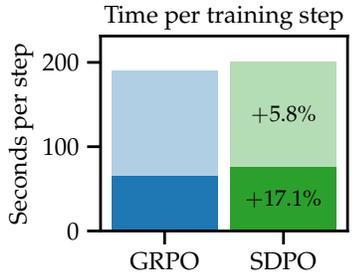

    \vspace{-10ex}
    \centering
    \incplt[0.325\textwidth]{timing}
    \vspace{-3ex}
    \caption{Time per step for SDPO vs GRPO (solid: without code environment, light: with code environment).}
    \label{fig:timing}
    \vspace{-5ex}
\end{wrapfigure}

The only computational overhead of SDPO compared to GRPO is the additional computation of log-probs from the self-teacher, which can be effectively parallelized and is substantially faster than sequential generation.
\Cref{fig:timing} compares the compute time of SDPO and GRPO.
As expected, the compute overhead of SDPO is relatively small.
Here, we use a micro batch size of 2;\smash{\footnotemark}\footnotetext{The micro batch size corresponds to \# rollouts we train on at a time while accumulating gradients.} compute time can be further reduced by using larger micro batch sizes.

Naively computing the KL divergence between student and teacher requires holding full logits of both models in memory.
To avoid this, we approximate the KL divergence in the SDPO loss by performing top-$K$ distillation (i.e., only computing the top-$K$ logits of the student and the corresponding logits of the teacher alongside a term capturing the tail probability; cf.~\cref{sec:approximate_logit_distillation}). %
With a reasonable choice of $K$ (e.g., ${K=100}$), this avoids virtually any memory overhead while capturing most of the information.\looseness=-1

\subsection{Stability improvements}\label{sec:stability_improvements}

We find that two practical modifications significantly enhance the training stability of SDPO.
First, we employ a {regularized self-teacher}, implemented either via an exponential moving average~(EMA) of the student parameters or by interpolating the current teacher with the initial teacher~(cf.~\cref{sec:teacher_regularization}).
As detailed later, both strategies effectively stabilize learning.
Second, we adopt the symmetric Jensen-Shannon divergence for the distillation loss; this formulation has similarly been shown to improve stability in on-policy distillation from external teachers~\citep{agarwal2024policy}.

\section{Learning without Rich Environment Feedback}\label{sec:gen_results}

We first evaluate SDPO in standard RLVR environments, where feedback is limited to scalar rewards.
Instead of using the scalar reward, SDPO treats successful attempts sampled in the current batch as ``feedback'' for failed attempts on the same question.
By comparing the student's attempt with a correct solution, the self-teacher can identify where the student was wrong and provide dense credit assignment.

\begin{figure}
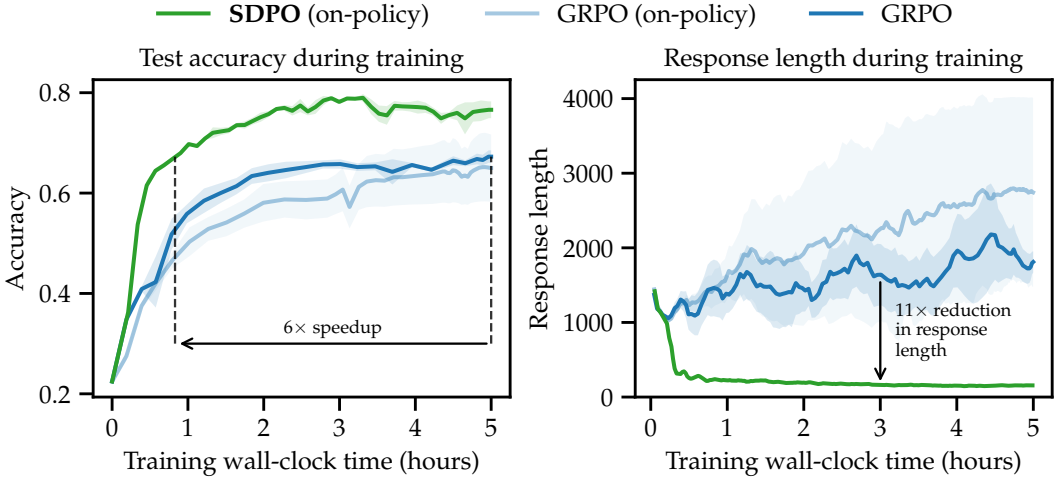

    \centering
    \vspace{-3ex}
    \incplt[\textwidth]{gen_results}
    \vspace{-3ex}
    \caption{Training progression of Olmo3-7B-Instruct on Chemistry. We report the average accuracy across 16 samples per question and a rolling average of response lengths over 5 steps. We report GRPO with the optimal hyperparameters for this model and task. We run each configuration for 3 seeds and report standard errors as shaded areas.\looseness=-1}
    \label{fig:gen_results}
\end{figure}

\subsection{Experimental setting}\label{sec:gen_results:setting}

We evaluate tasks on which the model has not been explicitly fine-tuned: \begin{itemize}
    \item \textbf{Science Q\&A} (Chemistry, Physics, Biology, Materials science): Undergraduate-level scientific reasoning using reasoning subsets (L3) from SciKnowEval~\citep{feng2024sciknoweval}.\looseness=-1

    \item \textbf{Tool use}: Mapping a tool-API specification and user request to the correct tool call, using ToolAlpaca~\citep{tang2023toolalpaca}.
\end{itemize}
We perform a train-test split to test in-domain generalization.
We use Qwen3-8B~\citep{yang2025qwen3} and Olmo3-7B-Instruct~\citep{olmo2025olmo} as initial checkpoints and report avg@16 relative to wall-clock training time, excluding initialization \& validation.\looseness=-1

\paragraph{Baselines.}

We compare SDPO to an improved variant of \textbf{GRPO}~\citep{shao2024deepseekmath}, which incorporates several recent modifications~\citep{olmo2025olmo,khatri2025art} such as asymmetric clipping~\citep{yu2025dapo}, avoiding biased normalization~\citep{liu2025understanding}, and correcting for off-policy data when using efficient inference frameworks~\citep{yao2025offpolicy}.
We integrate these modifications into a GRPO implementation that represents a strong baseline, as detailed in \cref{eq:baseline} in \cref{sec:off_policy}.
GRPO enables off-policy training through PPO's clipped importance weighting~\citep{schulman2017proximal}.
We additionally report the special case of \textbf{on-policy GRPO} (matching the hyperparameters of vanilla SDPO).
For both baselines, we perform a hyperparameter sweep and report results for the models that achieve the highest validation performance across all target tasks.
Hyperparameters and training details are provided in \cref{sec:details}.
We use the \texttt{verl} library~\citep{sheng2024hybridflow} for fast multi-GPU training.\looseness=-1

\subsection{Results}

\Cref{tab:main_sec3} summarizes our results.
We find that SDPO outperforms GRPO across almost all runs, often leading to substantial improvements.
SDPO learns notably faster than GRPO, performing close to 5 hours of GRPO training after only 1 hour of training with SDPO in several cases.
SDPO achieves a particularly substantial improvement over GRPO on the Chemistry task, as is displayed in \figref{fig:gen_results}{left}.
With Olmo3-7B-Instruct, \emph{SDPO achieves the 5h GRPO accuracy in 50 minutes of wall-clock training time}, a $6\times$ speedup.
Moreover, SDPO's 5h accuracy is more than $10$\%-points higher than that of GRPO.

\begin{table}[t]
\vspace{-1ex}
\centering
\setlength{\tabcolsep}{6pt}
\renewcommand{\arraystretch}{1.15}

\begin{tabular}{l *{10}{S}}
\toprule
& \multicolumn{2}{c}{Chemistry}
& \multicolumn{2}{c}{Physics}
& \multicolumn{2}{c}{Biology}
& \multicolumn{2}{c}{Materials}
& \multicolumn{2}{c}{Tool use} \\
\cmidrule(lr){2-3}\cmidrule(lr){4-5}\cmidrule(lr){6-7}\cmidrule(lr){8-9}\cmidrule(lr){10-11}
& {1h} & {5h} & {1h} & {5h} & {1h} & {5h} & {1h} & {5h} & {1h} & {5h} \\
\midrule

\textbf{Qwen3-8B} & \multicolumn{2}{c}{41.2} & \multicolumn{2}{c}{59.2} & \multicolumn{2}{c}{30.8} & \multicolumn{2}{c}{58.9} & \multicolumn{2}{c}{57.5} \\
\ + GRPO
& 65.9 & 74.5 & 63.8 & 72.7 & 35.1 & \textbf{59.9} & \textbf{74.3} & 77.1 & 64.9 & 67.7 \\
\ + GRPO {\small (on-policy)}
& {63.3} & {63.4} & {63.6} & {63.6} & {49.8} & {49.8} & {73.9} & {74.1} & {60.2} & {65.7} \\
\ + \textbf{SDPO} {\small (on-policy)}
& \textbf{73.2} & \textbf{80.9} & \textbf{66.6} & \textbf{75.6} & \textbf{50.6} & {56.8} & {72.1} & \textbf{78.4} & \textbf{68.0} & \textbf{68.5} \\
\addlinespace[2pt]
\midrule

\textbf{Olmo3-7B-Instruct} & \multicolumn{2}{c}{22.8} & \multicolumn{2}{c}{37.7} & \multicolumn{2}{c}{16.2} & \multicolumn{2}{c}{36.7} & \multicolumn{2}{c}{39.3} \\
\ + GRPO
& {39.7} & {56.7} & {55.3} & 63.3 & {35.6} & \textbf{55.8} & {70.9} & 75.0 & {56.4} & \textbf{65.0} \\
\ + GRPO {\small (on-policy)}
& {51.4} & {57.5} & \textbf{62.7} & {62.7} & \textbf{49.8} & {49.8} & {73.3} & {73.5} & {56.8} & {60.6} \\
\ + \textbf{SDPO} {\small (on-policy)}
& \textbf{68.0} & \textbf{80.0} & {59.9} & \textbf{66.1} & {48.0} & 52.8 & \textbf{73.7} & \textbf{79.1} & \textbf{60.8} & 62.1 \\
\bottomrule
\end{tabular}

\caption{\textbf{Comparison of SDPO and GRPO on reasoning-related benchmarks.} We report the highest achieved avg@16 within 1 hour and 5 hours of wall-clock training time, respectively. Both SDPO and on-policy GRPO perform one gradient step per generation batch, while GRPO performs 4 off-policy mini batch steps. We select optimal hyperparameters for SDPO and baselines based on 5h accuracy. Each run is performed on a node with 4 NVIDIA GH200 GPUs. Together with initialization and validation, each run takes approximately 6 hours.}
\label{tab:main_sec3}
\vspace{-1ex}
\end{table}

We remark that our results with SDPO use strictly on-policy training (i.e., one gradient step per generation batch).
Given the known efficiency gains of off-policy methods that perform multiple gradient updates per generation batch, we believe that studying SDPO with off-policy updates is an exciting direction for future work.

\begin{takeaway}
    We demonstrate that SDPO can learn to reason effectively, generalizing to challenging reasoning tasks.
    Without requiring any modification to existing RLVR environments, SDPO outperforms GRPO substantially in several cases.
\end{takeaway}

\subsection{Self-distillation learns to reason concisely}

We consistently observe that SDPO produces substantially shorter generations than GRPO while achieving higher accuracy.
SDPO’s responses are more than $3\times$ shorter on average across tasks (cf.~\cref{tab:response_lengths} in \cref{sec:ablations}).
On Chemistry with Olmo3-7B-Instruct, SDPO even achieves an $11\times$ reduction in response length relative to GRPO while maintaining higher accuracy~(\figref{fig:gen_results}{right}).
While recent progress in RLVR has demonstrated that scaling response length is a powerful driver of emergent reasoning capabilities~\citep{jaech2024openai,guo2025deepseek,muennighoff2025s1}, our results suggest that effective reasoning need not always be verbose. We find that SDPO improves the \emph{efficiency} of reasoning.

Qualitatively, we observe that the longer responses from GRPO often stem from ``superficial'' reasoning rather than necessary analytical steps.
GRPO frequently generates filler phrases like ``Hmm'' and ``Wait'' or enters circular logical loops that repeat previous steps verbatim.
\Cref{fig:qual_responses} displays a representative example of this phenomenon.
Remarkably, SDPO's generations remain concise and avoid these superficial patterns.
This may be explained by SDPO's dense credit assignment, which assigns a specific advantage to each next-token prediction, leading to sparse advantages~(cf.~\cref{fig:advantages_comp} in \cref{sec:qualitative_examples}).
By improving the efficiency of reasoning, SDPO reduces inference generation time and demonstrates that reasoning performance can be improved by refining \emph{how} the model reasons, not just how \emph{long} it reasons.

\begin{figure}[t]
    \centering
    \begin{subfigure}[b]{0.49\textwidth}
        \small
        \begin{mdframed}[
            backgroundcolor=lightgray,
            linecolor=lightgray,
            linewidth=0pt,
            roundcorner=5pt,
            innertopmargin=8pt,
            innerbottommargin=8pt,
            innerleftmargin=8pt,
            innerrightmargin=8pt
        ]
\textcolor{gray}{\dots}Alternatively\textcolor{gray}{\dots}\\
Closer to D? No\textcolor{gray}{\dots}\\
\textcolor{blue}{Wait I'm going in circles}\textcolor{gray}{\dots}\ Wait, perhaps the correct answer is B\textcolor{gray}{\dots}\ $\smash{10^{1.85}\approx69.3}$\textcolor{gray}{\dots}\\
Ah, this works\textcolor{gray}{\dots}\ \textcolor{blue}{Wait I think I messed up}\textcolor{gray}{\dots}\\
Hmm\textcolor{gray}{\dots}\ $\smash{10^{1.85}\approx69.3}$\textcolor{gray}{\dots}\\
Thus, the correct answer is likely B: 1.85.\\
\textcolor{gray}{<answer>}\\
\textcolor{red}{B}\\
\textcolor{gray}{</answer>}
        \end{mdframed}
        \caption{GRPO (5,549 tokens)}
    \end{subfigure}
    \hfill %
    \begin{subfigure}[b]{0.49\textwidth}
        \small
        \begin{mdframed}[
            backgroundcolor=lightgray,
            linecolor=lightgray,
            linewidth=0pt,
            roundcorner=5pt,
            innertopmargin=8pt,
            innerbottommargin=8pt,
            innerleftmargin=8pt,
            innerrightmargin=8pt
        ]
\textcolor{gray}{\dots}At pH 7.4, all functional groups are neutral\textcolor{gray}{\dots}\ maintaining a balance between hydrophobic and hydrophilic character\textcolor{gray}{\dots}\ [The] overall polarity\textcolor{gray}{\dots}\ keeps logD from being very high\textcolor{gray}{\dots}\ or very low\textcolor{gray}{\dots}\ [typically falling] in the 2.0-3.0 range, with 2.61 (C) being a reasonable estimate\textcolor{gray}{\dots}\\
\textcolor{gray}{<answer>}\\
\textcolor{green}{C}\\
\textcolor{gray}{</answer>}
        \end{mdframed}
        \caption{SDPO (764 tokens)}
    \end{subfigure}

    \caption{Example responses from GRPO and SDPO after 50 training steps to the following question: ``What is the correct octanol/water distribution coefficient logD under the circumstance of pH 7.4 for the molecule \texttt{O=C1O[C@@H](COc2ccon2)CN1c1ccc(C2=CCOCC2)c(F)c1}?'' The answer options are A: 1.32, B: 1.85, C: 2.61, D: 3.76. The correct answer is \textbf{C}.
    GRPO's answer contains $5\times$ ``{Hmm.}'', $9\times$ ``{No.}'', and $25\times$ ``{Wait}''. Further, GRPO's answer repeats calculations such as ``$\smash{10^{1.85}\approx69.3}$'', which appears four times, and the model even explicitly generates ``Wait I'm going in circles''.
    SDPO's answer avoids any circular reasoning and is more than $7\times$ shorter. The base model is Qwen3-8B.\looseness=-1}
    \label{fig:qual_responses}
\end{figure}

\section{Learning with Rich Environment Feedback}\label{sec:results}

We next evaluate SDPO on coding tasks.
Coding is a canonical example of an RL environment that provides rich feedback, such as runtime errors and failed unit tests.
Learning to solve these coding problems requires strong credit assignment since the student must identify its precise mistakes to avoid repeating them in the future.
LiveCodeBench~\citep[LCB;][]{jain2024livecodebench} provides a set of contest-style coding problems, ranging from simple to competition-level.
We restrict our evaluation to the most recent LCBv6 subset of LCB, which contains 131 questions released between February and May 2025.
We consider a setting with public and private unit tests, common for code contests and coding platforms like LeetCode, where the public tests are used for evaluation during training and the private tests are used for validation~\citep{chen2022codet,le2022coderl,el2025competitive,samadi2025scaling}.\smash{\footnotemark}\footnotetext{We select public tests as a 50\% random subset of private tests.}

We use the Qwen3~\citep{yang2025qwen3} model family for our experiments, with Qwen3-8B as default unless otherwise specified.
We report the average accuracy over 4 rollouts and use the same GRPO baseline as outlined in \cref{sec:gen_results:setting}.

\paragraph{Results.}

\Cref{fig:main} compares the learning curves of SDPO and GRPO on LCBv6.
We find that SDPO achieves a substantially higher final accuracy (48.8\%) than GRPO (41.2\%) while also outperforming the strongest instruct models on the public LCBv6 leaderboard:\smash{\footnotemark}\footnotetext{On the public leaderboard, the LCBv6 subset can be obtained by selecting February to May 2025.} Claude~Sonnet~4 (40.5\%) and Claude~Opus~4 (39.7\%).
Furthermore, SDPO reaches the final accuracy of GRPO in $4\times$ fewer generations.
We include an extended comparison to other RLVR baselines that perform similarly to GRPO in \cref{tab:baselines} in the appendix.
Differentiating between the easy, medium, and hard questions of LCB, we find that SDPO particularly improves over GRPO in solving medium and hard questions~(cf.~\cref{fig:by_difficulty} in the appendix).

\subsection{Self-distillation benefits from stronger models}

\begin{figure}
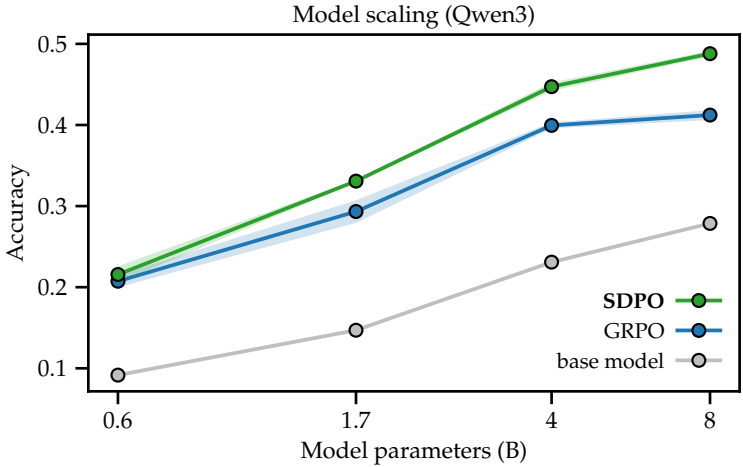

  \centering
  \vspace{-1ex}
  \incplt[0.7\textwidth]{model_scaling}
  \vspace{-2ex}
  \caption{\textbf{SDPO improves with model size.} We compare the final LCBv6 validation accuracy of SDPO and GRPO at train step 80, across model sizes from Qwen3.
  The ability of SDPO's teacher to perform accurate retrospection appears to be an emergent phenomenon with scale.
  We include an additional scaling study with Qwen2.5-Instruct in the appendix~(cf.~\cref{fig:model_scaling_qwen25}) which further supports this finding.
  Error bars indicate the standard error across 3 seeds.}
  \label{fig:model_scaling}
\end{figure}

A central question for our work is whether SDPO is sensitive to the in-context learning ability of the base model.
Intuitively, we expect that SDPO benefits from a strong in-context learner, since this enables the teacher to perform more accurate retrospection.

To answer this question, we perform a scaling study with different model sizes from the Qwen3~\citep{yang2025qwen3} family.
As shown by extensive prior work, the ability to learn in-context increases with model size~\citep[e.g.,][]{brown2020language}.
As depicted in \cref{fig:model_scaling}, SDPO significantly outperforms GRPO on larger models while only slightly improving over GRPO on smaller models.
To determine whether SDPO can also underperform GRPO on a model weaker than Qwen3-0.6B, we performed an additional scaling study with Qwen2.5-Instruct~\citep{qwen2024qwen25}.
While outperforming GRPO with Qwen2.5-7B and performing similarly with Qwen2.5-8B, we find that SDPO underperforms GRPO on Qwen2.5-1.5B, as seen in \cref{fig:model_scaling_qwen25} in \cref{sec:ablations}.

\begin{takeaway}
    Our results suggest that the marginal improvement of SDPO over GRPO is tightly coupled with the strength of the base model, and motivates future study on models stronger than Qwen3-8B.
    In the same way that in-context learning is an emergent phenomenon with scale, the self-teacher's ability to perform accurate retrospection in SDPO appears to be emergent with scale.\looseness=-1
\end{takeaway}

\subsection{Self-distillation performs dense credit assignment}

\begin{wrapfigure}{r}{0.6\textwidth}
  \vspace{-\baselineskip}
  \centering
  \vspace{-1ex}
  \includegraphics[width=0.6\textwidth]{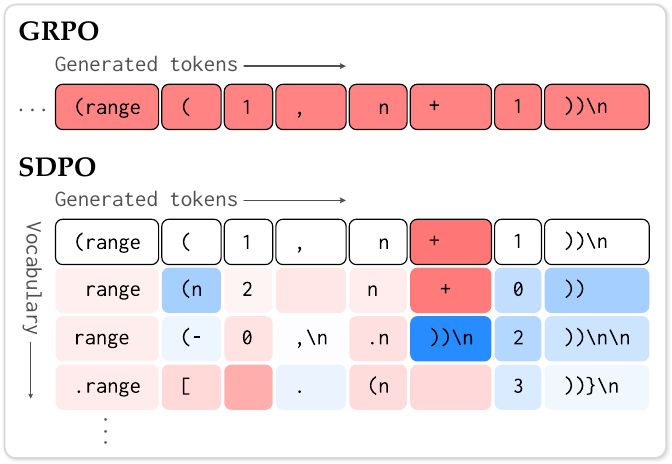}
  \vspace{-4ex}
  \caption{Dense credit assignment in SDPO in the example from \cref{fig:sdpo}. Shown in blue are tokens which become more likely under the self-teacher. The self-teacher identifies how the returned range has to be modified so that it does not contain \texttt{n}.}
  \label{fig:logit_level}
  \vspace{-1\baselineskip}
\end{wrapfigure}

Whereas GRPO assigns a constant advantage to each generated token, SDPO assigns an individual advantage to \emph{each possible next token} along the generated sequence based on the agreement of student and teacher.
At each position $t$ in the generated sequence $y$, there are $|\mathcal{V}|$ possible next tokens where $\mathcal{V}$ is the vocabulary.
In distillation, this level is typically called the \emph{logit-level} since it corresponds to the logits of the model.
In practice, we approximate the full next-token distribution by the top-$K$ tokens plus the tail, and as such, SDPO assigns $|y| \cdot (K+1)$ unique advantages per sequence.
This is illustrated in \cref{fig:logit_level} and allows SDPO to perform dense credit assignment.

A natural question is whether the performance gains of SDPO are due to leveraging rich feedback in RLRF or due to the dense credit assignment of SDPO.
To answer this question, we ablate the performance of SDPO in three configurations: \begin{itemize}
  \item \textbf{Logit-level SDPO:} credit assignment over the 100 most likely tokens (under the student) at each position.
  \item \textbf{Token-level SDPO:} credit assignment over the most likely token at each position.
  \item \textbf{Sequence-level SDPO:} We compute SDPO advantages for all generated tokens and average them to produce a single scalar advantage per sequence (as in GRPO). This does not perform denser credit assignment than GRPO but still leverages the rich feedback~$f$.
\end{itemize}
As shown in \figref{fig:teacher_bootstrap}{left}, the dense credit assignment of logit-level SDPO leads to significant performance gains over token-level SDPO and sequence-level SDPO.
Nevertheless, even sequence-level SDPO outperforms GRPO, indicating that leveraging rich feedback in RLRF can lead to substantial gains over RLVR methods even without dense credit assignment.

\subsection{The self-teacher improves during training}

\begin{figure}
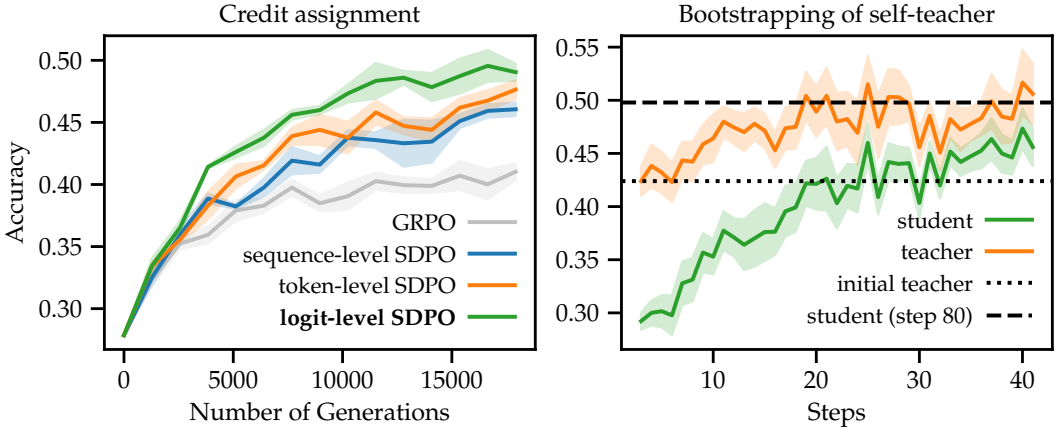

    \centering
    \vspace{-1ex}
    \incplt[\textwidth]{teacher_bootstrap}
    \vspace{-3ex}
    \caption{\textbf{Left:~Rich feedback in RLRF and dense credit assignment of SDPO are complementary.} We compare logit-level, token-level, and sequence-level SDPO advantages to GRPO. While denser credit assignment in SDPO is beneficial (logit-level > token-level > sequence-level), even sequence-level SDPO significantly outperforms GRPO due to leveraging the rich feedback. Error bars indicate the standard error across 3 seeds. \textbf{Right:~The self-teacher improves during training.} We display the generative accuracy of the self-teacher compared to student on the current training batch (with a rolling average over 5 steps). The final student score is taken at step 80. Notably, the performance of the student significantly surpasses the initial teacher's accuracy. Error bars indicate the standard deviation across 3 seeds.} %
    \label{fig:teacher_bootstrap}
\end{figure}

\begin{wraptable}{r}{0.5\textwidth}
  \vspace{-1.5\baselineskip}
  \centering
  \begin{tabular}{lcc}
  \toprule
  Teacher & Accuracy & Avg accuracy \\
  \midrule
  $q_\theta$ & $36.1 \pm 1.6$ & $29.8 \pm 1.3$ \\ %
  $q_\thetaref$ & $48.8 \pm 0.7$ & $44.4 \pm 0.2$ \\ %
  Trust-region & $\mathbf{50.6} \pm 0.9$ & $\mathbf{45.6} \pm 0.2$ \\
  EMA & $49.3 \pm 0.3$ & $\mathbf{45.3} \pm 0.2$ \\
  \bottomrule
  \end{tabular}
  \vspace{-0.5em}
  \caption{Best/average accuracy until step 90 of various methods for teacher regularization. Trust-region and EMA teachers use $\alpha=0.01$. Training of the $q_\theta$ eventually diverges. Error ranges indicate standard errors across 3 seeds.}
  \label{tab:teacher_regularization}
  \vspace{-1\baselineskip}
\end{wraptable}

Contrary to standard distillation, the self-teacher in SDPO is not frozen, but updated throughout training.
This is a critical component of SDPO, since it enables the teacher to improve over time, which means that the student can learn from a stronger target.
To investigate whether the self-teacher improves during training, we plot the average accuracy when \emph{generating} using the self-teacher in \figref{fig:teacher_bootstrap}{right}.
We find that the self-teacher improves significantly during training.
Most notably, the student's accuracy surpasses the initial teacher's accuracy in later stages of training.
This demonstrates that SDPO enables true bootstrapping of a weak model to a strong model, without the initial self-teacher's performance limiting the final student.

As described in \cref{sec:stability_improvements}, SDPO uses a regularized teacher to stabilize training.
As can be seen in \cref{tab:teacher_regularization}, a non-regularized teacher significantly underperforms the regularized teachers.
Furthermore, trust-region and EMA teachers outperform the teacher frozen at the initial teacher's parameters, showing that the teacher improves through parameter sharing with the student.
Yet, SDPO performs well even with a frozen teacher.

\subsection{On-policy self-distillation avoids catastrophic forgetting}

Prior work has shown that a key benefit of on-policy algorithms, such as GRPO, is that models tend not to forget previously obtained capabilities~\citep{shenfeld2025rl,chen2025retaining,lu2025onpolicydistillation}.
This is practically desirable since it enables continual training pipelines where a model is trained sequentially on diverse tasks without the need to retrain from scratch.
To evaluate forgetting, we test the final checkpoints of GRPO and SDPO on diverse holdout tasks: IFEval~\citep{zhou2023instruction}, which tests the ability of a model to follow precise format instructions; ArenaHard-v2~\citep{li2025crowdsourced}, which is an LLM-judged benchmark of real-world instruction-following prompts derived from LMArena~\citep{chiang2024chatbot}; and MMLU-Pro~\citep{wang2024mmlu}, which tests broad multi-task knowledge and reasoning.
As displayed in \cref{tab:forgetting}, SDPO learns the new task while mitigating degradation of initial capabilities, overall achieving a better performance--forgetting tradeoff than GRPO.

\paragraph{Off-policy self-distillation baseline.}

As an additional baseline, we consider training the student via supervised fine-tuning~(SFT) on successful generations from the self-teacher~\citep{scheurer2023training,dou2024re,zhou2025expo}.\smash{\footnotemark}\footnotetext{SFT on a teacher's predictions is a standard off-policy distillation approach~\citep{kim2016sequence}.}
This requires $2\times$ the generations of SDPO for the same number of steps, since we have to generate from both the student and the teacher.
We report SFT on the successes of the self-teacher, which achieves a higher accuracy than also including initial successes from the student in the SFT data.
As shown in \cref{tab:forgetting}, SFT on the self-teacher significantly underperforms SDPO on LCBv6, while leading to worse forgetting of prior capabilities.
This mirrors prior findings on the instability of off-policy imitation~\citep[see, e.g.,][]{agarwal2024policy}.

\begin{table}
\centering
{\setlength{\tabcolsep}{3.25pt}
\begin{tabular}{@{}l c c c c c c@{}}
  \toprule
   & \multicolumn{1}{c}{Task:} & \multicolumn{5}{c}{Holdout tasks:} \\
  \cmidrule(lr){2-2}\cmidrule(lr){3-7}
   & LCBv6 & IFEval & \makecell{\small ArenaHard-v2\\[-2pt]\small (hard prompt)}
   & \makecell{\small ArenaHard-v2\\[-2pt]\small (creative writing)} & MMLU-Pro & \makecell{\small \textbf{Avg.}\\[-2pt]\small (holdout)}\\
  \midrule
  Base & $27.9$ & ${83.9}$ & ${14.0}$ & ${13.7}$ & ${62.5}$ & ${43.5}$\\
  \midrule
  SFT on self-teacher & ${42.7}$ & 83.7 & 11.2 & 8.9 & 61.9 & 41.4 \\
  GRPO & $41.2$ & $82.2$ & $12.0$ & $10.8$ & $62.3$ & $41.8$ \\
  \textbf{SDPO} & ${48.8}$ & ${83.2}$ & ${12.3}$ & ${11.1}$ & ${62.9}$ & ${42.4}$ \\
  \bottomrule
\end{tabular}}
\caption{\textbf{On-policy methods do not suffer from catastrophic forgetting.} We compare the accuracy of the final checkpoint on the training task LCBv6 and on holdout tasks IFEval, ArenaHard-v2, and MMLU-Pro. We compare to a baseline that trains directly on responses generated by the initial self-teacher with SFT. Overall, SDPO achieves the best performance--forgetting tradeoff. We include additional baseline results in \cref{tab:baselines} in the appendix.
\looseness=-1
}
\label{tab:forgetting}
\end{table}

\subsection{Can GRPO and SDPO be combined?}

GRPO utilizes Monte Carlo advantages, which are unbiased with respect to the objective of maximizing expected reward~$J(\theta) := \smash{\E[y\sim\pi_\theta(\cdot \mid x)]{r(y \mid x)}}$.
In contrast, SDPO advantages are inherently biased with respect to $J(\theta)$ due to being computed from rich feedback and a self-teacher.
This dichotomy parallels the fundamental distinction between Monte Carlo and bootstrapped advantages in RL: while the latter are biased, they typically yield lower variance~\citep{sutton1998reinforcement,schulman2015high}.
This motivates a hybrid approach that combines reward-derived GRPO advantages with feedback-derived SDPO advantages: \begin{equation}
  A_{i,t}^{\mathrm{SDPO+GRPO}}(\hat{y}_{i,t}) := \lambda A_{i,t}^{\mathrm{GRPO}}(\hat{y}_{i,t}) + (1-\lambda) A_{i,t}^{\mathrm{SDPO}}(\hat{y}_{i,t}), \quad \lambda \in [0,1].
\end{equation}

\begin{wrapfigure}{r}{0.5\textwidth}
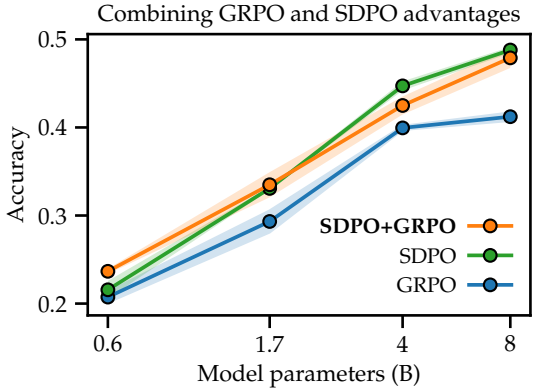

  \vspace{-4ex}
  \centering
  \incplt[0.5\textwidth]{gae_scaling}
  \vspace{-2.5ex}
  \caption{We compare the LCBv6 validation accuracy at step 80, across model sizes from Qwen3.
  SDPO+GRPO significantly outperforms SDPO on the weaker Qwen3-0.6B, while slightly underperforming SDPO on stronger models.
  We use $\lambda = 0.9$.
  Error bars indicate the standard error across 3 seeds.}
  \label{fig:gae_scaling}
  \vspace{-7ex}
\end{wrapfigure}

As shown in \cref{fig:gae_scaling}, SDPO+GRPO appears to be more robust to weaker models than SDPO.
Intuitively, in a weaker model such as Qwen3-0.6B, the SDPO advantages are less reliable, and hence including the GRPO advantage helps to stabilize training.
In contrast, we find that SDPO+GRPO slightly underperforms SDPO on stronger models such as Qwen3-8B.
This suggests that the signal of GRPO, only informed by a scalar reward, can be actively harmful with a strong initial model.

\subsection{Which feedback is most informative?}

To understand which type of rich feedback is most informative, we ablate the three types of feedback present in a verifiable environment like code generation: the sample solution (if a successful rollout is available in the current rollout group), the environment output (such as runtime errors), and the student's original attempt.

\paragraph{Sample solutions.}
Including a sample solution from a failed attempt's rollout group (if available) closely mirrors the group-relative advantages of GRPO.
We emphasize that these sample solutions are always generated by the student, as in GRPO, and do not require an expert model.
They allow for disincentivizing unsuccessful approaches if the model is already able to solve the question.
However, unlike GRPO where all tokens receive the same negative advantage, the self-teacher can identify specific mistakes and provide feedback on how to fix them.

\paragraph{Environment output.}
The environment output describes the state of the environment after the student's attempt.
This is complementary to sample solutions since it can provide useful signal even if the student has never solved the question before~(a setting we explore extensively in \cref{sec:ttt}).
Leveraging environment output is a key differentiating factor between RLRF and RLVR settings.

\paragraph{Student's original attempt.}
The student's original attempt~$y$ does not have to be included in the reprompting template of the teacher.
Indeed, we find that including it biases the teacher towards the student's attempt~(cf.~\cref{tab:prompt_templates}).
This reduces the entropy of the student's distribution (particularly for initially uncertain tokens), thereby reducing exploration.

\begin{table}[t]
\centering
{\setlength{\tabcolsep}{4pt}
\begin{tabular}{@{}l c c c c@{}}
  \toprule
   & \multicolumn{2}{c}{Teacher before training} & \multicolumn{2}{c}{Student trained with SDPO} \\
  \cmidrule(lr){2-3}\cmidrule(lr){4-5}
   & $\uparrow$ Acc. (\%) & $\downarrow$ Same output (\%) & $\uparrow$ Acc. (\%) & Avg.\ entropy \\
  \midrule
  {\small $f =$ output} & $32.5 \pm 0.5$ & $13.7 \pm 0.6$ & $39.9 \pm 1.1$ & $0.40 \pm 0.0$ \\
  {\small $f =$ own solution} & $\mathbf{42.4} \pm 1.0$ & $12.1 \pm 0.7$ & $42.6 \pm 1.3$ & $0.41 \pm 0.0$ \\
  {\small $f =$ output + own solution} & $\mathbf{42.5} \pm 1.2$ & $\mathbf{10.1} \pm 0.2$ & $\mathbf{48.3} \pm 1.4$ & $0.38 \pm 0.0$ \\
  {\small $f =$ $y$ + output + own solution} & $39.3 \pm 0.8$ & $30.0 \pm 0.9$ & $44.5 \pm 1.3$ & $\emph{0.23} \pm 0.0$ \\
  \bottomrule
\end{tabular}}
\caption{\textbf{Performance of varying kinds of feedback.} We evaluate informativeness of feedback based on SDPO training (until step 60) as well as the direct impact on the self-teacher. ``Same output'' measures the percentage of cases where the teacher receives the same environment output as the student's initial attempt (i.e., not exploring alternative approaches). We observe that environment output and sample solutions are complementary and each provide informative feedback. Naively including only solutions or initial attempts~$y$ significantly reduces diversity in the teacher and student. We remark that the sample solutions are generated by the student, enabling similar group-relative advantage estimation to GRPO. Error bars indicate standard deviation across 3 seeds.}
\label{tab:prompt_templates}
\end{table}

We summarize results in \cref{tab:prompt_templates} where we evaluate the effect on SDPO training as well as the direct impact on the self-teacher.
We find that environment output \& sample solutions are complementary, each providing informative feedback.
Generally, we observe that performance is not sensitive to syntactic variations of the reprompting template from \cref{tab:template}.
\looseness=-1

\section{Solving Hard Questions via Test-Time Self-Distillation}\label{sec:ttt}

In \cref{sec:gen_results,sec:results}, we have demonstrated that SDPO can substantially improve over RLVR methods when performing ``train-time RL'' for reasoning tasks.
We now turn to a test-time setting where the model is given only a single hard (binary-reward) question~$x$ and must discover a solution as quickly as possible:

\begin{definition}[Discovery time]
    The discovery time is the number of trials needed until a solution is found (i.e., %
    the smallest $k$ with the $k$-th attempt $y_k$ receiving reward 1).
\end{definition}
\vspace{-0.5\baselineskip}

Based on this notion, we
can define a measure of the efficacy of discovery: \begin{align}\begin{split}
    \mathrm{discovery@}k :=&\ \mathbb{P}(\text{discovery time $\leq k$}) \\
    =&\ \mathbb{P}(\text{$r(y_1 \mid x)=1$ or $r(y_2 \mid x)=1$ or \dots or $r(y_k \mid x)=1$}), %
\end{split}\end{align} where the probability is over any randomness in the algorithm producing $y_k$ and the rewards.
Thus, the discovery@$k$ metric quantifies the probability of
discovering the solution within $k$ steps.\smash{\footnotemark}\footnotetext{Our proposed discovery@$k$ metric is a canonical metric
in the study of runtime speedup (i.e., time until termination, \citet{dolan2002benchmarking}).}
While prior work has studied discovery with continuous rewards~\citep[e.g.,][]{novikov2025alphaevolve,yuksekgonul2026learning}, discovery with language models in sparse or binary-reward settings does not allow ``hill-climbing'' a continuous reward and has remained less well understood.\looseness=-1

The most naive approach to discovery in binary-reward tasks is to sample repeatedly i.i.d.~from the base model, also known as \textbf{best-of-$k$}.
The canonical pass@$k$ metric for best-of-$k$ sampling is exactly the probability of discovering at least one solution within $k$ independent samples from a fixed model, coinciding with discovery@$k$.
The discovery@$k$ metric generalizes pass@$k$ to algorithms that sample attempts sequentially.
A common sequential approach re-prompts the base model with additional context from previous attempts~\citep{madaan2023self,shinn2023reflexion}.
We refer to this as \textbf{multi-turn} sampling.
Here, the model itself does not change, only its context evolves over time.

Performing RLVR on the question~$x$ does not improve over best-of-$k$ sampling from the base model, since a binary reward provides no signal until the first solution has already been found.\smash{\footnotemark}\footnotetext{For this reason, several works consider explicitly constructing curricula of solvable questions~\citep[e.g.,][]{zhao2025absolute,huang2025r,diaz2025discover,hubotter2025learning}, which self-distillation avoids. Other work found that RLVR yields limited improvement on hard questions~\citep{yue2025does}.}
An RLRF method like SDPO does not face the same limitation, as it receives rich feedback from the environment after each attempt.
This rich feedback enables the model to repeatedly ``correct'' its mistakes as it encounters them and receives feedback, even before ever discovering a solution.
In contrast to multi-turn sampling, SDPO repeatedly compresses context~$c=(y_k,f_k)$ by distilling $\pi_\theta(\cdot \mid x, c)$ into a model $\pi_{\theta'}(\cdot \mid x)$ as we illustrate in \cref{fig:ttt_setting}.
This self-distillation enables SDPO to continually learn over long contexts, whereas the memory bottleneck of transformers inherently limits the context length of multi-turn sampling~\citep{vaswani2017attention}.
In this section, we seek to answer the question:\looseness=-1 \begin{center}
  Can repeatedly compressing context into model weights via self-distillation \\ accelerate discovery for hard questions?
\end{center}

\begin{figure}
    \centering
    \includegraphics[width=\textwidth]{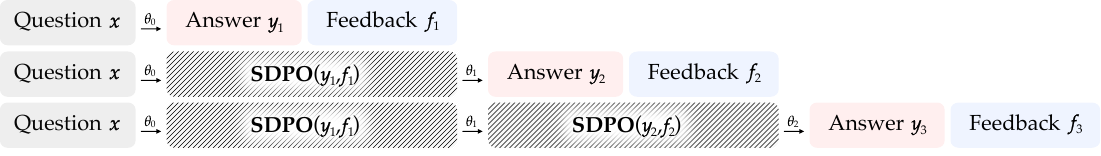}
    \caption{\textbf{Compressing context into model weights via self-distillation.}
    We illustrate the process of distilling the interaction history (context $c$) into the model parameters $\theta$.
    The model $\pi_{\theta}$ repeatedly attempts a fixed hard question $x$, generating an answer $y$ and receiving feedback $f$.
    Rather than appending this history to the context window, the model updates its weights $\theta_t \to \theta_{t+1}$ with SDPO (batch size $1$) based on the feedback, effectively ``fixing'' mistakes by encoding $\pi_{\theta}(\cdot \mid x, c)$ directly into the policy $\pi_{\theta'}(\cdot \mid x)$.}
    \label{fig:ttt_setting}
\end{figure}

\subsection{Experimental setting}

\newcommand\xmax{2750 }

We consider a particularly challenging subset of questions from LCBv6 that are at Qwen3-8B's performance ceiling and require significant test-time sampling to find any solution.
Concretely, we define two groups using Qwen3-8B's pass@$k$: \emph{Hard tasks} with ${\text{pass@}64 < 0.5}$ and \emph{very hard tasks} with $\text{pass@}64 < 0.03$.
Among these, we retain questions for which any of best-of-$k$, multi-turn, or SDPO find at least one solution within $512$ steps across $5$ seeds.
This results in 19 hard and 9 very hard questions.

For best-of-$k$ sampling under the base model, we report the standard $\text{pass}@k$ estimate~\citep{chen2021evaluating} from 2944 independent rollouts.
As multi-turn sampling, we sequentially reprompt the model in-context using the concatenated feedback from previous attempts. To remain within Qwen3-8B's 40k-token context limit, we employ a first-in, first-out sliding window, discarding the earliest feedback once the maximum prompt length (32k tokens) is reached.
We ablate the multi-turn reprompting strategy in~\cref{fig:ttt_ablations} in \cref{sec:ablations} and find that retaining only past feedback while forgetting earlier attempts significantly outperforms the baseline that additionally retains past attempts.
We evaluate SDPO with a batch size of 16. We ablate this choice in~\cref{fig:ttt_ablations} in \cref{sec:ablations} and find that overall performance differences are marginal, yet smaller batch sizes are beneficial for improvements at low generation budgets, while larger batch sizes result in more stable updates that still learn to solve questions at later stages into the run.

\subsection{Results}

\begin{figure}
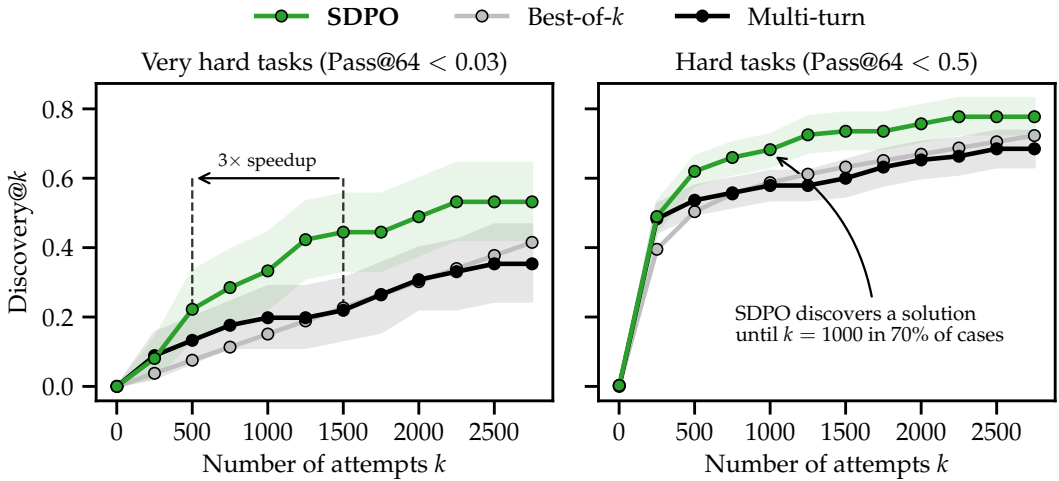

    \centering
    \vspace{-1ex}
    \incplt[\textwidth]{TTT_pass_at_k_curves}
    \vspace{-2ex}
    \caption{\textbf{Self-distillation at test-time solves LiveCodeBench questions that neither the base model nor multi-turn conversations can solve.}
    \textbf{Left:} Very hard questions (9 total) from LCBv6 where the base model achieves $\text{pass}@64 < 0.03$, i.e., in less than 3\% cases, sampling 64 responses yields any success.
    \textbf{Right:} Hard questions (19 total) from LCBv6 where the base model achieves $\text{pass}@64 < 0.5$.
    We report the $\text{discovery}@k$ metric, representing the probability of discovering at least one solution within $k$ total generations.
    Across both difficulty levels, SDPO achieves higher $\text{discovery}@k$ rates at almost all generation budgets, compared to the base model and a multi-turn conversation baseline that receives the feedback in-context. We report the mean and bootstrapped 90\% confidence intervals of the mean across 5 random seeds per question.}
    \label{fig:ttt}
\end{figure}

\Cref{fig:ttt} compares $\text{discovery}@k$ for SDPO, multi-turn sampling, and best-of-$k$ sampling on very hard (left) and hard (right) questions from LCBv6. Across both difficulty levels, SDPO achieves substantially higher $\text{discovery}@k$ rates at almost all generation budgets.

On very hard tasks, multi-turn and best-of-$k$ largely fail to solve questions within the available generation budget, achieving discovery@\xmax of only $35.6\%$ and ${41.5}\%$, respectively, whereas SDPO discovers a solution in ${53.2}\%$ of cases.
SDPO not only solves more questions overall but also does so with substantially fewer attempts.
Notably, to reach a $22\%$ discovery probability on very hard questions, SDPO requires approximately $3\times$ fewer generations than best-of-$k$ and multi-turn sampling.
On hard tasks, SDPO reaches a ${78}\%$ discovery@\xmax probability while achieving a $67\%$ discovery probability with roughly $2.4\times$ fewer generations than best-of-$k$ and multi-turn sampling. Overall, multi-turn and best-of-$k$ sampling solve only ${68.4}\%$ and ${72.3}\%$ of questions, respectively.
The context window length for multi-turn sampling is reached after 837 ($\pm 466$) steps for hard questions and after 1007 ($\pm 349$) steps for very hard questions, offering a possible explanation for its diminishing gains at high generation budgets.

\paragraph{Question 3 is only solved by SDPO.}

SDPO solves all questions that are solved by best-of-$k$ and multi-turn sampling. Beyond that, SDPO uniquely discovers a solution for Q3, which is neither solvable with multi-turn sampling nor with best-of-$k$ sampling within \xmax attempts. In contrast, SDPO first discovers a solution for Q3 after 321 attempts, which corresponds to 20 iteration steps of self-distillation based on feedback with a batch size of 16. We include detailed per-question results in \cref{tab:ttt} in \cref{sec:ablations}.

\paragraph{The initial self-teacher does not solve hard questions.}

Notably, the self-teacher's initial accuracy is $<1$\% for almost all questions, and even exactly $0$\% on $78$\% of them~(\cref{tab:ttt_reprompt_scores} in \cref{sec:ablations}).
This shows that a single turn of in-context feedback is insufficient to solve the problem.
Despite this, the self-teacher's credit assignment is sufficiently effective for SDPO to iteratively refine the policy and eventually solve these questions.

\begin{takeaway}
    We demonstrate that rich environment feedback enables SDPO to significantly accelerate discovery for hard questions.
    This is in contrast to RLVR methods, which only receive a binary reward signal, and therefore only begin learning once the first solution has already been found.
\end{takeaway}

\section{Related Work}\label{sec:related_work}

\subsection{Reinforcement Learning with LLMs}

Recently, large-scale RL training on diverse tasks has significantly improved the performance of LLMs on general reasoning tasks~\citep{guo2025deepseek,team2025kimi,olmo2025olmo,jaech2024openai,lambert2024tulu}.
This progress is primarily enabled by RLVR methods that use Monte Carlo estimates of rewards, such as STaR or GRPO~\citep{zelikman2024star,shao2024deepseekmath}, similar to the classical REINFORCE algorithm~\citep{williams1992simple}.
While several traditional RLVR algorithms rely on learning separate value networks~\citep{schulman2017proximal}, they incur substantial memory costs and retain the information bottleneck of scalar rewards.\looseness=-1

In the RLVR setting, it is common for an (outcome) reward to be given only at the end of a sequence.
To improve credit assignment, several works learn so-called process reward models~(PRMs) that estimate rewards for each step in the sequence~\citep{lightman2023let,wang2023math,setlur2024rewarding}.
Unlike our RLRF setting, PRMs are typically trained on scalar rewards, either on value estimates for intermediate states or on outcome rewards~\citep{cui2025process}.
Unlike the self-teacher in SDPO, PRMs are a distinct model from the student, introducing significant memory overhead.
Our work shows that \emph{each language model is implicitly a PRM} through retrospection if given rich feedback.

Conceptually, our work is related to ``bootstrapping your own latent''~\citep[BYOL;][]{grill2020bootstrap} and ``expert iteration''~\citep{anthony2017thinking} where a student is bootstrapped by repeatedly imitating an improved version of itself (called the ``expert'').
Canonically, the expert combines the student with test-time search, such as tree search~\citep{anthony2017thinking} or majority voting~\citep{zuo2025ttrl}.
In contrast, SDPO leverages the student's ability to learn from rich feedback provided in-context, which is related to ``augmented views'' in BYOL.\looseness=-1

\subsection{Learning from Rich Feedback and through Retrospection}

Beyond scalar outcome rewards, recent works have leveraged rich execution or verbal feedback to guide generation~\citep{gehring2024rlef,feng2024natural,yuksekgonul2025optimizing}.
A primary line of research focuses on translating verbal feedback into reward functions for RL.
This is often achieved by mapping feedback to discrete token-level rewards using an external frozen model~\citep{wang2025text2grad}, or by employing strong external LLMs to explicitly construct state-wise reward functions \citep{goyal2019using,xie2024text2reward,urcelay2025words}.\looseness=-1

Alternatively, feedback can be utilized without explicit reward modeling.
Several approaches focus on in-context improvement without integrating the process into the RL optimization loop \citep{chen2021decision,madaan2023self,shinn2023reflexion,yao2023retroformer,yuksekgonul2025optimizing,lee2025feedback}.
Others manually curate preference datasets by pairing responses before and after feedback to train with direct preference optimization~\citep{stephan2024rlvf,lee2024reinforcement}, though this requires additional generation and lacks the direct credit assignment of SDPO.
Various recent works bootstrap thinking traces from known answers, using these answers as rich feedback~\citep{zhou2025reinforcing,hatamizadeh2025rlp,zhang2025agent}.

A central object in several recent works is a feedback-conditioned policy $\pi_\theta(y \mid x, f)$, which learns answers $y$ that lead to feedback $f$~\citep{liu2023chain,zhang2023wisdom,luo2025language}, typically through supervised objectives.
The idea behind these approaches is to deploy a policy conditioned on desirable (i.e., positive) feedback for deployment.
This approach is conceptually related to goal-conditioned RL~\citep{schaul2015universal,liu2024single}, where one can learn from negative examples through goal relabeling~\citep{andrychowicz2017hindsight}.
Feedback-conditioned policies view feedback as a goal, whereas RLRF views feedback as a state that can be used to determine whether the goal $x$ is achieved.
Unlike SDPO, these methods do not use feedback for credit assignment in negative trajectories, but rather as a data transformation for goal relabeling.

\subsection{Distillation}

Distillation is frequently employed as an alternative to supervised fine-tuning (SFT) when a strong teacher model is available.
Distillation transfers capabilities by training a student to mimic the output distribution or intermediate representations of the teacher~\citep{hinton2015distilling,romero2014fitnets,kim2016sequence,sanh2019distilbert,xie2020self}.
While often performed on fixed off-policy datasets, to address the distribution shift between training and inference, recent works explore on-policy distillation, where the student learns from feedback on its own generations provided by an external teacher~\citep{agarwal2024policy,gu2023minillm,yang2025qwen3,lu2025onpolicydistillation}.
This mitigates the train-test mismatch, which relates closely to earlier work on online imitation learning~\citep{ross2011reduction}.\looseness=-1

\subsection{Self-Distillation}

The concept of self-distillation was first proposed by \cite{snell2022learning} in a setting akin to supervised learning, introducing the idea of sampling from a model provided with extra context and training the same model to mimic these predictions without that context.
This mechanism has proven effective for compressing behavior~\citep{bai2022constitutional,choi2022prompt,yang2024self,yang2025distilling} and factual information~\citep{eyuboglu2025cartridges,kujanpaa2024knowledge,cao2025infiniteicl} into model weights.
Beyond compressing a fixed context into model weights, recent works have used self-distillation to learn from environment feedback~\citep{scheurer2023training,dou2024re,zhou2025expo,mitra2025semantic,song2026expanding}.
These approaches use an \emph{off-policy} self-distillation objective, which we find to substantially underperform SDPO's on-policy learning.
Off-policy self-distillation trains the student on generations from the teacher, whereas SDPO trains the student to avoid mistakes in its own generations.
In concurrent work, \cite{chen2025retrospective} apply on-policy self-distillation to grid world settings where feedback is a scalar reward, and a reflection stage in the self-teacher diagnoses possible mistakes, showing improved credit assignment compared to learning value networks for advantage estimation.
Other concurrent work studies SDPO on a fixed dataset of expert demonstrations, without online environment interaction~\citep{shenfeld2026self,zhao2026self}.
\looseness=-1

\section{Conclusion, Limitations, and Future Work}

We introduced \textbf{Reinforcement Learning with Rich Feedback} (RLRF), a paradigm where environments provide tokenized feedback beyond scalar rewards, and argued that this removes a key information bottleneck of RLVR.
We then proposed \textbf{Self-Distillation Policy Optimization} (SDPO), which uses the current policy as a feedback-conditioned \emph{self-teacher} and distills its corrected log-probabilities into the student.
This leverages the model's ability to learn from context for dense credit assignment.
We further demonstrated that SDPO can be implemented as a minimal, drop-in modification to standard RLVR pipelines.

Empirically, SDPO demonstrates superior sample efficiency and wall-clock convergence compared to GRPO on reasoning tasks, even when training in standard RLVR environments without rich feedback.
SDPO's gains grow with model scale, suggesting that the capacity for self-correction scales with the model's in-context learning capabilities.
Moreover, we show that performing SDPO at test time on individual hard binary-reward tasks accelerates the discovery of solutions compared to strong baselines.

SDPO enables learning from rich feedback in a way that is arguably closer to human cognition: utilizing precise outcomes rather than just binary rewards.
By allowing the model to determine retrospectively how it should have acted, we demonstrate that language models can convert diverse tokenized feedback into effective self-supervision.

\paragraph{Limitations.}
Our findings show that SDPO's performance depends on a model's in-context learning ability, suggesting that SDPO is primarily applicable for RL-training stronger base models, while it can underperform GRPO on weaker models.
Moreover, performance depends on the quality of the environment feedback. If the environment provides uninformative or misleading feedback, a model may not be able to learn from it through SDPO.
Finally, SDPO adds a small computational overhead compared to GRPO for computing the log-probs of the retrospective model.
While often negligible, this may be a larger overhead for smaller models with shorter generation lengths, where generation time is comparatively small.\looseness=-1

\paragraph{Future Work.}
Our work highlights several exciting directions for future research:
\begin{itemize}
  \item \textbf{Long-horizon and agentic settings.}
  RLRF is particularly appealing when trajectories are long or expose information about intermediate states.
  Evaluating SDPO in agentic environments is a natural next step.
  \item \textbf{Training dynamics at scale.}
  Beyond our evaluation on LiveCodeBench, it would be particularly interesting to scale SDPO to large multi-task RL training runs and further study its scaling properties with frontier base models.
  \item \textbf{Beyond verifiable rewards.}
  While we focused on verifiable code generation, many tasks provide textual feedback without a ground-truth verifier.
  Investigating whether SDPO's retrospection mechanism can improve alignment in open-ended text generation or continuous-reward tasks remains an open empirical question.
  \item \textbf{Behavioral differences in reasoning.} We observed that SDPO induces qualitatively different reasoning patterns than GRPO, notably avoiding the latter's tendency toward verbosity and superficial reasoning.
  Future work should systematically study how individual aspects, such as the reprompt template, influence behavior.
\end{itemize}

\section*{Author Contributions}

\textbf{Jonas Hübotter} conceived of the project in summer 2025 and has been working on it full-time since then, leading the team.
Jonas proposed the conceptual framework of self-distillation for credit assignment with input from Lejs, implemented the algorithm with help from others, led the quantitative experiments on LCBv6, and led the writing of the paper.

\textbf{Frederike Lübeck} led the design of the code environment, led the design and evaluation of the TTT setting in \cref{sec:ttt} with input from Jonas, contributed to the project direction in discussions, and contributed significantly to the writing of the paper.

\textbf{Lejs Behric} noted the dense credit assignment of knowledge distillation with strong teacher models in discussions with Jonas, inspiring the idea of self-distillation. Further, Lejs led the evaluation of different teacher templates, co-led the development of a tool for qualitative analysis of runs with Marco and Daniel, helped implement parts of the algorithm, and contributed to the project direction in discussions.

\textbf{Anton Baumann} joined in December 2025 and led the evaluation of SDPO without rich feedback in \cref{sec:gen_results} with input from Jonas, and contributed to the writing of the paper.

\textbf{Marco Bagatella and Daniel Marta} co-led the development of a tool for qualitative analysis of runs with Lejs, contributed to the training infrastructure, and contributed to the project direction in discussions.

\textbf{Ido Hakimi} significantly contributed to the initial codebase and experimental setup, contributed early algorithmic ideas, and contributed to the project direction in discussions.

\textbf{Idan Shenfeld, Thomas Kleine Buening, Carlos Guestrin, and Andreas Krause} supported this project, with Idan and Carlos joining in December 2025. They made significant contributions to the project direction in discussions and gave valuable advice on our presentation. Thomas and Idan, in particular, significantly contributed to the development of core algorithmic ideas and design of experiments. Thomas further evaluated checkpoints on holdout benchmarks. Carlos suggested the qualitative analysis of reasoning traces in \cref{fig:qual_responses} and the presentation of TTT results in \cref{sec:ttt}. Andreas pointed out valuable connections to existing work in RL which shaped the direction of the project.\looseness=-1

\section*{Acknowledgments}

We would like to thank Akira Yoshiyama, Yassir Akram, Parnian Kassraie, Jonathan Thomm, Roman Vorushin, Afra Amini, Imanol Schlag, Yu Sun, and Moritz Hardt for helpful discussions.
We thank Eduard Durech for helpful conversations regarding the scaling of RL fine-tuning and for his technical guidance on distributed infrastructure and long-context optimization.
We are grateful to Ruixu Zhou from Tsinghua University \& the Tencent Hunyuan Team for pointing out an error in the initially derived gradient estimator.
Furthermore, we would like to thank Leander Diaz-Bone for supporting dataset generation.

This project was supported through the Swiss AI compute grant~a156 and, in part, compute grant~infra01.
JH was supported by the Swiss National Science Foundation under NCCR Automation, grant agreement 51NF40~180545.
FL and MB were supported by the ETH-MPI Center for Learning Systems.
TKB and IH were supported by an ETH AI Center Postdoctoral Fellowship.
DM was supported by the Knut and Alice Wallenberg Foundation.

\bibliography{sources}

\begin{thebibliography}{130}
\providecommand{\natexlab}[1]{#1}
\providecommand{\url}[1]{\texttt{#1}}
\expandafter\ifx\csname urlstyle\endcsname\relax
  \providecommand{\doi}[1]{doi: #1}\else
  \providecommand{\doi}{doi: \begingroup \urlstyle{rm}\Url}\fi

\bibitem[Agarwal et~al.(2024)Agarwal, Vieillard, Zhou, Stanczyk, Garea, Geist, and Bachem]{agarwal2024policy}
Rishabh Agarwal, Nino Vieillard, Yongchao Zhou, Piotr Stanczyk, Sabela~Ramos Garea, Matthieu Geist, and Olivier Bachem.
\newblock On-policy distillation of language models: Learning from self-generated mistakes.
\newblock In \emph{ICLR}, 2024.

\bibitem[Aky{\"u}rek et~al.(2025)Aky{\"u}rek, Damani, Zweiger, Qiu, Guo, Pari, Kim, and Andreas]{akyurek2025surprising}
Ekin Aky{\"u}rek, Mehul Damani, Adam Zweiger, Linlu Qiu, Han Guo, Jyothish Pari, Yoon Kim, and Jacob Andreas.
\newblock The surprising effectiveness of test-time training for few-shot learning.
\newblock In \emph{ICML}, 2025.

\bibitem[Amini et~al.(2025)Amini, Vieira, and Cotterell]{amini2025better}
Afra Amini, Tim Vieira, and Ryan Cotterell.
\newblock Better estimation of the kullback--leibler divergence between language models.
\newblock In \emph{NeurIPS}, 2025.

\bibitem[Andrychowicz et~al.(2017)Andrychowicz, Wolski, Ray, Schneider, Fong, Welinder, McGrew, Tobin, Abbeel, and Zaremba]{andrychowicz2017hindsight}
Marcin Andrychowicz, Filip Wolski, Alex Ray, Jonas Schneider, Rachel Fong, Peter Welinder, Bob McGrew, Josh Tobin, Pieter Abbeel, and Wojciech Zaremba.
\newblock Hindsight experience replay.
\newblock In \emph{NeurIPS}, 2017.

\bibitem[Anthony et~al.(2017)Anthony, Tian, and Barber]{anthony2017thinking}
Thomas Anthony, Zheng Tian, and David Barber.
\newblock Thinking fast and slow with deep learning and tree search.
\newblock In \emph{NeurIPS}, 2017.

\bibitem[Bai et~al.(2022)Bai, Kadavath, Kundu, Askell, Kernion, Jones, Chen, Goldie, Mirhoseini, McKinnon, et~al.]{bai2022constitutional}
Yuntao Bai, Saurav Kadavath, Sandipan Kundu, Amanda Askell, Jackson Kernion, Andy Jones, Anna Chen, Anna Goldie, Azalia Mirhoseini, Cameron McKinnon, et~al.
\newblock Constitutional ai: Harmlessness from ai feedback.
\newblock \emph{arXiv preprint arXiv:2212.08073}, 2022.

\bibitem[Behrouz et~al.(2025)Behrouz, Zhong, and Mirrokni]{behrouz2025titans}
Ali Behrouz, Peilin Zhong, and Vahab Mirrokni.
\newblock Titans: Learning to memorize at test time.
\newblock In \emph{NeurIPS}, 2025.

\bibitem[Berner et~al.(2019)Berner, Brockman, Chan, Cheung, Debiak, Dennison, Farhi, Fischer, Hashme, Hesse, et~al.]{berner2019dota}
Christopher Berner, Greg Brockman, Brooke Chan, Vicki Cheung, Przemys{\l}aw Debiak, Christy Dennison, David Farhi, Quirin Fischer, Shariq Hashme, Chris Hesse, et~al.
\newblock Dota 2 with large scale deep reinforcement learning.
\newblock \emph{arXiv preprint arXiv:1912.06680}, 2019.

\bibitem[Brown et~al.(2020)Brown, Mann, Ryder, Subbiah, Kaplan, Dhariwal, Neelakantan, Shyam, Sastry, Askell, et~al.]{brown2020language}
Tom~B. Brown, Benjamin Mann, Nick Ryder, Melanie Subbiah, Jared Kaplan, Prafulla Dhariwal, Arvind Neelakantan, Pranav Shyam, Girish Sastry, Amanda Askell, et~al.
\newblock Language models are few-shot learners.
\newblock \emph{arXiv preprint ArXiv:2005.14165}, 2020.

\bibitem[Cao et~al.(2025{\natexlab{a}})Cao, Cai, and Lam]{cao2025infiniteicl}
Bowen Cao, Deng Cai, and Wai Lam.
\newblock Infiniteicl: Breaking the limit of context window size via long short-term memory transformation.
\newblock In \emph{ACL}, 2025{\natexlab{a}}.

\bibitem[Cao et~al.(2025{\natexlab{b}})Cao, Zhang, Chang, and Precup]{cao2025scar}
Meng Cao, Shuyuan Zhang, Xiao-Wen Chang, and Doina Precup.
\newblock Scar: Shapley credit assignment for more efficient rlhf.
\newblock \emph{arXiv preprint arXiv:2505.20417}, 2025{\natexlab{b}}.

\bibitem[Chan et~al.(2024)Chan, Sun, Holt, and Van Der~Schaar]{chan2024dense}
Alex~J Chan, Hao Sun, Samuel Holt, and Mihaela Van Der~Schaar.
\newblock Dense reward for free in reinforcement learning from human feedback.
\newblock In \emph{ICML}, 2024.

\bibitem[Chen et~al.(2025{\natexlab{a}})Chen, Li, Gong, Jiang, Fei, Yang, Shan, Yu, Wang, Zhu, et~al.]{chen2025minimax}
Aili Chen, Aonian Li, Bangwei Gong, Binyang Jiang, Bo~Fei, Bo~Yang, Boji Shan, Changqing Yu, Chao Wang, Cheng Zhu, et~al.
\newblock Minimax-m1: Scaling test-time compute efficiently with lightning attention.
\newblock \emph{arXiv preprint arXiv:2506.13585}, 2025{\natexlab{a}}.

\bibitem[Chen et~al.(2022)Chen, Zhang, Nguyen, Zan, Lin, Lou, and Chen]{chen2022codet}
Bei Chen, Fengji Zhang, Anh Nguyen, Daoguang Zan, Zeqi Lin, Jian-Guang Lou, and Weizhu Chen.
\newblock Codet: Code generation with generated tests.
\newblock In \emph{ICLR}, 2022.

\bibitem[Chen et~al.(2025{\natexlab{b}})Chen, Razin, Narasimhan, and Chen]{chen2025retaining}
Howard Chen, Noam Razin, Karthik Narasimhan, and Danqi Chen.
\newblock Retaining by doing: The role of on-policy data in mitigating forgetting.
\newblock \emph{arXiv preprint arXiv:2510.18874}, 2025{\natexlab{b}}.

\bibitem[Chen et~al.(2021{\natexlab{a}})Chen, Lu, Rajeswaran, Lee, Grover, Laskin, Abbeel, Srinivas, and Mordatch]{chen2021decision}
Lili Chen, Kevin Lu, Aravind Rajeswaran, Kimin Lee, Aditya Grover, Misha Laskin, Pieter Abbeel, Aravind Srinivas, and Igor Mordatch.
\newblock Decision transformer: Reinforcement learning via sequence modeling.
\newblock In \emph{NeurIPS}, 2021{\natexlab{a}}.

\bibitem[Chen et~al.(2021{\natexlab{b}})Chen, Tworek, Jun, Yuan, Pinto, Kaplan, Edwards, Burda, Joseph, Brockman, et~al.]{chen2021evaluating}
Mark Chen, Jerry Tworek, Heewoo Jun, Qiming Yuan, Henrique Ponde De~Oliveira Pinto, Jared Kaplan, Harri Edwards, Yuri Burda, Nicholas Joseph, Greg Brockman, et~al.
\newblock Evaluating large language models trained on code.
\newblock \emph{arXiv preprint arXiv:2107.03374}, 2021{\natexlab{b}}.

\bibitem[Chen et~al.(2025{\natexlab{c}})Chen, Chen, Tajwar, Zhu, Duan, Salakhutdinov, and Schneider]{chen2025retrospective}
Wentse Chen, Jiayu Chen, Fahim Tajwar, Hao Zhu, Xintong Duan, Ruslan Salakhutdinov, and Jeff Schneider.
\newblock Retrospective in-context learning for temporal credit assignment with large language models.
\newblock In \emph{NeurIPS}, 2025{\natexlab{c}}.

\bibitem[Chiang et~al.(2024)Chiang, Zheng, Sheng, Angelopoulos, Li, Li, Zhu, Zhang, Jordan, Gonzalez, et~al.]{chiang2024chatbot}
Wei-Lin Chiang, Lianmin Zheng, Ying Sheng, Anastasios~Nikolas Angelopoulos, Tianle Li, Dacheng Li, Banghua Zhu, Hao Zhang, Michael Jordan, Joseph~E Gonzalez, et~al.
\newblock Chatbot arena: An open platform for evaluating llms by human preference.
\newblock In \emph{ICML}, 2024.

\bibitem[Choi et~al.(2022)Choi, Jo, Jang, and Seo]{choi2022prompt}
Eunbi Choi, Yongrae Jo, Joel Jang, and Minjoon Seo.
\newblock Prompt injection: Parameterization of fixed inputs.
\newblock \emph{arXiv preprint arXiv:2206.11349}, 2022.

\bibitem[Cui et~al.(2025)Cui, Yuan, Wang, Wang, Li, He, Fan, Yu, Xu, Chen, et~al.]{cui2025process}
Ganqu Cui, Lifan Yuan, Zefan Wang, Hanbin Wang, Wendi Li, Bingxiang He, Yuchen Fan, Tianyu Yu, Qixin Xu, Weize Chen, et~al.
\newblock Process reinforcement through implicit rewards.
\newblock \emph{arXiv preprint arXiv:2502.01456}, 2025.

\bibitem[Diaz-Bone et~al.(2025)Diaz-Bone, Bagatella, H{\"u}botter, and Krause]{diaz2025discover}
Leander Diaz-Bone, Marco Bagatella, Jonas H{\"u}botter, and Andreas Krause.
\newblock Discover: Automated curricula for sparse-reward reinforcement learning.
\newblock In \emph{NeurIPS}, 2025.

\bibitem[Dolan \& Mor{\'e}(2002)Dolan and Mor{\'e}]{dolan2002benchmarking}
Elizabeth~D Dolan and Jorge~J Mor{\'e}.
\newblock Benchmarking optimization software with performance profiles.
\newblock \emph{Mathematical programming}, 91\penalty0 (2), 2002.

\bibitem[Dou et~al.(2024)Dou, Yang, Wu, Chang, and Peng]{dou2024re}
Zi-Yi Dou, Cheng-Fu Yang, Xueqing Wu, Kai-Wei Chang, and Nanyun Peng.
\newblock Re-rest: Reflection-reinforced self-training for language agents.
\newblock In \emph{EMNLP}, 2024.

\bibitem[El-Kishky et~al.(2025)El-Kishky, Wei, Saraiva, Minaiev, Selsam, Dohan, Song, Lightman, Clavera, Pachocki, et~al.]{el2025competitive}
Ahmed El-Kishky, Alexander Wei, Andre Saraiva, Borys Minaiev, Daniel Selsam, David Dohan, Francis Song, Hunter Lightman, Ignasi Clavera, Jakub Pachocki, et~al.
\newblock Competitive programming with large reasoning models.
\newblock \emph{arXiv preprint arXiv:2502.06807}, 2025.

\bibitem[Eyuboglu et~al.(2026)Eyuboglu, Ehrlich, Arora, Guha, Zinsley, Liu, Tennien, Rudra, Zou, Mirhoseini, et~al.]{eyuboglu2025cartridges}
Sabri Eyuboglu, Ryan Ehrlich, Simran Arora, Neel Guha, Dylan Zinsley, Emily Liu, Will Tennien, Atri Rudra, James Zou, Azalia Mirhoseini, et~al.
\newblock Cartridges: Lightweight and general-purpose long context representations via self-study.
\newblock In \emph{ICLR}, 2026.

\bibitem[Feng et~al.(2024{\natexlab{a}})Feng, Ding, Wang, Zhuang, Wang, Qin, Zhao, Yao, Zhang, and Chen]{feng2024sciknoweval}
Kehua Feng, Keyan Ding, Weijie Wang, Xiang Zhuang, Zeyuan Wang, Ming Qin, Yu~Zhao, Jianhua Yao, Qiang Zhang, and Huajun Chen.
\newblock Sciknoweval: Evaluating multi-level scientific knowledge of large language models.
\newblock \emph{arXiv preprint arXiv:2406.09098}, 2024{\natexlab{a}}.

\bibitem[Feng et~al.(2024{\natexlab{b}})Feng, Liu, Song, Fu, Wan, Koushik, Hu, Yang, Wen, and Wang]{feng2024natural}
Xidong Feng, Bo~Liu, Yan Song, Haotian Fu, Ziyu Wan, Girish~A Koushik, Zhiyuan Hu, Mengyue Yang, Ying Wen, and Jun Wang.
\newblock Natural language reinforcement learning.
\newblock \emph{arXiv preprint arXiv:2411.14251}, 2024{\natexlab{b}}.

\bibitem[Gehring et~al.(2025)Gehring, Zheng, Copet, Mella, Carbonneaux, Cohen, and Synnaeve]{gehring2024rlef}
Jonas Gehring, Kunhao Zheng, Jade Copet, Vegard Mella, Quentin Carbonneaux, Taco Cohen, and Gabriel Synnaeve.
\newblock Rlef: Grounding code llms in execution feedback with reinforcement learning.
\newblock In \emph{ICML}, 2025.

\bibitem[Goyal et~al.(2019)Goyal, Niekum, and Mooney]{goyal2019using}
Prasoon Goyal, Scott Niekum, and Raymond~J Mooney.
\newblock Using natural language for reward shaping in reinforcement learning.
\newblock In \emph{IJCAI}, 2019.

\bibitem[Grill et~al.(2020)Grill, Strub, Altch{\'e}, Tallec, Richemond, Buchatskaya, Doersch, Avila~Pires, Guo, Gheshlaghi~Azar, et~al.]{grill2020bootstrap}
Jean-Bastien Grill, Florian Strub, Florent Altch{\'e}, Corentin Tallec, Pierre Richemond, Elena Buchatskaya, Carl Doersch, Bernardo Avila~Pires, Zhaohan Guo, Mohammad Gheshlaghi~Azar, et~al.
\newblock Bootstrap your own latent-a new approach to self-supervised learning.
\newblock In \emph{NeurIPS}, 2020.

\bibitem[Gu et~al.(2024)Gu, Dong, Wei, and Huang]{gu2023minillm}
Yuxian Gu, Li~Dong, Furu Wei, and Minlie Huang.
\newblock Minillm: Knowledge distillation of large language models.
\newblock 2024.

\bibitem[Guha et~al.(2026)Guha, Marten, Keh, Raoof, Smyrnis, Bansal, Nezhurina, Mercat, Vu, Sprague, et~al.]{guha2025openthoughts}
Etash Guha, Ryan Marten, Sedrick Keh, Negin Raoof, Georgios Smyrnis, Hritik Bansal, Marianna Nezhurina, Jean Mercat, Trung Vu, Zayne Sprague, et~al.
\newblock Openthoughts: Data recipes for reasoning models.
\newblock In \emph{ICLR}, 2026.

\bibitem[Guo et~al.(2025)Guo, Yang, Zhang, Song, Zhang, Xu, Zhu, Ma, Wang, Bi, et~al.]{guo2025deepseek}
Daya Guo, Dejian Yang, Haowei Zhang, Junxiao Song, Ruoyu Zhang, Runxin Xu, Qihao Zhu, Shirong Ma, Peiyi Wang, Xiao Bi, et~al.
\newblock Deepseek-r1: Incentivizing reasoning capability in llms via reinforcement learning.
\newblock \emph{arXiv preprint arXiv:2501.12948}, 2025.

\bibitem[Haarnoja et~al.(2018)Haarnoja, Zhou, Abbeel, and Levine]{haarnoja2018soft}
Tuomas Haarnoja, Aurick Zhou, Pieter Abbeel, and Sergey Levine.
\newblock Soft actor-critic: Off-policy maximum entropy deep reinforcement learning with a stochastic actor.
\newblock In \emph{ICML}, 2018.

\bibitem[Hardt \& Sun(2024)Hardt and Sun]{hardt2024test}
Moritz Hardt and Yu~Sun.
\newblock Test-time training on nearest neighbors for large language models.
\newblock In \emph{ICLR}, 2024.

\bibitem[Hatamizadeh et~al.(2026)Hatamizadeh, Akter, Prabhumoye, Kautz, Patwary, Shoeybi, Catanzaro, and Choi]{hatamizadeh2025rlp}
Ali Hatamizadeh, Syeda~Nahida Akter, Shrimai Prabhumoye, Jan Kautz, Mostofa Patwary, Mohammad Shoeybi, Bryan Catanzaro, and Yejin Choi.
\newblock Rlp: Reinforcement as a pretraining objective.
\newblock In \emph{ICLR}, 2026.

\bibitem[Hinton et~al.(2015)Hinton, Vinyals, and Dean]{hinton2015distilling}
Geoffrey Hinton, Oriol Vinyals, and Jeff Dean.
\newblock Distilling the knowledge in a neural network.
\newblock \emph{arXiv preprint arXiv:1503.02531}, 2015.

\bibitem[Huang et~al.(2026)Huang, Yu, Wang, Zhang, Li, Li, Huang, Mi, and Yu]{huang2025r}
Chengsong Huang, Wenhao Yu, Xiaoyang Wang, Hongming Zhang, Zongxia Li, Ruosen Li, Jiaxin Huang, Haitao Mi, and Dong Yu.
\newblock R-zero: Self-evolving reasoning llm from zero data.
\newblock In \emph{ICLR}, 2026.

\bibitem[H{\"u}botter et~al.(2026)H{\"u}botter, Wolf, Shevchenko, J{\"u}ni, Krause, and Kur]{hubotter2025specialization}
Jonas H{\"u}botter, Patrik Wolf, Alexander Shevchenko, Dennis J{\"u}ni, Andreas Krause, and Gil Kur.
\newblock Specialization after generalization: Towards understanding test-time training in foundation models.
\newblock In \emph{ICLR}, 2026.

\bibitem[Hübotter et~al.(2025{\natexlab{a}})Hübotter, Bongni, Hakimi, and Krause]{hubotter2024efficiently}
Jonas Hübotter, Sascha Bongni, Ido Hakimi, and Andreas Krause.
\newblock Efficiently learning at test-time: Active fine-tuning of llms.
\newblock In \emph{ICLR}, 2025{\natexlab{a}}.

\bibitem[Hübotter et~al.(2025{\natexlab{b}})Hübotter, Diaz-Bone, Hakimi, Krause, and Hardt]{hubotter2025learning}
Jonas Hübotter, Leander Diaz-Bone, Ido Hakimi, Andreas Krause, and Moritz Hardt.
\newblock Learning on the job: Test-time curricula for targeted reinforcement learning.
\newblock \emph{arXiv preprint arXiv:2510.04786}, 2025{\natexlab{b}}.

\bibitem[Jaech et~al.(2024)Jaech, Kalai, Lerer, Richardson, El-Kishky, Low, Helyar, Madry, Beutel, Carney, et~al.]{jaech2024openai}
Aaron Jaech, Adam Kalai, Adam Lerer, Adam Richardson, Ahmed El-Kishky, Aiden Low, Alec Helyar, Aleksander Madry, Alex Beutel, Alex Carney, et~al.
\newblock Openai o1 system card.
\newblock \emph{arXiv preprint arXiv:2412.16720}, 2024.

\bibitem[Jain et~al.(2025)Jain, Han, Gu, Li, Yan, Zhang, Wang, Solar-Lezama, Sen, and Stoica]{jain2024livecodebench}
Naman Jain, King Han, Alex Gu, Wen-Ding Li, Fanjia Yan, Tianjun Zhang, Sida Wang, Armando Solar-Lezama, Koushik Sen, and Ion Stoica.
\newblock Livecodebench: Holistic and contamination free evaluation of large language models for code.
\newblock In \emph{ICLR}, 2025.

\bibitem[Kaelbling et~al.(1998)Kaelbling, Littman, and Cassandra]{kaelbling1998planning}
Leslie~Pack Kaelbling, Michael~L Littman, and Anthony~R Cassandra.
\newblock Planning and acting in partially observable stochastic domains.
\newblock \emph{Artificial intelligence}, 101\penalty0 (1-2), 1998.

\bibitem[Kazemnejad et~al.(2025)Kazemnejad, Aghajohari, Portelance, Sordoni, Reddy, Courville, and Roux]{kazemnejad2024vineppo}
Amirhossein Kazemnejad, Milad Aghajohari, Eva Portelance, Alessandro Sordoni, Siva Reddy, Aaron Courville, and Nicolas~Le Roux.
\newblock Vineppo: Refining credit assignment in rl training of llms.
\newblock In \emph{ICML}, 2025.

\bibitem[Khatri et~al.(2026)Khatri, Madaan, Tiwari, Bansal, Duvvuri, Zaheer, Dhillon, Brandfonbrener, and Agarwal]{khatri2025art}
Devvrit Khatri, Lovish Madaan, Rishabh Tiwari, Rachit Bansal, Sai~Surya Duvvuri, Manzil Zaheer, Inderjit~S Dhillon, David Brandfonbrener, and Rishabh Agarwal.
\newblock The art of scaling reinforcement learning compute for llms.
\newblock In \emph{ICLR}, 2026.

\bibitem[Kim \& Rush(2016)Kim and Rush]{kim2016sequence}
Yoon Kim and Alexander~M Rush.
\newblock Sequence-level knowledge distillation.
\newblock In \emph{EMNLP}, 2016.

\bibitem[Kimi et~al.(2025)Kimi, Du, Gao, Xing, Jiang, Chen, Li, Xiao, Du, Liao, et~al.]{team2025kimi}
Kimi, Angang Du, Bofei Gao, Bowei Xing, Changjiu Jiang, Cheng Chen, Cheng Li, Chenjun Xiao, Chenzhuang Du, Chonghua Liao, et~al.
\newblock Kimi k1.5: Scaling reinforcement learning with llms.
\newblock \emph{arXiv preprint arXiv:2501.12599}, 2025.

\bibitem[Kujanp{\"a}{\"a} et~al.(2025)Kujanp{\"a}{\"a}, Marttinen, Valpola, and Ilin]{kujanpaa2024knowledge}
Kalle Kujanp{\"a}{\"a}, Pekka Marttinen, Harri Valpola, and Alexander Ilin.
\newblock Efficient knowledge injection in {LLM}s via self-distillation.
\newblock \emph{TMLR}, 2025.

\bibitem[Kwon et~al.(2023)Kwon, Li, Zhuang, Sheng, Zheng, Yu, Gonzalez, Zhang, and Stoica]{kwon2023efficient}
Woosuk Kwon, Zhuohan Li, Siyuan Zhuang, Ying Sheng, Lianmin Zheng, Cody~Hao Yu, Joseph~E. Gonzalez, Hao Zhang, and Ion Stoica.
\newblock Efficient memory management for large language model serving with pagedattention.
\newblock In \emph{PSIGOPS}, 2023.

\bibitem[Lambert et~al.(2025)Lambert, Morrison, Pyatkin, Huang, Ivison, Brahman, Miranda, Liu, Dziri, Lyu, et~al.]{lambert2024tulu}
Nathan Lambert, Jacob Morrison, Valentina Pyatkin, Shengyi Huang, Hamish Ivison, Faeze Brahman, Lester James~V Miranda, Alisa Liu, Nouha Dziri, Shane Lyu, et~al.
\newblock Tulu 3: Pushing frontiers in open language model post-training.
\newblock In \emph{COLM}, 2025.

\bibitem[Le et~al.(2022)Le, Wang, Gotmare, Savarese, and Hoi]{le2022coderl}
Hung Le, Yue Wang, Akhilesh~Deepak Gotmare, Silvio Savarese, and Steven Chu~Hong Hoi.
\newblock Coderl: Mastering code generation through pretrained models and deep reinforcement learning.
\newblock In \emph{NeurIPS}, 2022.

\bibitem[Lee et~al.(2024)Lee, Hwang, Park, Jang, and Lee]{lee2024reinforcement}
Kyungjae Lee, Dasol Hwang, Sunghyun Park, Youngsoo Jang, and Moontae Lee.
\newblock Reinforcement learning from reflective feedback (rlrf): Aligning and improving llms via fine-grained self-reflection.
\newblock \emph{arXiv preprint arXiv:2403.14238}, 2024.

\bibitem[Lee et~al.(2025)Lee, Boen, and Finn]{lee2025feedback}
Yoonho Lee, Joseph Boen, and Chelsea Finn.
\newblock Feedback descent: Open-ended text optimization via pairwise comparison.
\newblock \emph{arXiv preprint arXiv:2511.07919}, 2025.

\bibitem[Levine(2018)]{levine2018reinforcement}
Sergey Levine.
\newblock Reinforcement learning and control as probabilistic inference: Tutorial and review.
\newblock \emph{arXiv preprint arXiv:1805.00909}, 2018.

\bibitem[Li et~al.(2025{\natexlab{a}})Li, Chiang, Frick, Dunlap, Wu, Zhu, Gonzalez, and Stoica]{li2025crowdsourced}
Tianle Li, Wei-Lin Chiang, Evan Frick, Lisa Dunlap, Tianhao Wu, Banghua Zhu, Joseph~E Gonzalez, and Ion Stoica.
\newblock From crowdsourced data to high-quality benchmarks: Arena-hard and benchbuilder pipeline.
\newblock In \emph{ICML}, 2025{\natexlab{a}}.

\bibitem[Li et~al.(2025{\natexlab{b}})Li, Xu, Yu, Zhang, Chen, Ling, Chao, Yuan, and Zhou]{li2025generalist}
Yi-Chen Li, Tian Xu, Yang Yu, Xuqin Zhang, Xiong-Hui Chen, Zhongxiang Ling, Ningjing Chao, Lei Yuan, and Zhi-Hua Zhou.
\newblock Generalist reward models: Found inside large language models.
\newblock \emph{arXiv preprint arXiv:2506.23235}, 2025{\natexlab{b}}.

\bibitem[Lightman et~al.(2023)Lightman, Kosaraju, Burda, Edwards, Baker, Lee, Leike, Schulman, Sutskever, and Cobbe]{lightman2023let}
Hunter Lightman, Vineet Kosaraju, Yuri Burda, Harrison Edwards, Bowen Baker, Teddy Lee, Jan Leike, John Schulman, Ilya Sutskever, and Karl Cobbe.
\newblock Let's verify step by step.
\newblock In \emph{ICLR}, 2023.

\bibitem[Liu et~al.(2025{\natexlab{a}})Liu, Tang, and Eysenbach]{liu2024single}
Grace Liu, Michael Tang, and Benjamin Eysenbach.
\newblock A single goal is all you need: Skills and exploration emerge from contrastive rl without rewards, demonstrations, or subgoals.
\newblock In \emph{ICLR}, 2025{\natexlab{a}}.

\bibitem[Liu et~al.(2023)Liu, Sferrazza, and Abbeel]{liu2023chain}
Hao Liu, Carmelo Sferrazza, and Pieter Abbeel.
\newblock Chain of hindsight aligns language models with feedback.
\newblock \emph{arXiv preprint arXiv:2302.02676}, 2023.

\bibitem[Liu et~al.(2025{\natexlab{b}})Liu, Chen, Li, Qi, Pang, Du, Lee, and Lin]{liu2025understanding}
Zichen Liu, Changyu Chen, Wenjun Li, Penghui Qi, Tianyu Pang, Chao Du, Wee~Sun Lee, and Min Lin.
\newblock Understanding r1-zero-like training: A critical perspective.
\newblock In \emph{COLM}, 2025{\natexlab{b}}.

\bibitem[Lu \& {Thinking Machines Lab}(2025)Lu and {Thinking Machines Lab}]{lu2025onpolicydistillation}
Kevin Lu and {Thinking Machines Lab}.
\newblock On-policy distillation.
\newblock \emph{Thinking Machines Lab: Connectionism}, 2025.
\newblock URL \url{https://thinkingmachines.ai/blog/on-policy-distillation}.

\bibitem[Luo et~al.(2025)Luo, Liu, Liu, Du, Lin, Chen, Lu, and Pang]{luo2025language}
Renjie Luo, Zichen Liu, Xiangyan Liu, Chao Du, Min Lin, Wenhu Chen, Wei Lu, and Tianyu Pang.
\newblock Language models can learn from verbal feedback without scalar rewards.
\newblock \emph{arXiv preprint arXiv:2509.22638}, 2025.

\bibitem[Madaan et~al.(2023)Madaan, Tandon, Gupta, Hallinan, Gao, Wiegreffe, Alon, Dziri, Prabhumoye, Yang, et~al.]{madaan2023self}
Aman Madaan, Niket Tandon, Prakhar Gupta, Skyler Hallinan, Luyu Gao, Sarah Wiegreffe, Uri Alon, Nouha Dziri, Shrimai Prabhumoye, Yiming Yang, et~al.
\newblock Self-refine: Iterative refinement with self-feedback.
\newblock In \emph{NeurIPS}, 2023.

\bibitem[Mitra \& Ulukus(2025)Mitra and Ulukus]{mitra2025semantic}
Purbesh Mitra and Sennur Ulukus.
\newblock Semantic soft bootstrapping: Long context reasoning in llms without reinforcement learning.
\newblock \emph{arXiv preprint arXiv:2512.05105}, 2025.

\bibitem[Mnih et~al.(2015)Mnih, Kavukcuoglu, Silver, Rusu, Veness, Bellemare, Graves, Riedmiller, Fidjeland, Ostrovski, et~al.]{atari}
Volodymyr Mnih, Koray Kavukcuoglu, David Silver, Andrei~A. Rusu, Joel Veness, Marc~G. Bellemare, Alex Graves, Martin Riedmiller, Andreas~K. Fidjeland, Georg Ostrovski, et~al.
\newblock Human-level control through deep reinforcement learning.
\newblock \emph{Nature}, 518\penalty0 (7540), 2015.

\bibitem[Muennighoff et~al.(2025)Muennighoff, Yang, Shi, Li, Fei-Fei, Hajishirzi, Zettlemoyer, Liang, Cand{\`e}s, and Hashimoto]{muennighoff2025s1}
Niklas Muennighoff, Zitong Yang, Weijia Shi, Xiang~Lisa Li, Li~Fei-Fei, Hannaneh Hajishirzi, Luke Zettlemoyer, Percy Liang, Emmanuel Cand{\`e}s, and Tatsunori~B Hashimoto.
\newblock s1: Simple test-time scaling.
\newblock In \emph{EMNLP}, 2025.

\bibitem[Ng et~al.(2000)Ng, Russell, et~al.]{ng2000algorithms}
Andrew~Y Ng, Stuart Russell, et~al.
\newblock Algorithms for inverse reinforcement learning.
\newblock In \emph{ICML}, 2000.

\bibitem[Novikov et~al.(2025)Novikov, V{\~u}, Eisenberger, Dupont, Huang, Wagner, Shirobokov, Kozlovskii, Ruiz, Mehrabian, et~al.]{novikov2025alphaevolve}
Alexander Novikov, Ng{\^a}n V{\~u}, Marvin Eisenberger, Emilien Dupont, Po-Sen Huang, Adam~Zsolt Wagner, Sergey Shirobokov, Borislav Kozlovskii, Francisco~JR Ruiz, Abbas Mehrabian, et~al.
\newblock Alphaevolve: A coding agent for scientific and algorithmic discovery.
\newblock \emph{arXiv preprint arXiv:2506.13131}, 2025.

\bibitem[Olmo et~al.(2025)Olmo, Ettinger, Bertsch, Kuehl, Graham, Heineman, Groeneveld, Brahman, Timbers, Ivison, et~al.]{olmo2025olmo}
Team Olmo, Allyson Ettinger, Amanda Bertsch, Bailey Kuehl, David Graham, David Heineman, Dirk Groeneveld, Faeze Brahman, Finbarr Timbers, Hamish Ivison, et~al.
\newblock Olmo 3.
\newblock \emph{arXiv preprint arXiv:2512.13961}, 2025.

\bibitem[Peng et~al.(2019)Peng, Kumar, Zhang, and Levine]{peng2019advantage}
Xue~Bin Peng, Aviral Kumar, Grace Zhang, and Sergey Levine.
\newblock Advantage-weighted regression: Simple and scalable off-policy reinforcement learning.
\newblock \emph{arXiv preprint arXiv:1910.00177}, 2019.

\bibitem[Qwen et~al.(2024)Qwen, Yang, Yang, Zhang, Hui, Zheng, Yu, Li, Liu, Huang, et~al.]{qwen2024qwen25}
Qwen, An~Yang, Baosong Yang, Beichen Zhang, Binyuan Hui, Bo~Zheng, Bowen Yu, Chengyuan Li, Dayiheng Liu, Fei Huang, et~al.
\newblock Qwen2.5 technical report.
\newblock \emph{arXiv preprint arXiv:2412.15115}, 2024.

\bibitem[Rafailov et~al.(2023)Rafailov, Sharma, Mitchell, Manning, Ermon, and Finn]{rafailov2023direct}
Rafael Rafailov, Archit Sharma, Eric Mitchell, Christopher~D Manning, Stefano Ermon, and Chelsea Finn.
\newblock Direct preference optimization: Your language model is secretly a reward model.
\newblock In \emph{NeurIPS}, 2023.

\bibitem[Romero et~al.(2015)Romero, Ballas, Kahou, Chassang, Gatta, and Bengio]{romero2014fitnets}
Adriana Romero, Nicolas Ballas, Samira~Ebrahimi Kahou, Antoine Chassang, Carlo Gatta, and Yoshua Bengio.
\newblock Fitnets: Hints for thin deep nets.
\newblock In \emph{ICLR}, 2015.

\bibitem[Ross et~al.(2011)Ross, Gordon, and Bagnell]{ross2011reduction}
St{\'e}phane Ross, Geoffrey Gordon, and Drew Bagnell.
\newblock A reduction of imitation learning and structured prediction to no-regret online learning.
\newblock In \emph{AISTATS}, 2011.

\bibitem[Samadi et~al.(2025)Samadi, Ficek, Narenthiran, Jain, Ahmad, Majumdar, Noroozi, and Ginsburg]{samadi2025scaling}
Mehrzad Samadi, Aleksander Ficek, Sean Narenthiran, Siddhartha Jain, Wasi~Uddin Ahmad, Somshubra Majumdar, Vahid Noroozi, and Boris Ginsburg.
\newblock Scaling test-time compute to achieve ioi gold medal with open-weight models.
\newblock \emph{arXiv preprint arXiv:2510.14232}, 2025.

\bibitem[Sanh et~al.(2019)Sanh, Debut, Chaumond, and Wolf]{sanh2019distilbert}
Victor Sanh, Lysandre Debut, Julien Chaumond, and Thomas Wolf.
\newblock Distilbert, a distilled version of bert: smaller, faster, cheaper and lighter.
\newblock \emph{arXiv preprint arXiv:1910.01108}, 2019.

\bibitem[Schaul et~al.(2015)Schaul, Horgan, Gregor, and Silver]{schaul2015universal}
Tom Schaul, Daniel Horgan, Karol Gregor, and David Silver.
\newblock Universal value function approximators.
\newblock In \emph{ICML}, 2015.

\bibitem[Scheurer et~al.(2023)Scheurer, Campos, Korbak, Chan, Chen, Cho, and Perez]{scheurer2023training}
J{\'e}r{\'e}my Scheurer, Jon~Ander Campos, Tomasz Korbak, Jun~Shern Chan, Angelica Chen, Kyunghyun Cho, and Ethan Perez.
\newblock Training language models with language feedback at scale.
\newblock \emph{arXiv preprint arXiv:2303.16755}, 2023.

\bibitem[Schulman et~al.(2015)Schulman, Levine, Abbeel, Jordan, and Moritz]{schulman2015trust}
John Schulman, Sergey Levine, Pieter Abbeel, Michael Jordan, and Philipp Moritz.
\newblock Trust region policy optimization.
\newblock In \emph{ICML}, 2015.

\bibitem[Schulman et~al.(2016)Schulman, Moritz, Levine, Jordan, and Abbeel]{schulman2015high}
John Schulman, Philipp Moritz, Sergey Levine, Michael Jordan, and Pieter Abbeel.
\newblock High-dimensional continuous control using generalized advantage estimation.
\newblock In \emph{ICLR}, 2016.

\bibitem[Schulman et~al.(2017)Schulman, Wolski, Dhariwal, Radford, and Klimov]{schulman2017proximal}
John Schulman, Filip Wolski, Prafulla Dhariwal, Alec Radford, and Oleg Klimov.
\newblock Proximal policy optimization algorithms.
\newblock \emph{arXiv preprint arXiv:1707.06347}, 2017.

\bibitem[Setlur et~al.(2025)Setlur, Nagpal, Fisch, Geng, Eisenstein, Agarwal, Agarwal, Berant, and Kumar]{setlur2024rewarding}
Amrith Setlur, Chirag Nagpal, Adam Fisch, Xinyang Geng, Jacob Eisenstein, Rishabh Agarwal, Alekh Agarwal, Jonathan Berant, and Aviral Kumar.
\newblock Rewarding progress: Scaling automated process verifiers for llm reasoning.
\newblock In \emph{ICLR}, 2025.

\bibitem[Shao et~al.(2024)Shao, Wang, Zhu, Xu, Song, Bi, Zhang, Zhang, Li, Wu, et~al.]{shao2024deepseekmath}
Zhihong Shao, Peiyi Wang, Qihao Zhu, Runxin Xu, Junxiao Song, Xiao Bi, Haowei Zhang, Mingchuan Zhang, YK~Li, Yang Wu, et~al.
\newblock Deepseekmath: Pushing the limits of mathematical reasoning in open language models.
\newblock \emph{arXiv preprint arXiv:2402.03300}, 2024.

\bibitem[Shenfeld et~al.(2026{\natexlab{a}})Shenfeld, Damani, H{\"u}botter, and Agrawal]{shenfeld2026self}
Idan Shenfeld, Mehul Damani, Jonas H{\"u}botter, and Pulkit Agrawal.
\newblock Self-distillation enables continual learning.
\newblock \emph{arXiv preprint arXiv:2601.19897}, 2026{\natexlab{a}}.

\bibitem[Shenfeld et~al.(2026{\natexlab{b}})Shenfeld, Pari, and Agrawal]{shenfeld2025rl}
Idan Shenfeld, Jyothish Pari, and Pulkit Agrawal.
\newblock Rl's razor: Why online reinforcement learning forgets less.
\newblock In \emph{ICLR}, 2026{\natexlab{b}}.

\bibitem[Sheng et~al.(2025)Sheng, Zhang, Ye, Wu, Zhang, Zhang, Peng, Lin, and Wu]{sheng2024hybridflow}
Guangming Sheng, Chi Zhang, Zilingfeng Ye, Xibin Wu, Wang Zhang, Ru~Zhang, Yanghua Peng, Haibin Lin, and Chuan Wu.
\newblock Hybridflow: A flexible and efficient rlhf framework.
\newblock In \emph{EuroSys}, 2025.

\bibitem[Shinn et~al.(2023)Shinn, Cassano, Gopinath, Narasimhan, and Yao]{shinn2023reflexion}
Noah Shinn, Federico Cassano, Ashwin Gopinath, Karthik Narasimhan, and Shunyu Yao.
\newblock Reflexion: Language agents with verbal reinforcement learning.
\newblock In \emph{NeurIPS}, 2023.

\bibitem[Silver et~al.(2016)Silver, Huang, Maddison, Guez, Sifre, van~den Driessche, Schrittwieser, Antonoglou, Panneershelvam, Lanctot, et~al.]{alphago}
David Silver, Aja Huang, Chris~J. Maddison, Arthur Guez, Laurent Sifre, George van~den Driessche, Julian Schrittwieser, Ioannis Antonoglou, Veda Panneershelvam, Marc Lanctot, et~al.
\newblock Mastering the game of go with deep neural networks and tree search.
\newblock \emph{Nature}, 529\penalty0 (7587), 2016.

\bibitem[Silver et~al.(2017)Silver, Hubert, Schrittwieser, Antonoglou, Lai, Guez, Lanctot, Sifre, Kumaran, Graepel, et~al.]{silver2017mastering}
David Silver, Thomas Hubert, Julian Schrittwieser, Ioannis Antonoglou, Matthew Lai, Arthur Guez, Marc Lanctot, Laurent Sifre, Dharshan Kumaran, Thore Graepel, et~al.
\newblock Mastering chess and shogi by self-play with a general reinforcement learning algorithm.
\newblock \emph{arXiv preprint arXiv:1712.01815}, 2017.

\bibitem[Snell et~al.(2022)Snell, Klein, and Zhong]{snell2022learning}
Charlie Snell, Dan Klein, and Ruiqi Zhong.
\newblock Learning by distilling context.
\newblock \emph{arXiv preprint arXiv:2209.15189}, 2022.

\bibitem[Song et~al.(2026)Song, Chen, Tajwar, Munos, Pathak, Bagnell, Singh, and Zanette]{song2026expanding}
Yuda Song, Lili Chen, Fahim Tajwar, Remi Munos, Deepak Pathak, J~Andrew Bagnell, Aarti Singh, and Andrea Zanette.
\newblock Expanding the capabilities of reinforcement learning via text feedback.
\newblock \emph{arXiv preprint arXiv:2602.02482}, 2026.

\bibitem[Stephan et~al.(2024)Stephan, Khazatsky, Mitchell, Chen, Hsu, Sharma, and Finn]{stephan2024rlvf}
Moritz Stephan, Alexander Khazatsky, Eric Mitchell, Annie~S Chen, Sheryl Hsu, Archit Sharma, and Chelsea Finn.
\newblock Rlvf: Learning from verbal feedback without overgeneralization.
\newblock In \emph{ICML}, 2024.

\bibitem[Sun et~al.(2020)Sun, Wang, Liu, Miller, Efros, and Hardt]{sun2020test}
Yu~Sun, Xiaolong Wang, Zhuang Liu, John Miller, Alexei Efros, and Moritz Hardt.
\newblock Test-time training with self-supervision for generalization under distribution shifts.
\newblock In \emph{ICML}, 2020.

\bibitem[Sun et~al.(2025)Sun, Li, Dalal, Xu, Vikram, Zhang, Dubois, Chen, Wang, Koyejo, et~al.]{sun2024learning}
Yu~Sun, Xinhao Li, Karan Dalal, Jiarui Xu, Arjun Vikram, Genghan Zhang, Yann Dubois, Xinlei Chen, Xiaolong Wang, Sanmi Koyejo, et~al.
\newblock Learning to (learn at test time): Rnns with expressive hidden states.
\newblock In \emph{ICML}, 2025.

\bibitem[Sutton \& Barto(1998)Sutton and Barto]{sutton1998reinforcement}
Richard~S Sutton and Andrew~G Barto.
\newblock \emph{Reinforcement learning: An introduction}.
\newblock MIT press, 1998.

\bibitem[Tandon et~al.(2025)Tandon, Dalal, Li, Koceja, R{\o}d, Buchanan, Wang, Leskovec, Koyejo, Hashimoto, et~al.]{tandon2025end}
Arnuv Tandon, Karan Dalal, Xinhao Li, Daniel Koceja, Marcel R{\o}d, Sam Buchanan, Xiaolong Wang, Jure Leskovec, Sanmi Koyejo, Tatsunori Hashimoto, et~al.
\newblock End-to-end test-time training for long context.
\newblock \emph{arXiv preprint arXiv:2512.23675}, 2025.

\bibitem[Tang et~al.(2023)Tang, Deng, Lin, Han, Liang, Cao, and Sun]{tang2023toolalpaca}
Qiaoyu Tang, Ziliang Deng, Hongyu Lin, Xianpei Han, Qiao Liang, Boxi Cao, and Le~Sun.
\newblock Toolalpaca: Generalized tool learning for language models with 3000 simulated cases.
\newblock \emph{arXiv preprint arXiv:2306.05301}, 2023.

\bibitem[Urcelay et~al.(2026)Urcelay, Krause, and Ramponi]{urcelay2025words}
Belen~Martin Urcelay, Andreas Krause, and Giorgia Ramponi.
\newblock From words to rewards: Leveraging natural language for reinforcement learning.
\newblock In \emph{TMLR}, 2026.

\bibitem[Vaswani et~al.(2017)Vaswani, Shazeer, Parmar, Uszkoreit, Jones, Gomez, Kaiser, and Polosukhin]{vaswani2017attention}
Ashish Vaswani, Noam Shazeer, Niki Parmar, Jakob Uszkoreit, Llion Jones, Aidan~N. Gomez, Lukasz Kaiser, and Illia Polosukhin.
\newblock Attention is all you need.
\newblock In \emph{NeurIPS}, 2017.

\bibitem[Wang et~al.(2026)Wang, Wang, Zhang, Mao, Qin, Lin, Rajmohan, and Zhang]{wang2025text2grad}
Hanyang Wang, Lu~Wang, Chaoyun Zhang, Tianjun Mao, Si~Qin, Qingwei Lin, Saravan Rajmohan, and Dongmei Zhang.
\newblock Text2grad: Reinforcement learning from natural language feedback.
\newblock In \emph{ICLR}, 2026.

\bibitem[Wang et~al.(2024{\natexlab{a}})Wang, Li, Shao, Xu, Dai, Li, Chen, Wu, and Sui]{wang2023math}
Peiyi Wang, Lei Li, Zhihong Shao, RX~Xu, Damai Dai, Yifei Li, Deli Chen, Yu~Wu, and Zhifang Sui.
\newblock Math-shepherd: Verify and reinforce llms step-by-step without human annotations.
\newblock In \emph{ACL}, 2024{\natexlab{a}}.

\bibitem[Wang et~al.(2025)Wang, Yu, Gao, Zheng, Liu, Lu, Dang, Chen, Yang, Zhang, et~al.]{wang2025beyond}
Shenzhi Wang, Le~Yu, Chang Gao, Chujie Zheng, Shixuan Liu, Rui Lu, Kai Dang, Xionghui Chen, Jianxin Yang, Zhenru Zhang, et~al.
\newblock Beyond the 80/20 rule: High-entropy minority tokens drive effective reinforcement learning for llm reasoning.
\newblock In \emph{NeurIPS}, 2025.

\bibitem[Wang et~al.(2024{\natexlab{b}})Wang, Ma, Zhang, Ni, Chandra, Guo, Ren, Arulraj, He, Jiang, et~al.]{wang2024mmlu}
Yubo Wang, Xueguang Ma, Ge~Zhang, Yuansheng Ni, Abhranil Chandra, Shiguang Guo, Weiming Ren, Aaran Arulraj, Xuan He, Ziyan Jiang, et~al.
\newblock Mmlu-pro: A more robust and challenging multi-task language understanding benchmark.
\newblock In \emph{NeurIPS}, 2024{\natexlab{b}}.

\bibitem[Wei et~al.(2022)Wei, Wang, Schuurmans, Bosma, Xia, Chi, Le, Zhou, et~al.]{wei2022chain}
Jason Wei, Xuezhi Wang, Dale Schuurmans, Maarten Bosma, Fei Xia, Ed~Chi, Quoc~V Le, Denny Zhou, et~al.
\newblock Chain-of-thought prompting elicits reasoning in large language models.
\newblock In \emph{NeurIPS}, 2022.

\bibitem[Williams(1992)]{williams1992simple}
Ronald~J Williams.
\newblock Simple statistical gradient-following algorithms for connectionist reinforcement learning.
\newblock \emph{Machine learning}, 8\penalty0 (3), 1992.

\bibitem[Xie et~al.(2020)Xie, Luong, Hovy, and Le]{xie2020self}
Qizhe Xie, Minh-Thang Luong, Eduard Hovy, and Quoc~V Le.
\newblock Self-training with noisy student improves imagenet classification.
\newblock In \emph{CVPR}, 2020.

\bibitem[Xie et~al.(2024)Xie, Zhao, Wu, Liu, Luo, Zhong, Yang, and Yu]{xie2024text2reward}
Tianbao Xie, Siheng Zhao, Chen~Henry Wu, Yitao Liu, Qian Luo, Victor Zhong, Yanchao Yang, and Tao Yu.
\newblock Text2reward: Reward shaping with language models for reinforcement learning.
\newblock In \emph{ICLR}, 2024.

\bibitem[Yang et~al.(2025{\natexlab{a}})Yang, Li, Yang, Zhang, Hui, Zheng, Yu, Gao, Huang, Lv, et~al.]{yang2025qwen3}
An~Yang, Anfeng Li, Baosong Yang, Beichen Zhang, Binyuan Hui, Bo~Zheng, Bowen Yu, Chang Gao, Chengen Huang, Chenxu Lv, et~al.
\newblock Qwen3 technical report.
\newblock \emph{arXiv preprint arXiv:2505.09388}, 2025{\natexlab{a}}.

\bibitem[Yang et~al.(2025{\natexlab{b}})Yang, Lin, Zhou, and Wen]{yang2025distilling}
Wenkai Yang, Yankai Lin, Jie Zhou, and Ji-Rong Wen.
\newblock Distilling rule-based knowledge into large language models.
\newblock In \emph{COLING}, 2025{\natexlab{b}}.

\bibitem[Yang et~al.(2024)Yang, Pang, Feng, Wang, Chen, Zhu, and Liu]{yang2024self}
Zhaorui Yang, Tianyu Pang, Haozhe Feng, Han Wang, Wei Chen, Minfeng Zhu, and Qian Liu.
\newblock Self-distillation bridges distribution gap in language model fine-tuning.
\newblock In \emph{ACL}, 2024.

\bibitem[Yao et~al.(2025)Yao, Liu, Zhang, Dong, Shang, and Gao]{yao2025offpolicy}
Feng Yao, Liyuan Liu, Dinghuai Zhang, Chengyu Dong, Jingbo Shang, and Jianfeng Gao.
\newblock Your efficient rl framework secretly brings you off-policy rl training, 2025.
\newblock URL \url{https://fengyao.notion.site/off-policy-rl}.

\bibitem[Yao et~al.(2024)Yao, Heinecke, Niebles, Liu, Feng, Xue, Murthy, Chen, Zhang, Arpit, et~al.]{yao2023retroformer}
Weiran Yao, Shelby Heinecke, Juan~Carlos Niebles, Zhiwei Liu, Yihao Feng, Le~Xue, Rithesh Murthy, Zeyuan Chen, Jianguo Zhang, Devansh Arpit, et~al.
\newblock Retroformer: Retrospective large language agents with policy gradient optimization.
\newblock In \emph{ICLR}, 2024.

\bibitem[Yu et~al.(2025)Yu, Zhang, Zhu, Yuan, Zuo, Yue, Dai, Fan, Liu, Liu, et~al.]{yu2025dapo}
Qiying Yu, Zheng Zhang, Ruofei Zhu, Yufeng Yuan, Xiaochen Zuo, Yu~Yue, Weinan Dai, Tiantian Fan, Gaohong Liu, Lingjun Liu, et~al.
\newblock Dapo: An open-source llm reinforcement learning system at scale.
\newblock In \emph{NeurIPS}, 2025.

\bibitem[Yue et~al.(2025)Yue, Chen, Lu, Zhao, Wang, Song, and Huang]{yue2025does}
Yang Yue, Zhiqi Chen, Rui Lu, Andrew Zhao, Zhaokai Wang, Shiji Song, and Gao Huang.
\newblock Does reinforcement learning really incentivize reasoning capacity in llms beyond the base model?
\newblock In \emph{NeurIPS}, 2025.

\bibitem[Yuksekgonul et~al.(2025)Yuksekgonul, Bianchi, Boen, Liu, Lu, Huang, Guestrin, and Zou]{yuksekgonul2025optimizing}
Mert Yuksekgonul, Federico Bianchi, Joseph Boen, Sheng Liu, Pan Lu, Zhi Huang, Carlos Guestrin, and James Zou.
\newblock Optimizing generative ai by backpropagating language model feedback.
\newblock \emph{Nature}, 639:\penalty0 609--616, 2025.

\bibitem[Yuksekgonul et~al.(2026)Yuksekgonul, Koceja, Li, Bianchi, McCaleb, Wang, Kautz, Choi, Zou, Guestrin, et~al.]{yuksekgonul2026learning}
Mert Yuksekgonul, Daniel Koceja, Xinhao Li, Federico Bianchi, Jed McCaleb, Xiaolong Wang, Jan Kautz, Yejin Choi, James Zou, Carlos Guestrin, et~al.
\newblock Learning to discover at test time.
\newblock \emph{arXiv preprint arXiv:2601.16175}, 2026.

\bibitem[Zelikman et~al.(2022)Zelikman, Wu, Mu, and Goodman]{zelikman2024star}
Eric Zelikman, Yuhuai Wu, Jesse Mu, and Noah~D Goodman.
\newblock Star: Bootstrapping reasoning with reasoning.
\newblock In \emph{NeurIPS}, 2022.

\bibitem[Zhang et~al.(2025)Zhang, Chen, Liu, Xue, Liao, Liu, Wang, Ning, Chen, Fu, et~al.]{zhang2025agent}
Kai Zhang, Xiangchao Chen, Bo~Liu, Tianci Xue, Zeyi Liao, Zhihan Liu, Xiyao Wang, Yuting Ning, Zhaorun Chen, Xiaohan Fu, et~al.
\newblock Agent learning via early experience.
\newblock \emph{arXiv preprint arXiv:2510.08558}, 2025.

\bibitem[Zhang et~al.(2023)Zhang, Liu, Wong, Abbeel, and Gonzalez]{zhang2023wisdom}
Tianjun Zhang, Fangchen Liu, Justin Wong, Pieter Abbeel, and Joseph~E Gonzalez.
\newblock The wisdom of hindsight makes language models better instruction followers.
\newblock In \emph{ICML}, 2023.

\bibitem[Zhao et~al.(2025)Zhao, Wu, Yue, Wu, Xu, Lin, Wang, Wu, Zheng, and Huang]{zhao2025absolute}
Andrew Zhao, Yiran Wu, Yang Yue, Tong Wu, Quentin Xu, Matthieu Lin, Shenzhi Wang, Qingyun Wu, Zilong Zheng, and Gao Huang.
\newblock Absolute zero: Reinforced self-play reasoning with zero data.
\newblock In \emph{NeurIPS}, 2025.

\bibitem[Zhao et~al.(2026)Zhao, Xie, Liu, Huang, Pang, Chen, and Grover]{zhao2026self}
Siyan Zhao, Zhihui Xie, Mengchen Liu, Jing Huang, Guan Pang, Feiyu Chen, and Aditya Grover.
\newblock Self-distilled reasoner: On-policy self-distillation for large language models.
\newblock \emph{arXiv preprint arXiv:2601.18734}, 2026.

\bibitem[Zheng et~al.(2025{\natexlab{a}})Zheng, Liu, Li, Chen, Yu, Gao, Dang, Liu, Men, Yang, et~al.]{zheng2025group}
Chujie Zheng, Shixuan Liu, Mingze Li, Xiong-Hui Chen, Bowen Yu, Chang Gao, Kai Dang, Yuqiong Liu, Rui Men, An~Yang, et~al.
\newblock Group sequence policy optimization.
\newblock \emph{arXiv preprint arXiv:2507.18071}, 2025{\natexlab{a}}.

\bibitem[Zheng et~al.(2025{\natexlab{b}})Zheng, Xing, Gu, Liang, Qu, Zhou, Li, Wen, Lin, Huang, et~al.]{zheng2025first}
Tianyu Zheng, Tianshun Xing, Qingshui Gu, Taoran Liang, Xingwei Qu, Xin Zhou, Yizhi Li, Zhoufutu Wen, Chenghua Lin, Wenhao Huang, et~al.
\newblock First return, entropy-eliciting explore.
\newblock \emph{arXiv preprint arXiv:2507.07017}, 2025{\natexlab{b}}.

\bibitem[Zhou et~al.(2023)Zhou, Lu, Mishra, Brahma, Basu, Luan, Zhou, and Hou]{zhou2023instruction}
Jeffrey Zhou, Tianjian Lu, Swaroop Mishra, Siddhartha Brahma, Sujoy Basu, Yi~Luan, Denny Zhou, and Le~Hou.
\newblock Instruction-following evaluation for large language models.
\newblock \emph{arXiv preprint arXiv:2311.07911}, 2023.

\bibitem[Zhou et~al.(2025)Zhou, Li, Zhang, and Leqi]{zhou2025expo}
Ruiyang Zhou, Shuozhe Li, Amy Zhang, and Liu Leqi.
\newblock Expo: Unlocking hard reasoning with self-explanation-guided reinforcement learning.
\newblock In \emph{NeurIPS}, 2025.

\bibitem[Zhou et~al.(2026)Zhou, Liu, Sims, Wang, Pang, Li, Wang, Lin, and Du]{zhou2025reinforcing}
Xiangxin Zhou, Zichen Liu, Anya Sims, Haonan Wang, Tianyu Pang, Chongxuan Li, Liang Wang, Min Lin, and Chao Du.
\newblock Reinforcing general reasoning without verifiers.
\newblock In \emph{ICLR}, 2026.

\bibitem[Ziebart et~al.(2008)Ziebart, Maas, Bagnell, Dey, et~al.]{ziebart2008maximum}
Brian~D Ziebart, Andrew~L Maas, J~Andrew Bagnell, Anind~K Dey, et~al.
\newblock Maximum entropy inverse reinforcement learning.
\newblock In \emph{AAAI}, 2008.

\bibitem[Zuo et~al.(2025)Zuo, Zhang, Qu, Sheng, Zhu, Qi, Sun, Cui, Ding, and Zhou]{zuo2025ttrl}
Yuxin Zuo, Kaiyan Zhang, Shang Qu, Li~Sheng, Xuekai Zhu, Biqing Qi, Youbang Sun, Ganqu Cui, Ning Ding, and Bowen Zhou.
\newblock Ttrl: Test-time reinforcement learning.
\newblock In \emph{NeurIPS}, 2025.

\end{thebibliography}
\bibliographystyle{template}

\clearpage\appendix
\crefalias{section}{appendix}
\crefalias{subsection}{appendix} %

\section*{Contents}
\printcontents{}{0}[2]{}
\clearpage

\section{Implementation of SDPO}

The following pseudocode in \cref{fig:pseudocode} outlines the implementation of SDPO:

\begin{figure}[H]
\begin{mdframed}[
    backgroundcolor=lightgray,
    linecolor=lightgray,
    linewidth=0pt,
    roundcorner=5pt,
    innertopmargin=8pt,
    innerbottommargin=8pt,
    innerleftmargin=8pt,
    innerrightmargin=8pt
]
    \begin{minted}[
        linenos,
        numbersep=4pt,
        xleftmargin=0.7em,
    ]{python}
def compute_sdpo_loss(batch, teacher_context, loss_mask):
    """
    Computes probabilities of response y under the self-teacher
        and the per-logit SDPO loss.
    """
    # Compute model probabilities for response y
    logprobs_student = compute_log_prob(batch) # (T,V)
    probs_student = logprobs_student.exp() # (T,V)

    # Compute self-teacher probabilities for response y
    teacher_batch = reprompt(batch, teacher_context)
    logprobs_teacher = compute_log_prob(teacher_batch).detach() # (T,V)

    # Compute SDPO loss: per-token divergence
    per_token_loss = divergence(logprobs_student, logprobs_teacher) # (T,)
    return agg_loss(per_token_loss, loss_mask, loss_agg_mode="token-mean")
    \end{minted}
\end{mdframed}
\caption{The pseudo-code of SDPO within a standard RL training pipeline. Omitted here is the filtering to top-$K$ logprobs for student and teacher (including a tail term) as described in \cref{sec:approximate_logit_distillation}. Further, we omit here any importance sampling weights to correct for off-policy data. \texttt{reprompt} modifies the batch to incorporate teacher context (i.e., rich feedback). \texttt{divergence} implements any per-token divergence such as reverse-KL, forward-KL, or Jensen-Shannon.}
\label{fig:pseudocode}
\end{figure}

In the following, we provide further details on: \begin{itemize}
    \item The gradient estimator used in our implementation (\cref{sec:gradient_estimator})
    \item Teacher regularization (\cref{sec:teacher_regularization})
    \item Approximating logit-distillation with the top-$K$ logits for saving GPU memory~(\cref{sec:approximate_logit_distillation})
    \item Generalizing PPO-style policy gradient algorithms to logit-level advantages~(\cref{sec:off_policy})
\end{itemize}

To disambiguate the notation of the self-teacher, we use $q_\theta(\cdot \mid x, f) := \pi_\theta(\cdot \mid \mathrm{reprompt}(x, f))$ in the following.
Here, \texttt{reprompt} denotes the reprompt template of the self-teacher.

\subsection{Gradient Estimators}\label{sec:gradient_estimator}

In this seciton, we discuss two possible gradient estimators for the KL divergence between the current policy $\pi_\theta(y \mid x)$ and the teacher policy $q_\theta(y \mid x, f)$.

\paragraph{Per-token estimator.}

Deriving the gradient of the SDPO loss as defined in \cref{eq:sdpo}: \begin{equation}
    \mathcal{L}_{\mathrm{token}}(\theta) := \mathbb{E}_{y \sim \mathrm{stopgrad}(\pi_\theta(\cdot \mid x))} \left[ \sum_{t=1}^T \mathrm{KL}(\pi_\theta(\cdot \mid x, y_{<t}) \| \mathrm{stopgrad}(\pi_{\theta}(\cdot \mid x, f, y_{<t}))) \right]
\end{equation} leads to the following estimator~(see a detailed proof in \cref{sec:gradient_estimator_proof}), which corresponds to the sum of gradients of the KL divergence at each token: \begin{equation}
    \grad \mathcal{L}_{\text{token}}(\theta) = \mathbb{E}_{y \sim \pi_\theta(\cdot \mid x)} \left[ \sum_{t=1}^T \mathbb E_{\hat{y}_t\sim\pi_\theta(\cdot \mid x, y_{<t})}\!\!\left[ \grad_\theta \log \pi_\theta(\hat{y}_t \mid x, y_{<t}) \cdot \log \frac{\pi_\theta(\hat{y}_t \mid x, y_{<t})}{\pi_\theta(\hat{y}_t \mid x, f, y_{<t})}\right] \right].
\end{equation}
This corresponds to the estimator presented in Proposition~\ref{prop:gradient_estimate}.
This gradient estimator effectively assumes that the sampling distribution generating $y$ is fixed.

\paragraph{Sequence-level estimator.}

An alternative self-distillation objective minimizes the sequence-level KL divergence between student and self-teacher, i.e., \begin{align}\begin{split}
    \mathcal{L}_{\mathrm{seq}}(\theta) := \KL{\pi_\theta}{q_\theta} &= \mathbb{E}_{y \sim \pi_\theta(\cdot \mid x)} \left[ \log \frac{\pi_\theta(y \mid x)}{q_\theta(y \mid x, f)} \right] \\ &= \sum_{t=1}^T \mathbb E_{s_t\sim\Pi_\theta}\left[\KL{\pi_\theta(\cdot \mid s_t)}{q_\theta(\cdot \mid s_t, f)}\right],
\end{split}\end{align} where $s_t = (x, y_{<t})$ is the prefix (``state'') at step $t$ and $\Pi_\theta$ denotes the prefix distribution under policy $\pi_\theta$.
Estimating the gradient of this objective additionally takes into account how the choice of $y_t$ influences future states $y_{>t}$ (due to the additional dependence on $\Pi_\theta$).

\citet{amini2025better} show that the corresponding gradient estimator is given by \begin{equation}
    \grad \mathcal{L}_{\text{seq}}(\theta) = \grad \mathcal{L}_{\text{token}}(\theta) + \mathbb{E}_{y \sim \pi_\theta(\cdot \mid x)} \left[ \sum_{t=1}^T \KL{\pi_\theta(\cdot \mid s_t)}{q_\theta(\cdot \mid s_t, f)} \grad_\theta \log \Pi_\theta(s_t) \right].
\end{equation}

The additional term of the sequence-level gradient captures how prefixes influence the self-distillation divergence of future tokens.
We also experimented with this sequence-level gradient estimator but did not find measurable gains relative to its additional complexity.

\subsection{Regularized teacher}\label{sec:teacher_regularization}

In contrast to standard distillation, the teacher in SDPO changes throughout training. This bootstrapping enables the teacher to improve, but it may also lead to training instability.
To stabilize training, we seek to prevent the teacher~$q$ from quickly diverging from the initial teacher~$\smash{q_\thetaref}$.
We can achieve this by placing an explicit trust-region constraint on~$q$~\citep{schulman2015trust,peng2019advantage}, that is: \begin{equation}
    \sum_t \KL{q(y_{t} \mid x, f, y_{<t})}{q_\thetaref(y_{t} \mid x, f, y_{<t})} \leq \epsilon, \quad \epsilon > 0.
\end{equation}
This trust-region can be implemented in two ways: \begin{enumerate}
    \item \textbf{Explicit trust-region:} We can define the teacher as the policy closest to~$q_{\theta}$ while satisfying the trust-region constraint.
    This teacher can be expressed as \begin{equation}
        q(y_t \mid x, f, y_{<t}) \propto \exp\!\big((1-\alpha)\log q_\thetaref(y_t \mid x, f, y_{<t}) + \alpha \log q_\theta(y_t \mid x, f, y_{<t})\big),
    \end{equation} with $\alpha \in (0,1)$ the inverse Lagrange multiplier for the trust-region constraint.
    We include a full derivation in \cref{sec:trust_region_teacher}.
    We can plug this explicitly constrained teacher directly into the SDPO objective.

    \item \textbf{Exponential moving average (EMA):} Alternatively, we can stabilize the teacher's parameters directly; parameterizing $q_{\theta'}$ by $\theta'$ and updating as $\theta' \gets (1 - \alpha) \theta' + \alpha \theta$ with $\alpha \in (0,1)$.
\end{enumerate}
Note that each implementation has a different practical advantage:
The EMA teacher requires additional GPU memory for $\theta'$ yet does not introduce any runtime overhead.
In contrast, the trust-region teacher requires an additional log-prob computation with~$\smash{q_\thetaref}$ yet does not require additional GPU memory if $\thetaref$ is used for explicit KL regularization.

\subsection{Approximate Logit Distillation}\label{sec:approximate_logit_distillation}

To save GPU memory, we perform distillation only on the top-$K$ tokens predicted by the student:

\begin{align}
    \mathcal{L}_{\mathrm{SDPO}}(\theta) &= \sum_{t=1}^T \mathrm{KL}(\pi_\theta(\cdot \mid x, y_{<t}) \| \mathrm{stopgrad}(q_{\theta}(\cdot \mid x, f, y_{<t}))) \nonumber \\
    &\approx \begin{multlined}[t]
        \sum_{t=1}^T \sum_{\hat{y}_t \in \mathrm{top}_K(\pi_\theta)} \pi_\theta(\hat{y}_{t} \mid x, y_{<t}) \cdot \log \frac{\pi_\theta(\hat{y}_{t} \mid x, y_{<t})}{\mathrm{stopgrad}(q_{\theta}(\hat{y}_{t} \mid x, f, y_{<t}))} \\ + \underbrace{\Big(1 - \textstyle\sum_{\hat{y}_t \in \mathrm{top}_K(\pi_\theta)} \pi_\theta(\hat{y}_{t} \mid x, y_{<t})\Big) \cdot \log \frac{1 - \textstyle\sum_{\hat{y}_t \in \mathrm{top}_K(\pi_\theta)} \pi_\theta(\hat{y}_{t} \mid x, y_{<t})}{\mathrm{stopgrad}\Big(1 - \textstyle\sum_{\hat{y}_t \in \mathrm{top}_K(\pi_\theta)} q_\theta(\hat{y}_{t} \mid x, f, y_{<t})\Big)}}_{\text{tail}}
    \end{multlined}
\end{align}

Here, the top-$K$ is with respect to student.
Without top-$K$ distillation, we would have to keep two copies of logits in memory: one for teacher and student each.
Top-$K$ distillation avoids virtually any memory overhead without impacting performance significantly, since most tokens of the vocabulary are not informative at a given time.

\subsection{Off-Policy Training: Generalization to Logit-Level Losses}\label{sec:off_policy}

PPO-style clipping~\citep{schulman2017proximal} with \textcolor{red}{truncated importance sampling}~\citep{yao2025offpolicy}, \textcolor{green}{clip-higher}~\citep{yu2025dapo}, \textcolor{orange}{fixed length normalization}~\citep{liu2025understanding}: \begin{equation}
  \mathcal{L}_{\mathrm{token}}(\theta) := -\textcolor{orange}{\frac{1}{\sum_{i=1}^{G} |y_i|}} \sum_{i=1}^{G} \sum_{t=1}^{|y_i|} \textcolor{red}{\min \left( w^{\mathrm{TIS}}_{i,t}, \rho \right)} \min \left( w_{i,t} A_{i,t}, \text{clip}(w_{i,t}, 1 - \varepsilon_{\text{low}}, 1 + \textcolor{green}{\varepsilon_{\text{high}}}) A_{i,t} \right), \label{eq:baseline}
\end{equation} with $w_{i,t} := \frac{\pi_\theta(y_{i,t} \mid x, y_{i,<t})}{\pi_\thetaold(y_{i,t} \mid x, y_{i,<t})}$, $w^{\mathrm{TIS}}_{i,t} := \frac{\pi_\thetaold(y_{i,t} \mid x, y_{i,<t})}{\pi_\thetaold^\rollout(y_{i,t} \mid x, y_{i,<t})}$, and $A_{i,t}$ denotes the per-token advantage.

We extend this to a \textcolor{blue}{logit-level} loss: \begin{equation}
  \begin{multlined}[c]
    \mathcal{L}_{\mathrm{logit}}(\theta) := -\textcolor{orange}{\frac{1}{\sum_{i=1}^{G} |y_i|}} \sum_{i=1}^{G} \sum_{t=1}^{|y_i|} \textcolor{blue}{\sum_{\hat{y}_{i,t}}}\ \textcolor{red}{\min \left( \pi_\thetaold(\hat{y}_{i,t} \mid x, y_{i,<t}), \rho \pi_\thetaold^\rollout(\hat{y}_{i,t} \mid x, y_{i,<t}) \right)} \\ \min \left( w_{i,t}(\hat{y}_{i,t}) A_{i,t}(\hat{y}_{i,t}), \text{clip}(w_{i,t}(\hat{y}_{i,t}), 1 - \varepsilon_{\text{low}}, 1 + \textcolor{green}{\varepsilon_{\text{high}}}) A_{i,t}(\hat{y}_{i,t}) \right),
  \end{multlined}
\end{equation} where $\hat{y}_{i,t}$ sums over all possible tokens at position $t$ for rollout $i$ (or the $K$ most likely under $\pi_\thetaold$, cf.~\cref{sec:approximate_logit_distillation}).
The TIS changes since we explicitly weight each logit by its probability under $\pi_\thetaold$ rather than relying on a Monte Carlo estimate of the expectation over next-token predictions.
Here, $A_{i,t}(\hat{y}_{i,t})$ is a per-logit advantage.

In our experiments for SDPO, we apply the TIS term on a token-level rather than logit-level.

\clearpage
\section{Theoretical Analysis}

This section is organized as follows: \begin{itemize}
    \item \Cref{sec:gradient_estimator_proof} derives the SDPO gradient from Proposition~\ref{prop:gradient_estimate}.
    \item \Cref{sec:trust_region_teacher} derives the trust-region regularized teacher discussed in \cref{sec:teacher_regularization}.
\end{itemize}

To disambiguate the notation of the self-teacher, we use $q_\theta(\cdot \mid x, f) := \pi_\theta(\cdot \mid \mathrm{reprompt}(x, f))$ in the following.
Here, \texttt{reprompt} denotes the reprompt template of the self-teacher.

\subsection{Proof of Proposition~\ref{prop:gradient_estimate}.}\label{sec:gradient_estimator_proof}

\begin{proof}
In the following, we derive the gradient of $\mathcal{L}_{\mathrm{SDPO}}$.
\begin{align*}
    \grad_\theta \mathcal{L}_{\mathrm{SDPO}}(\theta) &= \grad_\theta \sum_{t=1}^T \mathrm{KL}(\pi_\theta(\cdot \mid x, y_{<t}) \| \mathrm{stopgrad}(q_{\theta}(\cdot \mid x, f, y_{<t}))) \\
    &= \grad_\theta \sum_{t=1}^T \sum_{\hat{y}_t} \pi_\theta(\hat{y}_t \mid x, y_{<t}) \log\left(\frac{\pi_\theta(\hat{y}_t \mid x, y_{<t})}{\mathrm{stopgrad}(q_{\theta}(\hat{y}_t \mid x, f, y_{<t}))}\right) \\
\intertext{Let $A_{t,k} := \log\left(\frac{\mathrm{stopgrad}(q_{\theta}(\hat{y}_t \mid x, f, y_{<t}))}{\pi_\theta(\hat{y}_t \mid x, y_{<t})}\right)$. Then,}
    &= - \grad_\theta \sum_{t=1}^T \sum_{\hat{y}_t} \pi_\theta(\hat{y}_t \mid x, y_{<t}) A_{t,k} \\
    &= - \sum_{t=1}^T \sum_{\hat{y}_t} \pi_\theta(\hat{y}_t \mid x, y_{<t}) \grad_\theta A_{t,k} + A_{t,k} \grad_\theta \pi_\theta(\hat{y}_t \mid x, y_{<t}).
\end{align*}
We have that $\grad_\theta A_{t,k} = - \grad_\theta \log \pi_\theta(\hat{y}_t \mid x, y_{<t})$ is the negative score function. Using the score trick, $\pi_\theta(\hat{y}_t \mid x, y_{<t}) \grad_\theta \log \pi_\theta(\hat{y}_t \mid x, y_{<t}) = \grad_\theta \pi_\theta(\hat{y}_t \mid x, y_{<t})$. Hence, the first term simplifies to \begin{align*}
    - \sum_{t=1}^T \sum_{\hat{y}_t} \pi_\theta(\hat{y}_t \mid x, y_{<t}) \grad_\theta A_{t,k} &= \sum_{t=1}^T \sum_{\hat{y}_t} \grad_\theta \pi_\theta(\hat{y}_t \mid x, y_{<t}) = \sum_{t=1}^T \grad_\theta \underbrace{\sum_{\hat{y}_t} \pi_\theta(\hat{y}_t \mid x, y_{<t})}_{=1}  = 0.
\end{align*}
Thus, the gradient of $\mathcal{L}_{\mathrm{SDPO}}$ is
\begin{align*}
    \grad_\theta \mathcal{L}_{\mathrm{SDPO}} &= - \sum_{t=1}^T \sum_{\hat{y}_t} A_{t,k} \grad_\theta \pi_\theta(\hat{y}_t \mid x, y_{<t}) \\
    &= - \sum_{t=1}^T \sum_{\hat{y}_t} \pi_\theta(\hat{y}_t \mid x, y_{<t}) \Big(A_{t,k} \grad_\theta \log \pi_\theta(\hat{y}_t \mid x, y_{<t}) \Big) \\
    &= - \sum_{t=1}^T \mathbb E_{\hat{y}_t\sim\pi_\theta(\cdot \mid x, y_{<t})}\left[A_{t,k} \grad_\theta \log \pi_\theta(\hat{y}_t \mid x, y_{<t})\right].
\end{align*}
\end{proof}

Notably, the above implies that the gradient of $\mathcal{L}_{\mathrm{SDPO}}$ is equivalent to the gradient of the loss if $A_{t,k} = \mathrm{stopgrad}\left(\log\frac{q_{\theta}(y_t \mid x, f, y_{<t})}{\pi_\theta(y_t \mid x, y_{<t})}\right)$.

\subsection{Trust-region Teacher}\label{sec:trust_region_teacher}

To stabilize training, we seek to prevent the teacher $q$ from diverging from the initial teacher $q_\thetaref$.
We can achieve this by placing an explicit trust-region constraint on the teacher~$q$~\citep{schulman2015trust,peng2019advantage}, that is: \begin{equation}
    \sum_t \KL{q(y_{t} \mid x, f, y_{<t})}{q_\thetaref(y_{t} \mid x, f, y_{<t})} \leq \epsilon, \quad \epsilon > 0.
\end{equation}

In the following, we derive a teacher $q$ which satisfies the trust-region constraint while staying close to the target $q_\theta$.
The following optimization problem characterizes such a~$q$~\citep{peng2019advantage}: \begin{align}\begin{split}
    \argmax_{q \in \Delta} \ & \sum_{t} \sum_{y_t} q(y_t \mid x, f, y_{<t}) \log\frac{q_\theta(y_t \mid x, f, y_{<t})}{q_\thetaref(y_t \mid x, f, y_{<t})} \\
    \text{s.t.} \ & \sum_t \KL{q(y_{t} \mid x, f, y_{<t})}{q_\thetaref(y_{t} \mid x, f, y_{<t})} \leq \epsilon,
\end{split}\label{eq:trust_region_retrospective_problem}\end{align} where $\Delta$ denotes the probability simplex.
Intuitively, the solution is the $q$ satisfying the trust-region constraint, which is closest to $q_\theta$ (i.e., has minimal cross-entropy to $q_\theta$) while being farthest from $q_\thetaref$ (i.e., has maximal cross-entropy to $q_\thetaref$).

\begin{proposition}
    The solution to \cref{eq:trust_region_retrospective_problem} can be expressed in closed form as \begin{align}
        q^*(y_t \mid x, f, y_{<t}) \propto \exp\!\big((1-\alpha)\log q_\thetaref(y_t \mid x, f, y_{<t}) + \alpha \log q_\theta(y_t \mid x, f, y_{<t})\big).
    \end{align}
\end{proposition}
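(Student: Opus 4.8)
The plan is to solve \cref{eq:trust_region_retrospective_problem} by Lagrangian duality, exploiting that it is the maximization of a \emph{linear} functional over a \emph{convex} set. First I would fix a context $(x,f,y_{<t})$ and abbreviate $q_t(i):=q(i\mid x,f,y_{<t})$, $a_t(i):=q_\theta(i\mid x,f,y_{<t})$, and $b_t(i):=q_\thetaref(i\mid x,f,y_{<t})$, so the objective reads $\sum_t\sum_i q_t(i)\log\frac{a_t(i)}{b_t(i)}$ and the feasible set is the product of probability simplices intersected with $\{\sum_t\KL{q_t}{b_t}\le\epsilon\}$. Since $\KL{\cdot}{b_t}$ is convex in its first argument, this set is convex (and compact), the objective is linear, and $q=q_\thetaref$ is strictly feasible (its KL budget is $0<\epsilon$), so Slater's condition holds; hence a point satisfying the KKT conditions is a global maximizer.

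Next I would introduce a single multiplier $\lambda\ge 0$ for the (position-coupling) trust-region constraint and multipliers $\mu_t\in\R$ for the normalizations $\sum_i q_t(i)=1$, and write the Lagrangian
\[ \mathcal{L}=\sum_t\sum_i q_t(i)\log\tfrac{a_t(i)}{b_t(i)}-\lambda\Big(\sum_t\sum_i q_t(i)\log\tfrac{q_t(i)}{b_t(i)}-\epsilon\Big)-\sum_t\mu_t\Big(\sum_i q_t(i)-1\Big). \]
The nonnegativity constraints turn out to be inactive at the optimum (the candidate below is strictly positive on the support of $b_t$), so their multipliers vanish. Setting $\partial\mathcal{L}/\partial q_t(i)=0$ gives $\log\tfrac{a_t(i)}{b_t(i)}-\lambda\big(\log\tfrac{q_t(i)}{b_t(i)}+1\big)-\mu_t=0$, and solving for $q_t(i)$ yields
\[ \log q_t(i)=\big(1-\tfrac1\lambda\big)\log b_t(i)+\tfrac1\lambda\log a_t(i)+\mathrm{const}_t, \]
i.e.\ $q_t(i)\propto\exp\!\big((1-\tfrac1\lambda)\log b_t(i)+\tfrac1\lambda\log a_t(i)\big)$ after renormalizing over $i$ (which absorbs $\mu_t$ and $\lambda$). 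Writing $\alpha:=1/\lambda$ recovers exactly the claimed interpolation $q^\star(y_t\mid x,f,y_{<t})\propto\exp\big((1-\alpha)\log q_\thetaref(y_t\mid x,f,y_{<t})+\alpha\log q_\theta(y_t\mid x,f,y_{<t})\big)$; the value of $\lambda$ (hence $\alpha$) is then pinned down by substituting $q^\star$ back into $\sum_t\KL{q_t}{b_t}=\epsilon$, a relation that is monotone in $\epsilon$, with a small enough budget forcing $\lambda>1$ and thus $\alpha\in(0,1)$ as stated in the main text.

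I expect the only subtle point to be the coupling of all positions $t$ through a single scalar budget: the sum over $t$ inside the KL constraint means the per-position sub-problems are not separately constrained, yet because they share the \emph{same} multiplier $\lambda$, the stationarity equations still decouple across $(t,i)$, and it is precisely this shared $\lambda$ that produces one interpolation coefficient $\alpha$ common to every position and context. I would finish by invoking sufficiency of the KKT conditions (valid here by Slater's condition together with convexity) to conclude global optimality, and note in passing that when $q_\theta$ already lies inside the trust region the optimal $\alpha$ may exceed $1$, i.e.\ $q^\star$ extrapolates past $q_\theta$ away from $q_\thetaref$.
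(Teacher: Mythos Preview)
Your proposal is correct and follows essentially the same Lagrangian approach as the paper: introduce a multiplier $\lambda\ge 0$ for the KL constraint and a normalization multiplier, write stationarity, and solve to obtain the geometric interpolation with $\alpha=1/\lambda$. Your treatment is in fact more careful than the paper's, which omits the convexity and Slater arguments you use to justify that the KKT point is a global optimum, as well as your discussion of how the shared multiplier couples all positions $t$ into a single $\alpha$.
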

\begin{proof}
To simplify notation, we omit the conditioning in the following.
The Lagrangian (with $\lambda \ge 0$ for the KL constraint and $\nu$ for normalization) is
\begin{align*}
    \mathcal{L}(q,\lambda,\nu) = \sum_t \sum_{y_t} q({y_t})\log\frac{q_\theta({y_t})}{q_\thetaref({y_t})} - \lambda\Big(\sum_{y_t} q({y_t})\log\frac{q({y_t})}{q_\thetaref({y_t})} - \epsilon\Big) + \nu\Big(\sum_{y_t} q({y_t})-1\Big).
\end{align*}
Stationarity gives, for all $y_t$,
\begin{align*}
    0 = \frac{\partial \mathcal{L}}{\partial q(y_t)} = \log\frac{q_\theta(y_t)}{q_\thetaref(y_t)}-\lambda\Big(\log\frac{q(y_t)}{q_\thetaref(y_t)}+1\Big)+\nu.
\end{align*}
Let $\alpha:=1/\lambda$. Then, the solution to \cref{eq:trust_region_retrospective_problem} can be characterized in closed form as
\begin{align*}
    q^*(y_t) &\propto q_\thetaref(y_t)\exp\!\Big(\alpha \log\tfrac{q_\theta(y_t)}{q_\thetaref(y_t)}\Big) \\
    &\propto \exp\!\big((1-\alpha)\log q_\thetaref(y_t) + \alpha \log q_\theta(y_t)\big).
\end{align*}
\end{proof}

\cite{chen2025retrospective} perform a similar derivation, but use reference $\pi_{\thetaref}$, which we observe to underperform compared to the reference $q_\thetaref$.

\clearpage
\section{Additional Related Work}

\paragraph{Value networks and Monte Carlo advantage estimation.}

Several prior approaches aim to improve credit assignment but face the same information bottleneck as GRPO. Classical RL frequently trains value networks which provide token-level advantages, but themselves are learned from scalar rewards~\citep{schulman2015high,schulman2017proximal}. Furthermore, value networks incur significant computational and memory overhead and are therefore typically not used to train LLMs.
Other recent work estimates token-level advantages by performing additional generations starting from various positions in the original attempt~\citep{kazemnejad2024vineppo,zheng2025first}.
While this can learn with fewer gradient steps than GRPO it still uses only scalar rewards as signal and requires costly additional generations.

\paragraph{Dense credit assignment with a reward model.}

Several recent works study dense (per-token) reward assignment given access to an external reward model, typically by exploiting the reward model’s internal structure~\citep{chan2024dense,cao2025scar}.
Relatedly, \citet{li2025generalist} argue that a token-level reward signal is implicit in an LLM’s logits by linking next-token prediction to offline inverse reinforcement learning, effectively yielding a training-free reward model for RL fine-tuning.

\paragraph{Partial observability.}

From the perspective of classical RL, many verifiable domains for LLMs are naturally \emph{partially observable}:
executing a proposed solution induces a latent environment state (e.g., failing tests or states of an agentic system) that is revealed only through rich feedback.
This aligns with the formalism of partially observable Markov decision processes~(POMDPs), where agents must act under incomplete observations of state~\citep{kaelbling1998planning,sutton1998reinforcement}.
By contrast, RLVR and RLHF pipelines typically discard this observation channel and learn only from terminal scalar rewards or pairwise preferences.

\paragraph{Relation to test-time training.}

Our setting from \cref{sec:ttt} can be seen as a special case of test-time training where the model itself is updated at test-time using self-distillation.
Updating the model at test-time is known as test-time training~\citep{sun2020test,sun2024learning,hardt2024test,hubotter2024efficiently,hubotter2025learning,akyurek2025surprising,behrouz2025titans,tandon2025end,hubotter2025specialization}.
Unlike prior work, self-distillation uses the in-context learning ability of the current model to attribute credit after receiving feedback.
This can be seen as simulating long-context reasoning with periodic compression of context into the model weights.

\subsection{SDPO as Maximum Entropy RL}\label{sec:max_entropy_rl}

The SDPO objective resembles the objective in maximum entropy RL~\citep[e.g.,][]{levine2018reinforcement,haarnoja2018soft} with a particular choice of reward function.

\paragraph{Maximum Entropy RL}
Consider optimizing \begin{equation}
  \argmax_\theta\ \E[y \sim \pi_\theta(\cdot \mid x)]{\sum_t r(y_t \mid x, y_{<t})} + \lambda \H{\pi_\theta(\cdot \mid x)}, \quad \lambda > 0 \label{eq:max_entropy_rl}
\end{equation} where $\smash{\pi_\theta(y \mid x) = \prod_{t=1}^T \pi_\theta(y_t \mid x, y_{<t})}$ and $\smash{\H{\pi_\theta(\cdot \mid x)} = \E[y \sim \pi_\theta(\cdot \mid x)]{- \log \pi_\theta(y \mid x)}}$ is the entropy of the policy.
Here, $r(y_t \mid x, y_{<t})$ is an arbitrary reward function, possibly ``dense'' (i.e., per-token).
\Cref{eq:max_entropy_rl} is known as maximum entropy RL.
It is known that this objective is equivalent to solving a variational inference problem which discuss next.

To this end, we define a Bernoulli random variable $\spC$ which is $1$ if the attempt $y$ is correct and $0$ otherwise.
We then define its distribution as $\smash{p(\spC = 1 \mid x, y) \propto \exp(\tfrac{1}{\lambda} \sum_t r(y_t \mid x, y_{<t}))}$.
Further assuming w.l.o.g.\ that the ``prior'' over responses is uniform, we can express the posterior conditioned on the event of correctness as \begin{equation}
  \pi^\star(y \mid x) := p(y \mid x, \spC = 1) \propto p(\spC = 1 \mid x, y) \propto \exp\!\left(\frac{1}{\lambda} \sum_t r(y_t \mid x, y_{<t})\right).
\end{equation}
Then, \cref{eq:max_entropy_rl} is equivalent to minimizing the KL divergence with respect to~$\pi^\star$: \begin{equation}
  \argmin_\theta\ \sum_t \KL{\pi_\theta(y_t \mid x, y_{<t})}{\pi^\star(y_t \mid x, y_{<t})}. \label{eq:max_entropy_rl_kl}
\end{equation}

\paragraph{SDPO optimizes an implicit reward defined by the teacher}
Note that \cref{eq:max_entropy_rl_kl} is equivalent to the SDPO objective (\cref{eq:sdpo}) with implicit reward $r(y_t \mid x, y_{<t}) = \log q(y_t \mid x, f, y_{<t})$ and $\lambda = 1$.
In this sense, SDPO can be seen as a maximum entropy RL algorithm with dense rewards constructed implicitly through the retrospective model.

This also points to a connection of SDPO to inverse RL~\citep{ng2000algorithms,ziebart2008maximum,rafailov2023direct}, where the goal is to recover an unknown reward function.
In SDPO, the student learns an implicit reward function defined by the retrospective model.

\clearpage
\section{Additional Results \& Ablations}\label{sec:ablations}

This section is organized as follows: \begin{itemize}
    \item \Cref{sec:ablations:wo_feedback} contains results and ablations for \cref{sec:gen_results}.
    \item \Cref{sec:ablations:w_feedback} contains results and ablations for \cref{sec:results}.
    \item \Cref{sec:ablations:ttt} contains results and ablations for \cref{sec:ttt}.
\end{itemize}

\subsection{Learning without rich environment feedback}\label{sec:ablations:wo_feedback}

\begin{itemize}
    \item \Cref{tab:main_sec3:indiv_hyperparams} reports results when optimal hyperparameters are selected for each model/task combination.
    \item \Cref{tab:response_lengths} compares average response lengths of SDPO and GRPO.
\end{itemize}

\begin{table}[h]
\centering
\setlength{\tabcolsep}{6pt}
\renewcommand{\arraystretch}{1.15}

\begin{tabular}{l *{10}{S}}
\toprule
& \multicolumn{2}{c}{Chemistry}
& \multicolumn{2}{c}{Physics}
& \multicolumn{2}{c}{Biology}
& \multicolumn{2}{c}{Materials}
& \multicolumn{2}{c}{Tool use} \\
\cmidrule(lr){2-3}\cmidrule(lr){4-5}\cmidrule(lr){6-7}\cmidrule(lr){8-9}\cmidrule(lr){10-11}
& {1h} & {5h} & {1h} & {5h} & {1h} & {5h} & {1h} & {5h} & {1h} & {5h} \\
\midrule

\textbf{Qwen3-8B} & \multicolumn{2}{c}{41.2} & \multicolumn{2}{c}{59.2} & \multicolumn{2}{c}{30.8} & \multicolumn{2}{c}{58.9} & \multicolumn{2}{c}{57.5} \\
\ + GRPO
& {65.9} & {74.5} & {62.9} & {74.5} & {35.1} & \textbf{59.9} & \textbf{74.3} & {77.1} & \textbf{61.7} & {68.1} \\
\ + GRPO {\small (on-policy)}
& {52.2} & {71.6} & {62.9} & {74.8} & {49.8} & {49.8} & {73.3} & {75.8} & \textbf{61.7} & {68.1} \\
\ + \textbf{SDPO} {\small (on-policy)}
& \textbf{73.2} & \textbf{80.9} & \textbf{70.6} & \textbf{80.6} & \textbf{50.6} & {56.8} & {72.1} & \textbf{78.3} & {56.4} & \textbf{68.5} \\
\addlinespace[2pt]
\midrule

\textbf{Olmo3-7B-Instruct} & \multicolumn{2}{c}{22.8} & \multicolumn{2}{c}{37.7} & \multicolumn{2}{c}{16.2} & \multicolumn{2}{c}{36.7} & \multicolumn{2}{c}{39.3} \\
\ + GRPO
& {53.1} & {67.7} & {55.3} & {63.3} & {35.6} & \textbf{55.8} & {73.8} & {78.1} & {56.4} & \textbf{65.0} \\
\ + GRPO {\small (on-policy)}
& {47.1} & {65.4} & \textbf{62.7} & {62.7} & \textbf{49.8} & {49.8} & {67.9} & {74.4} & {56.0} & {61.3} \\
\ + \textbf{SDPO} {\small (on-policy)}
& \textbf{68.0} & \textbf{80.0} & {60.3} & \textbf{71.4} & {48.0} & {52.8} & \textbf{75.3} & \textbf{79.2} & \textbf{57.3} & {62.5} \\
\bottomrule
\end{tabular}

\caption{\textbf{Comparison of SDPO and GRPO on reasoning-related benchmarks.} We report the highest achieved avg@16 within 1 hour and 5 hours of wall-clock training time, respectively. Both SDPO and on-policy GRPO perform one gradient step per generation batch, while GRPO performs 4 off-policy mini batch steps. We select optimal hyperparameters for SDPO and baselines based on 5h accuracy. We perform this selection independently for each model and dataset. Each run is performed on a node with 4 NVIDIA GH200 GPUs. Together with initialization and validation, each run takes approximately 6 hours. \emph{As opposed to \cref{tab:main_sec3} which selects globally optimal hyperparameters per method, this table selects optimal hyperparameters individually for each model/task combination based on 5h accuracy.} The hyperparameter grid is described in \cref{sec:hyperparams:gen_results}.}
\label{tab:main_sec3:indiv_hyperparams}
\end{table}

\begin{table}[h]
\centering
\begin{tabular}{lccc}
\toprule
\textbf{Model} & GRPO & SDPO & Reduction of SDPO \\
\midrule
\textbf{Qwen3-8B} & 820.8 & 255.8 & $3.2\times$ \\
\textbf{Olmo3-7B-Instruct} & 1095.4 & 343.9 & $3.2\times$ \\
\bottomrule
\end{tabular}
\caption{Average response lengths of SDPO and GRPO (averaged across tasks from \cref{sec:gen_results}). Both algorithms are evaluated in the on-policy setting.}
\label{tab:response_lengths}
\end{table}

\clearpage
\subsection{Learning with rich environment feedback}\label{sec:ablations:w_feedback}

\subsubsection{Additional Results}

\begin{wrapfigure}{r}{0.5\textwidth}
    \centering
    \vspace{-5ex}
    \incplt[0.5\textwidth]{by_difficulty}
    \vspace{-2.5ex}
    \caption{Average accuracy during training until step 80, stratified by difficulty. Error bars show standard deviation across 3 seeds.}
    \label{fig:by_difficulty}
\end{wrapfigure}

\Cref{fig:by_difficulty} shows the average accuracy of SDPO and GRPO stratified by question difficulty. LCB differentiates between easy, medium, and hard questions.
As displayed, SDPO significantly improves over GRPO in solving medium and hard questions, highlighting the importance of rich feedback for challenging tasks. Note that this categorization of questions is different from the one in \cref{sec:ttt}.

In \cref{fig:batch_size}, we compare different train batch sizes and number of rollouts for training GRPO and SDPO on LCBv6.

\begin{figure}[t]
    \centering
    \vspace{-1ex}
    \incplt[0.6\textwidth]{batch_size}
    \vspace{-2ex}
    \caption{Accuracy (pass@1) for varying train batch sizes (4, 8, 16, 32) and number of rollouts (4, 8) for training SDPO and GRPO with Qwen3-8B~\citep{yang2025qwen3} on LCBv6, $\pm$ stderr across 3 seeds. Different shades of the same color correspond to different runs.}
    \label{fig:batch_size}
\end{figure}

Complementing the results shown in \cref{fig:model_scaling}, we show additional results using Qwen2.5-Instruct~\citep{qwen2024qwen25} in \cref{fig:model_scaling_qwen25}.

\begin{figure}[h]
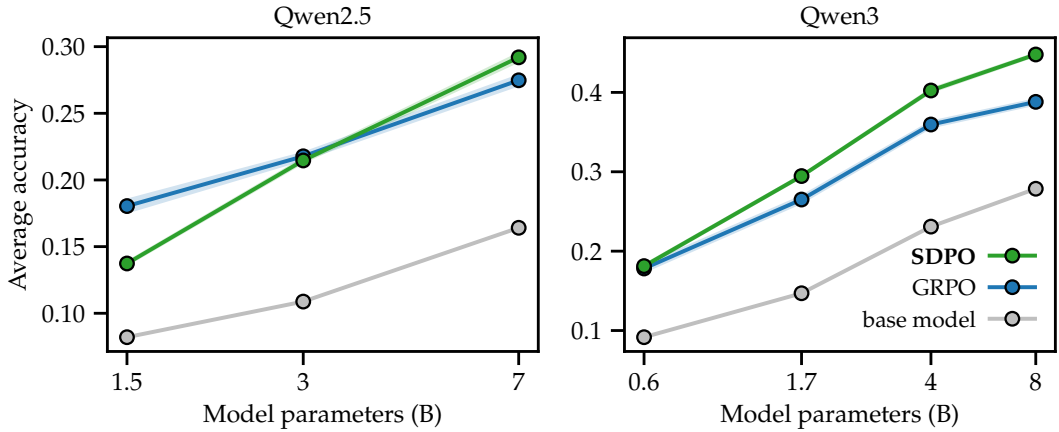

    \centering
    \vspace{-1ex}
    \incplt[\textwidth]{model_scaling_qwen25}
    \vspace{-2ex}
    \caption{Average validation accuracy by model size, $\pm$ std across 3 seeds. With Qwen2.5-Instruct~\citep{qwen2024qwen25} and Qwen3~\citep{yang2025qwen3} on LCBv6. Until step 65 for Qwen2.5 and until step 80 for Qwen3.}
    \label{fig:model_scaling_qwen25}
\end{figure}

\subsubsection{Training Stability}

\Cref{fig:run_stats} shows diverse metrics logged during training, including the loss, entropy, average gradient norm, and average response length.

\begin{figure}[h]
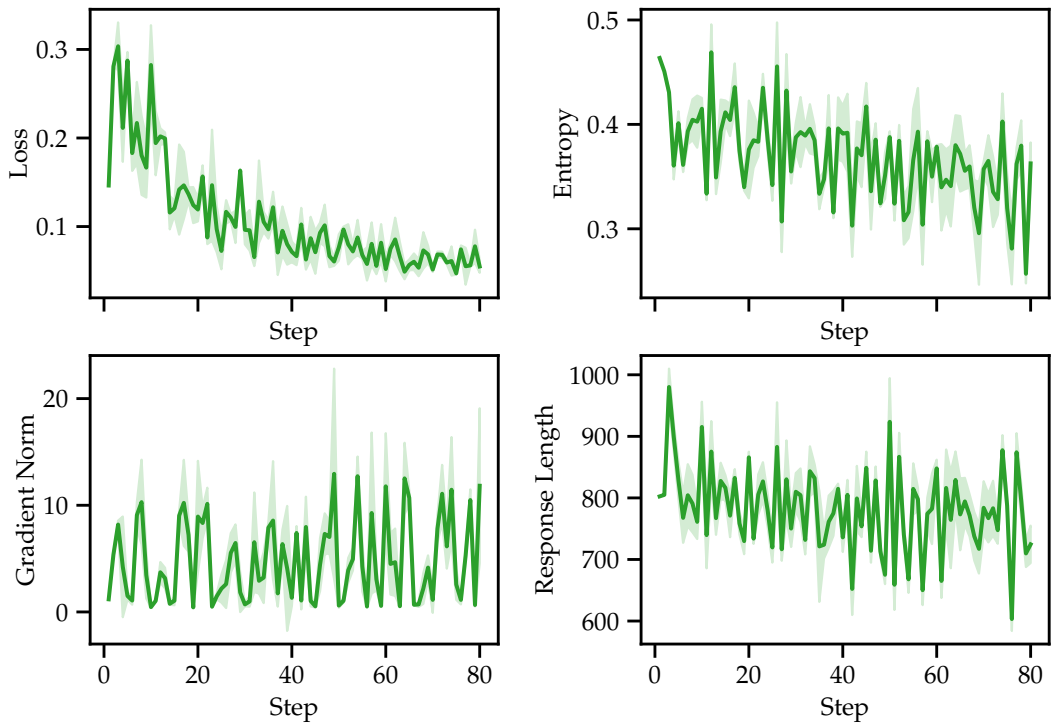

    \centering
    \incplt[\textwidth]{run_stats}
    \vspace{-2ex}
    \caption{Loss, entropy, avg. gradient norm and avg. response length during training of SDPO on LCBv6 (\cref{sec:results}}.
    \label{fig:run_stats}
\end{figure}

\subsubsection{Baselines}

\Cref{tab:baselines} compares the performance on LCBv6 of various baselines, including two variants of GRPO, GSPO, and CISPO to SDPO.

\begin{table}[h]
\centering
\begin{tabular}{lcc}
\toprule
 & Accuracy & Avg accuracy \\
\midrule
GRPO & $41.2 \pm 0.8$ & $38.2 \pm 0.0$ \\
\ + only high-entropy tokens~\citep{wang2025beyond} & $37.8 \pm 2.2$ & $35.9 \pm 0.1$ \\
GSPO~\citep{zheng2025group} & $40.1 \pm 2.3$ & $37.7 \pm 0.1$ \\
CISPO~\citep{chen2025minimax} & $41.2 \pm 1.8$ & $37.8 \pm 0.1$ \\
\textbf{SDPO} & $\mathbf{48.8} \pm 0.6$ & $\mathbf{43.8} \pm 0.0$ \\
\bottomrule
\end{tabular}
\caption{Performance on LCBv6 at/until training step 80 with std over 3 seeds. We compare to GSPO~\citep{zheng2025group} and CISPO~\citep{chen2025minimax}. With Qwen3-8B.}
\label{tab:baselines}
\end{table}

\subsection{Test-time self-distillation}\label{sec:ablations:ttt}

Complementing the results shown in \Cref{sec:ttt}, we show the discovery@$k$ curves for all hard question in \cref{fig:ttt_grid}, and report the mean number of generations until the first discovery in \cref{tab:ttt}. Further, \cref{tab:ttt_reprompt_scores} shows the per-question accuracy of the self-teacher at the initial training step of SDPO. In \cref{fig:ttt_ablations}, we ablate the choice of batch size for SDPO and the in-context reprompting strategy for multi-turn sampling.

In the selection of hard questions, we have discarded one malformed question (Q9) where the coding environment did not correctly validate the solution due to rounding inaccuracies, which led to failures even with correct logic.

\begin{table}[t]
\centering
\begin{tabular}{lrrrr}
\toprule
Question & \textbf{SDPO} & \textbf{Best-of-$k$} & \textbf{Multi-turn} & \textbf{Speedup} \\
 &  &  &  & Best-of-$k$ $\rightarrow$ SDPO \\
\midrule
1 & 104 & 98 & \textbf{59} & 0.9$\times$ \\
3* & \textbf{1987} & $\geq 2750$ & $\geq 2750$ & 1.4$\times$ \\
10* & \textbf{938} & $\geq 2750$ & 1706 & 2.9$\times$ \\
43 & 111 & \textbf{109} & 111 & 1.0$\times$ \\
46* & 1852 & 1466 & \textbf{1315} & 0.8$\times$ \\
59 & 172 & 123 & \textbf{76} & 0.7$\times$ \\
69 & 280 & \textbf{134} & \textbf{134} & 0.5$\times$ \\
74* & 1948 & \textbf{1466} & 2405 & 0.8$\times$ \\
86 & \textbf{85} & 421 & 335 & 5.0$\times$ \\
91* & \textbf{1360} & $\geq 2750$ & 2384 & 2.0$\times$ \\
92* & \textbf{1575} & $\geq 2750$ & 2203 & 1.8$\times$ \\
95* & 1948 & \textbf{1466} & 1794 & 0.8$\times$ \\
100 & \textbf{277} & 294 & 1596 & 1.1$\times$ \\
103* & 2246 & $\geq 2750$ & \textbf{2210} & 1.2$\times$ \\
111 & 85 & 95 & \textbf{39} & 1.1$\times$ \\
120 & \textbf{24} & 327 & 70 & 13.6$\times$ \\
125* & 1795 & \textbf{1466} & 2320 & 0.8$\times$ \\
127 & \textbf{28} & 368 & 61 & 13.1$\times$ \\
129 & 168 & 173 & \textbf{104} & 1.0$\times$ \\
\midrule
Hard tasks & \textbf{894} & 1145 & 1141 & 1.3$\times$ \\
Very hard tasks & \textbf{1739} & 2180 & 2121 & 1.2$\times$ \\
\bottomrule
\end{tabular}
\caption{Mean number of generations until first success per question for SDPO, best-of-$k$ sampling, and the multi-turn sampling. For the mean calculation, values are truncated at the maximum budget of \xmax generations. Very hard tasks ($\text{pass}@64 < 0.03$) are marked with an asterisk (*). Averaged over all questions, SDPO achieves successes faster than the baselines, reaching a speedup of up to $13.6\times$ on individual questions compared to best-of-$k$ sampling.}
\label{tab:ttt}
\end{table}

\begin{table}[t]
\centering
\begin{tabular}{lr}
\toprule
Question & \shortstack{Initial Teacher\\Accuracy (\%)} \\
\midrule
1 & 0.00 \\
3 & 0.00 \\
10 & 0.00 \\
43 & 6.25 \\
46 & 0.00 \\
59 & 0.00 \\
69 & 3.12 \\
74 & 0.00 \\
86 & 0.00 \\
91 & 0.00 \\
92 & 0.00 \\
95 & 0.00 \\
100 & 0.00 \\
103 & 0.00 \\
111 & 0.00 \\
120 & 0.00 \\
125 & 0.00 \\
127 & 1.23 \\
129 & 0.06 \\
\bottomrule
\end{tabular}
\caption{\textbf{Average accuracy of the retrospective teacher at the first step for each question.} These scores represent the percentage of successful solutions generated when the base model is reprompted with feedback in a single-turn interaction. For the majority of these hard and very hard tasks, the teacher accuracy is near or exactly 0\%. Despite this, the self-distilled token-level advantages are sufficiently rich for SDPO to iteratively refine its policy and solve these questions over successive updates.}
\label{tab:ttt_reprompt_scores}
\end{table}

\begin{figure}
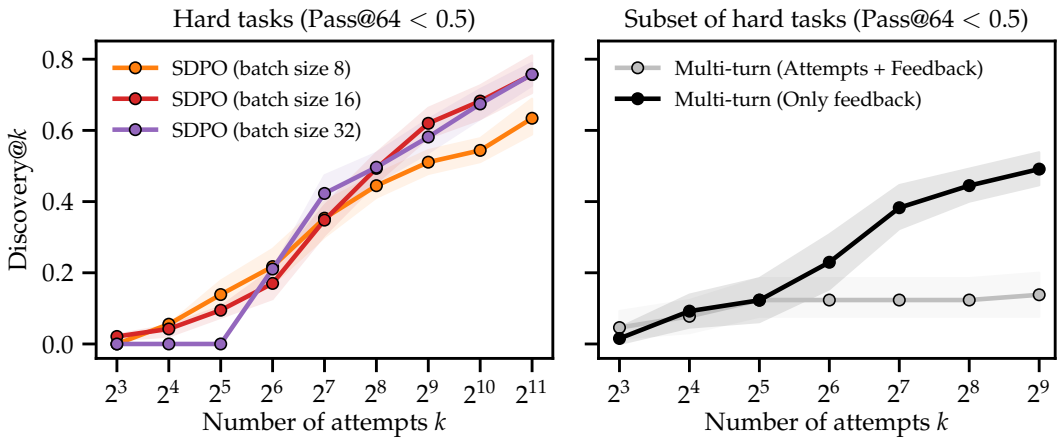

    \centering
    \vspace{-1ex}
    \incplt[\textwidth]{TTT_pass_at_k_curves_ablations}
    \vspace{-2ex}
    \caption{\textbf{Ablations self-distillation at test-time on hard tasks.} \textbf{Left:} Impact of SDPO batch size on $\text{pass}@k$ curves. While smaller batch sizes (8 and 16) can lead to slightly earlier discoveries at very low generation budgets ($k < 2^6$), larger batch sizes (16, 32) result in more stable updates that significantly improve the discovery rate as the budget scales.
    \textbf{Right:} Comparison of multi-turn reprompting templates on a subset of hard questions. The ``Only feedback'' template concatenates the feedback from previous attempts using a first-in, first-out sliding window. The ``Attempts + Feedback'' template concatenates the full turn, also using a sliding window. Including only the feedback substantially outperforms concatenating full conversations.}
    \label{fig:ttt_ablations}
\end{figure}

\begin{figure}
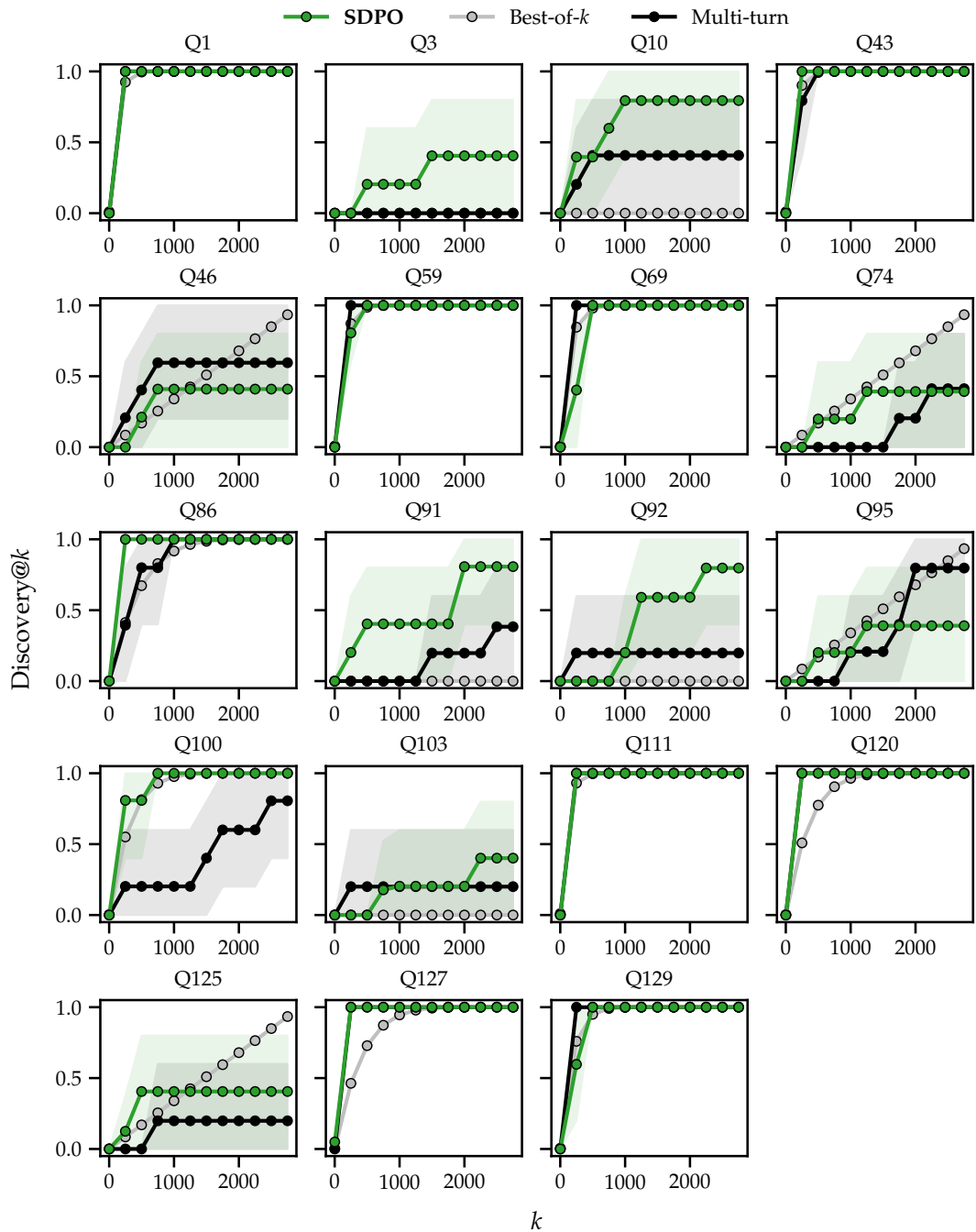

    \centering
    \vspace{-1ex}
    \incplt[1.0\textwidth]{TTT_pass_at_k_grid}
    \vspace{-2ex}
    \caption{\textbf{Individual task results self-distillation at test-time.} $\text{Discovery}@k$ for each of the 19 questions evaluated in~\cref{sec:ttt}. In most cases, SDPO finds a successful solution significantly earlier than both the base model and the multi-turn baseline. Notably, for one question (Q3) where the base model and the multi-turn baseline maintain a $\text{discovery}@k$ of zero for the entire budget up to \xmax, SDPO discovers a solution after 321 attempts. Curves represent the mean and 90\% confidence intervals across 5 random seeds per question.}
    \label{fig:ttt_grid}
\end{figure}

\clearpage
\section{Experiment Details}\label{sec:details}

\subsection{Technical setup}

All experiments were conducted on a single node equipped with four NVIDIA
GH200 GPUs, for a total of 378GB VRAM. Our environment is built on top of the NVIDIA PyTorch container \texttt{nvcr.io/nvidia/pytorch:25.02-py3}, with CUDA~12.8 and PyTorch~v2.7.0.

Our implementation is based on the \texttt{verl} library~\citep{sheng2024hybridflow}. We use PyTorch Fully Sharded Data Parallel (FSDP2) for distributed training. For rollout generation, we employ \texttt{vLLM}~\citep{kwon2023efficient}, which enables efficient batched inference on the multi-GPU node.\looseness=-1

\subsection{Hyperparameters}

We summarize hyperparameters used for SDPO in \cref{tab:hyperparameters} and those used for GRPO in \cref{tab:hyperparameters_exp1}.

\begin{table}[H]
\renewcommand{\arraystretch}{1.2}
\resizebox{\textwidth}{!}{
\centering
\setlength{\tabcolsep}{3.5pt}
\begin{tabular}{llll}
    \toprule
    \textbf{Parameters} & {\textbf{Without Feedback}} & {\textbf{With Feedback}} & \textbf{TTT} \\
    & Section \ref{sec:gen_results} & Section \ref{sec:results} & Section \ref{sec:ttt} \\
    \midrule
    \textbf{General} & & & \\
    Model & Qwen/Qwen3-8B & Qwen/Qwen3-8B &  Qwen/Qwen3-8B \\
    & allenai/Olmo3-7B-Instruct & & \\
    Thinking & False & False & False \\
    \midrule
    \textbf{Data} & & & \\
    Max. prompt length & 2048 & 2048 & 2048 \\
    Max. response length  & 8192 & 8192 & 8192 \\

    \midrule

    \textbf{Batching}  & & & \\
    Question batch size  & 32 & 32 & 1 \\
    Mini batch size & 32 & 1 & 1 \\
    Number of rollouts & 8 & 8 & 16 \\
    \midrule
    \textbf{Rollout}  & & & \\
    Inference engine & vllm & vllm & vllm \\
    Temperature  & 1.0 & 1.0 & 1.0 \\

    \midrule
    \textbf{Validation}  & & & \\
    Number of rollouts & 16 & 4 & - \\
    Temperature  & 0.6 & 0.6 & - \\
    Top-$p$  & 0.95 & 0.95 & - \\
    \midrule
    \textbf{SDPO loss}  & & & \\
    Top-$K$ distillation  & 100 & 20 & 20 \\
    Distillation divergence & Jensen–Shannon & Reverse-KL & Reverse-KL \\
    Clip advantages & -- & -- & 5.0\\
    Teacher-EMA update rate & 0.05 & 0.01 & 0.01\\
    Rollout importance sampling clip & 2 & 2 & 2\\
    \midrule

    \textbf{Training}  & & & \\
    Optimizer  & AdamW & AdamW & AdamW \\
    Learning rate  & $1 \times 10^{-5}$ (constant) & $1 \times 10^{-6}$ (constant) & $1 \times 10^{-6}$ (constant) \\
    Warmup steps & 10 & 0 & 0 \\
    Weight decay  & 0.01 & 0.01 & 0.01 \\
    Gradient Clip Norm & 1.0 & 1.0 & 1.0 \\
    \bottomrule
\end{tabular}}
\caption{Hyperparameters used for \textbf{SDPO} for each experimental setup.}
\label{tab:hyperparameters}
\end{table}

\begin{table}[h]
\renewcommand{\arraystretch}{1.2}
\centering
\setlength{\tabcolsep}{6pt}
\begin{tabular}{ll}
    \toprule
    \textbf{Parameters} & \textbf{Experiment 1} \\
    & Section \ref{sec:gen_results} \\
    \midrule
    \textbf{General} & \\
    Model & Qwen/Qwen3-8B \\
          & allenai/Olmo3-7B-Instruct \\
    Thinking & False \\
    \midrule
    \textbf{Data} & \\
    Max. prompt length & 2048 \\
    Max. response length & 8192 \\
    \midrule
    \textbf{Batching} & \\
    Question batch size & 32 \\
    Mini batch size & 8 (default) / 32 (on-policy) \\
    Number of rollouts & 8 \\
    \midrule
    \textbf{Rollout} & \\
    Inference engine & vllm \\
    Temperature & 1.0 \\
    \midrule
    \textbf{Validation} & \\
    Temperature & 0.6 \\
    Top-$p$ & 0.95 \\
    Number of rollouts & 16 \\
    \midrule
    \textbf{Loss}  &  \\
    $\epsilon$-high & 0.28\\
    Rollout importance sampling clip & 2\\
    KL coefficient ($\lambda$) & 0.0 \\
    \midrule
    \textbf{Training} & \\
    Optimizer & AdamW \\
    Learning rate & $1 \times 10^{-6}$ (default) / $1 \times 10^{-5}$ (on-policy) \\
    Warmup steps & 10 \\
    Weight decay & 0.01 \\
    Gradient Clip Norm & 1.0 \\
    \bottomrule
\end{tabular}
\caption{Hyperparameters used for \textbf{GRPO}.}
\label{tab:hyperparameters_exp1}
\end{table}

\subsubsection{Details on Hyperparameter Selection (\cref{sec:gen_results})}\label{sec:hyperparams:gen_results}
For GRPO in the experiments in \cref{sec:gen_results}, we perform a grid search over learning rates $\{10^{-5}, 10^{-6}\}$ and minibatch sizes $\{8, 32\}$. For on-policy GRPO, we search over the same learning rates while fixing the minibatch size to 32. For SDPO, we grid-search over KL variants (forward KL, Jensen--Shannon), learning rates $\{10^{-5}, 10^{-6}\}$, and minibatch sizes $\{8, 32\}$.
For each method (GRPO, on-policy GRPO, and SDPO), we select a \emph{single} hyperparameter configuration that achieves the highest validation accuracy within the first 5 hours of training, evaluated across all datasets and models used in \Cref{sec:gen_results}.
We further report results obtained by selecting the optimal hyperparameter configuration separately for each model and dataset in \Cref{tab:main_sec3}.

\subsection{User Templates}
For multiple-choice questions and tool use, the model must be prompted in a task-specific manner. We therefore provide the prompt templates used for these settings below.

\begin{lstlisting}[style=snippet,caption={\textbf{System prompt: Multiple Choice Questions}}]
Given a question and four options, please select the right answer. Respond in the following format:
<reasoning>
...
</reasoning>
<answer>
...
</answer>

For the answer, only output the letter corresponding to the correct option (A, B, C, or D), and nothing else. Do not restate the answer text. For example, if the answer is "A", just output:
<answer>
A
</answer>
\end{lstlisting}

\begin{lstlisting}[style=snippet,caption={\textbf{User prompt: Multiple Choice Questions}}]
{question}
Please reason step by step.
\end{lstlisting}

\begin{lstlisting}[style=snippet,caption={\textbf{Example user prompt: Tool use}}]
Your task is to answer the user's question using available tools.
You have access to the following tools:
Name: Axolotl
Description: Collection of axolotl pictures and facts
Documentation:
getRandomAxolotlImage: Retrieve a random axolotl image with information on the image source.
Parameters: {}
Output: Successful response.
 - Format: application/json
 - Structure: Object{url, source, description}
searchAxolotlImages: Search for axolotl images based on specific criteria such as color, gender, and size.
Parameters: {"color": "string. One of: [wild, leucistic, albino]. The color of the axolotl (e.g., 'wild', 'leucistic', 'albino', etc.).", "gender": "string. One of: [male, female]. The gender of the axolotl ('male', 'female').", "size": "string. One of: [small, medium, large]. The size of the axolotl ('small', 'medium', 'large').", "page": "integer. The page number for pagination purposes."}
Output: Successful response.
 - Format: application/json
 - Structure: Object{results: Array[Object{url, source, description}], pagination: Object{current_page, total_pages, total_results}}
getAxolotlFacts: Retrieve interesting facts about axolotls such as their habits, habitats, and physical characteristics.
Parameters: {"category": "string. One of: [habits, habitat, physical characteristics]. The category of facts to retrieve (e.g., 'habits', 'habitat', 'physical characteristics').", "limit": "integer. The maximum number of facts to return."}
Output: Successful response.
 - Format: application/json
 - Structure: Array[Object{fact, source}]

Use the following format:
Thought: you should always think about what to do
Action: the action to take, should be one of the tool names.
Action Input: the input to the action, must be in JSON format. All of the action input must be realistic and from the user.

Begin!
Question: Hey, can you show me a random picture of an axolotl?
\end{lstlisting}

\clearpage
\section{Qualitative Examples}\label{sec:qualitative_examples}

\subsection{Visualization of Advantages}

\Cref{fig:advantages_comp} compares the advantages of SDPO and GRPO in a representative example.

\begin{figure}[h]
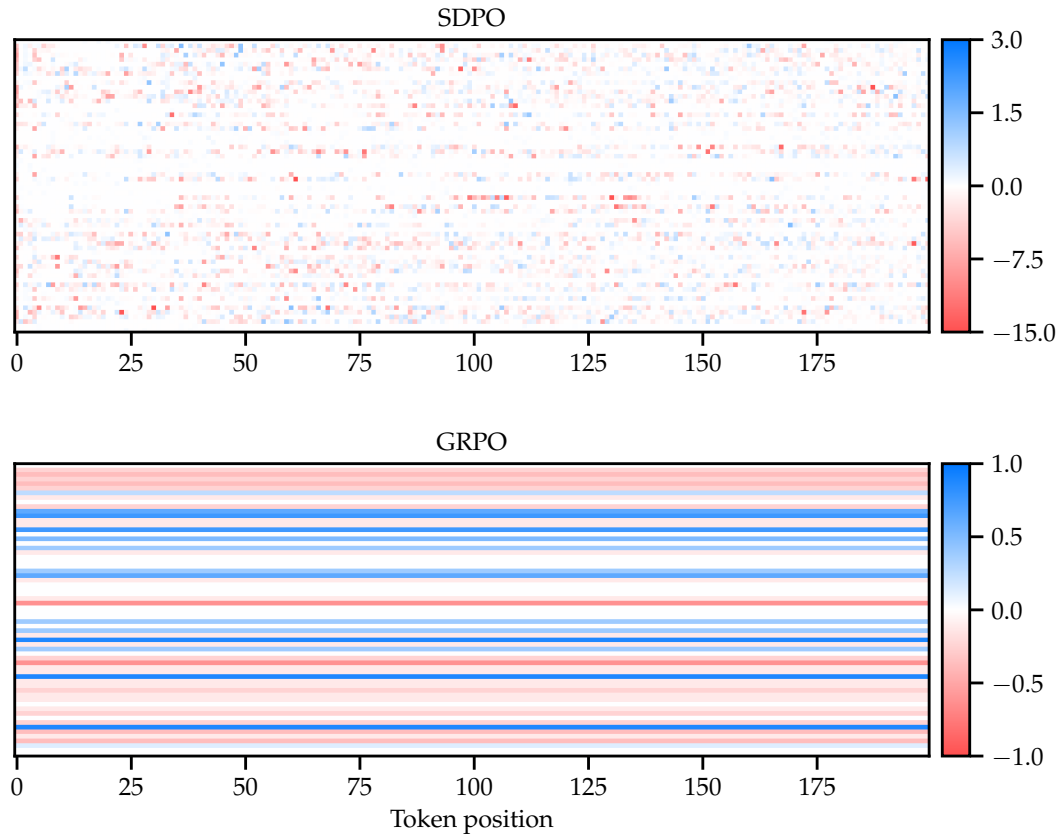

    \centering
    \incplt[\textwidth]{advantages}
    \caption{Visualization of advantages in SDPO and GRPO with Olmo3-7B-Instruct in a batch from the Chemistry task of \cref{sec:gen_results}. Each row corresponds to the beginning of a response. The color indicates the advantage value at that token position, with positive advantages shown in blue and negative advantages shown in red.}
    \label{fig:advantages_comp}
\end{figure}

\subsection{Examples}

Below, we show an example from training SDPO on LCBv6 using Qwen3-8B.

\begin{lstlisting}[style=convstyle]
user
[Prompt]

You are a coding expert. You will be given a coding problem, and you need to write a correct Python program that matches the specification and passes all tests. The time limit is 1 second. You may start by outlining your thought process. In the end, please provide the complete code in a code block enclosed with ``` ```.

You are given a binary string s of length n, where:

'1' represents an active section.
'0' represents an inactive section.

You can perform at most one trade to maximize the number of active sections in s. In a trade, you:

Convert a contiguous block of '1's that is surrounded by '0's to all '0's.
Afterward, convert a contiguous block of '0's that is surrounded by '1's to all '1's.

Return the maximum number of active sections in s after making the optimal trade.
Note: Treat s as if it is augmented with a '1' at both ends, forming t = '1' + s + '1'. The augmented '1's do not contribute to the final count.

Example 1:

Input: s = "01"
Output: 1
Explanation:
Because there is no block of '1's surrounded by '0's, no valid trade is possible. The maximum number of active sections is 1.

Example 2:

Input: s = "0100"
Output: 4
Explanation:

String "0100" -> Augmented to "101001".
Choose "0100", convert "101001" -> "100001" -> "111111".
The final string without augmentation is "1111". The maximum number of active sections is 4.


Example 3:

Input: s = "1000100"
Output: 7
Explanation:

String "1000100" -> Augmented to "110001001".
Choose "000100", convert "110001001" -> "110000001" -> "111111111".
The final string without augmentation is "1111111". The maximum number of active sections is 7.


Example 4:

Input: s = "01010"
Output: 4
Explanation:

String "01010" -> Augmented to "1010101".
Choose "010", convert "1010101" -> "1000101" -> "1111101".
The final string without augmentation is "11110". The maximum number of active sections is 4.



Constraints:

1 <= n == s.length <= 10^5
s[i] is either '0' or '1'

Your solution should have the following signature: ```python
def maxActiveSectionsAfterTrade(s: str) -> int:
```

assistant
[Response]

<think>

</think>

To solve this problem, we need to understand the structure of the string and the trade operation. Here's the plan:

### Key Insights:
1. We are given a binary string `s`, and we are allowed **at most one trade**.
2. A **trade** consists of two steps:
   - Convert a contiguous block of `'1'` that is **surrounded by `'0'` into `'0'` (i.e., flip it).
   - Then convert a contiguous block of `'0'` that is **surrounded by `'1'` into `'1'` (i.e., flip it).
3. The actual count of active sections is the number of `'1'` characters **after** the trade. The augmented `'1'` at the beginning and end do not count in the final result.

### Strategy:
- We simulate the trade by considering each possible way to flip a block of `'1'` (bounded by `'0'`) and then a block of `'0'` (bounded by `'1'`).
- For each possible valid block, simulate the trade and compute the resulting number of `'1'` characters.
- We only need to consider the original string `s`, not the augmented one (since we can simulate the trade directly on `s` by checking boundaries).

We'll use a sliding window technique to find contiguous blocks of `'1'` and `'0'` and simulate the trade.

---

### Implementation Steps:
1. Simulate the trade by checking for a block of `'1'` surrounded by `'0'` or at the edges (we simulate the augmented `1` at both sides).
2. For each such block, simulate converting it to `'0'`, then convert a `'0'` block (again, surrounded by `'1'`) to `'1'`.
3. Count the number of `'1'` in the final string and track the maximum.

---

### Python Code:
```python
def maxActiveSectionsAfterTrade(s: str) -> int:
    if not s:
        return 0

    max_ones = 0
    n = len(s)

    # We are simulating the augmented '1's on both ends, so we treat the boundaries as '1's
    # So '0' at index 0 or n-1 is considered surrounded by '1's

    # Step 1: Find all candidate positions for the first part of the trade
    # (convert a block of 1s surrounded by 0s into 0s)
    for i in range(n):
        if s[i] == '1':
            start = i
            while i < n and s[i] == '1':
                i += 1
            end = i - 1
            # Check if this block is surrounded by '0's
            left_boundary = (start == 0 or s[start - 1] == '0')
            right_boundary = (end == n - 1 or s[end + 1] == '0')
            if left_boundary and right_boundary:
                # This is a candidate block of '1's to flip
                # Now, look for a block of '0's surrounded by '1's in the remaining string
                # Replace the '1' block with '0's
                # Now find a '0' block surrounded by '1's
                # We can simulate the remaining string
                temp = list(s)
                for j in range(start, end + 1):
                    temp[j] = '0'

                # Now find a contiguous block of '0's surrounded by '1's
                max_trade_ones = 0
                for k in range(n):
                    if temp[k] == '0':
                        start0 = k
                        while k < n and temp[k] == '0':
                            k += 1
                        end0 = k - 1
                        # Check if this block is surrounded by '1's
                        left_0 = (start0 == 0 or temp[start0 - 1] == '1')
                        right_0 = (end0 == n - 1 or temp[end0 + 1] == '1')
                        if left_0 and right_0:
                            # Flip this block of '0's to '1's
                            for j in range(start0, end0 + 1):
                                temp[j] = '1'
                            # Count the number of '1's
                            count = sum(temp)
                            max_ones = max(max_ones, count)
                            break  # Only take the first valid block for this trade
                # After trying this '1' block, restore the original string
                # No need, we just used a copy
    return max_ones
```

---

### Optimization Note:
The above approach is **not optimal** for very large strings. A more efficient method would involve precomputing the positions of blocks of `'1'` and `'0'`, and checking valid trades between these blocks in linear time.

But with the constraints given (up to 10^5), we need to ensure linear or near-linear performance. The current code is correct but **not efficient** for large inputs.

Would you like me to provide a more optimized version that runs in **O(n)** time and uses only one pass through the string?

user
[Feedback]

b'Runtime Error\nTypeError: unsupported operand type(s) for +: \'int\' and \'str\'\nLine 48 in maxActiveSectionsAfterTrade (Solution.py)\n\nLast Executed Input\n"11000"'

\end{lstlisting}

\subsection{Environment Feedback}\label{sec:examples:feedback}
We show three examples of feedback in our coding environment, inspired by LeetCode.

\samepage{
\begin{lstlisting}[style=snippet,caption={Example of feedback ``Wrong Answer'' from our code environment in case of a wrong answer,
inspired by LeetCode},label=lst:feedback_example_wrong_answer]
Test Case 3: Wrong Answer

Input
[[3,6,1],[4,21,4]]

Output
6

Expected
22.87500

Test Case 6: Wrong Answer

Input
[[12,25,3],[3,14,2]]

Output
14

Expected
25.83333

\end{lstlisting}}

\samepage{
\begin{lstlisting}[style=snippet,caption={Example of feedback ``Memory Error'' from our code environment in case of a wrong answer,
inspired by LeetCode},label=lst:memory_error]
Runtime Error
MemoryError:
Line 91 in <module> (Solution.py)
Line 25 in solve (Solution.py)

Last Executed Input
10
633 9312
1314 8548
8857 1062
6410 3289
8594 1263
8549 733
3858 5973
... (3 more lines)
\end{lstlisting}
}

\samepage{
\begin{lstlisting}[style=snippet,caption={Example of feedback ``Index Error'' from our code environment in case of a wrong answer,
inspired by LeetCode},label=lst:index_error]
Runtime Error
IndexError: list index out of range
Line 28 in sortMatrix (Solution.py)

Last Executed Input
[[-1,-1,-1,-1,-1,-1,-1,-1,...
\end{lstlisting}
}
\subsection{Illustrative Example}

Figure~\ref{fig:sdpo2} shows an illustrative example of the dense credit assignment in SDPO.

\begin{figure}[h]
    \centering
    \includegraphics[width=\textwidth]{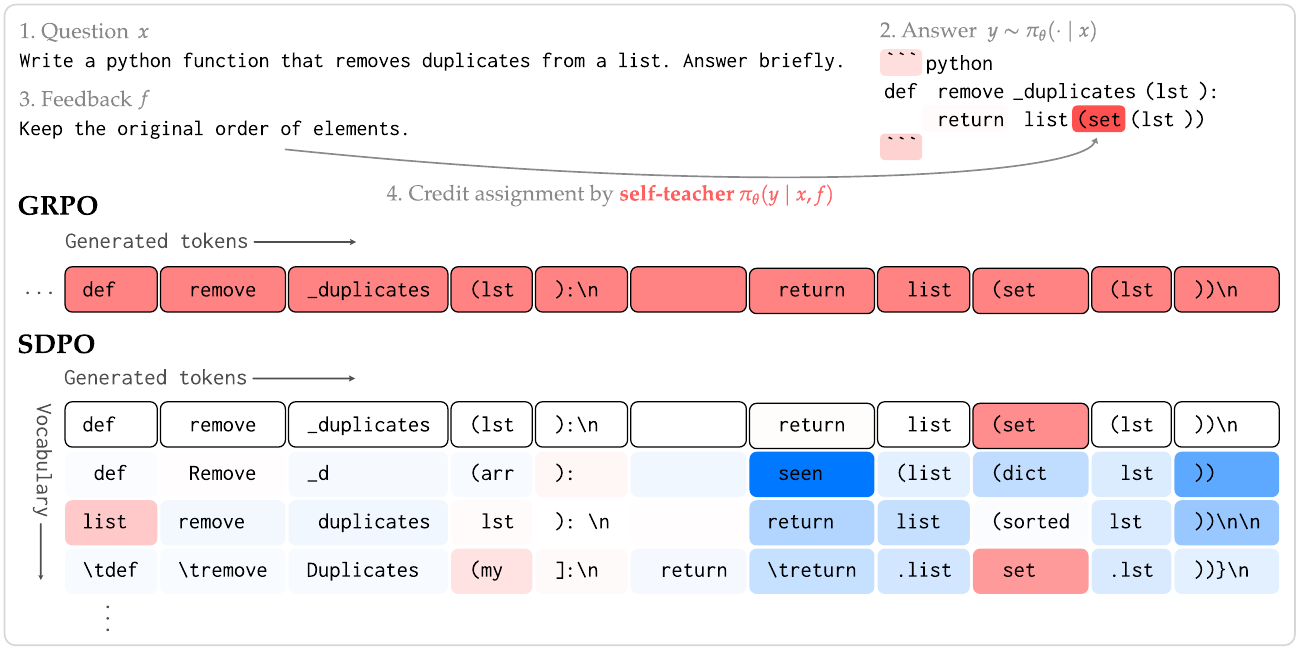}
    \vspace{-1.5em}
    \caption{\textbf{Dense credit assignment through self-teaching in SDPO.} The answer is generated by then model (Qwen3-8B) before seeing the feedback. Then, we re-evaluate the log-probs of the original attempt with the self-teacher after seeing the feedback. We show the per-token $\log(\nicefrac{\Pr{\text{self-teacher}}}{\Pr{\text{student}}})$, with red indicating negative values (\textcolor{red}{self-teacher disagrees}), blue indicating positive values (\textcolor{blue}{teacher reinforces}), and white indicating values around zero. Using binary rewards, GRPO would assign the same, negative advantage to all tokens in the sequence. In contrast, SDPO turns the feedback into dense credit assignment across the sequence. The first row shows the tokens of the generated response. The 3 other rows show the top-$k$ logits of the self-teacher that are used during self-distillation, suggesting alternative tokens. Notably, in this example, the self-teacher identifies the error through retrospection without an explicit solution. The credit assignment on the generated sequence, and the alternative top-$k$ logits correctly show that replacing \texttt{set} with \texttt{dict} maintains the order of elements. Further, in the seventh shown position, the model also identifies an alternative solution path which starts with the \texttt{seen} token, instead of directly returning the output. The activation is sparse, identifying where mistakes happen and adjusting to the students' response distribution for specifically these few tokens.}
    \label{fig:sdpo2}
\end{figure}

\end{document}